\newcommand\numberthis{\addtocounter{equation}{1}\tag{\theequation}}
\newcommand{\lp}{\left(}
\newcommand{\rp}{\right)}
\newcommand{\mbf}{\mathbf}
\newcommand{\mc}{\mathcal}
\newcommand{\mbb}{\mathbb}
\newcommand{\bds}{\boldsymbol}
\newcommand{\os}[1]{\textcolor{red}{\textbf{OS: #1}}}
\newcommand\redout{\bgroup\markoverwith{\textcolor{red}{\rule[.5ex]{2pt}{0.4pt}}}\ULon}
\renewcommand{\P}{\mbb{P}}
\newcommand{\E}{\mbb{E}}
\newcommand*{\QEDB}{\null\nobreak\hfill\ensuremath{\square}}
\newcommand{\KL}[2]{\bds{D}_{\text{KL}}( #1 \| #2 )}
\newcommand{\dtv}{d_{\mathrm{TV}}}
\newcommand{\dflip}{d_{\mathrm{flip}}}
\newcommand{\eps}{\epsilon}
\newcommand{\Vast}{\bBigg@{5}}
\newcommand{\epochgap}{\Gamma_e}
\renewcommand{\Pr}{\P}
\DeclareMathOperator*{\argmax}{arg\,max}
\newtheorem{theorem}{Theorem}
\newtheorem{proposition}{Proposition}
\newtheorem{lemma}{Lemma}
\newtheorem{remark}{Remark}
\newtheorem{definition}{Definition}
\newtheorem{corollary}{Corollary}
\DeclarePairedDelimiter\floor{\lfloor}{\rfloor}
\DeclarePairedDelimiter\ceil{\lceil}{\rceil}
\begin{document}
%
% paper title
% Titles are generally capitalized except for words such as a, an, and, as,
% at, but, by, for, in, nor, of, on, or, the, to and up, which are usually
% not capitalized unless they are the first or last word of the title.
% Linebreaks \\ can be used within to get better formatting as desired.
% Do not put math or special symbols in the title.
\title{Quantile Multi-Armed Bandits: Optimal Best-Arm Identification and a Differentially Private Scheme}
%
%
% author names and IEEE memberships
% note positions of commas and nonbreaking spaces ( ~ ) LaTeX will not break
% a structure at a ~ so this keeps an author's name from being broken across
% two lines.
% use \thanks{} to gain access to the first footnote area
% a separate \thanks must be used for each paragraph as LaTeX2e's \thanks
% was not built to handle multiple paragraphs
%

\author{Konstantinos E. Nikolakakis$^\dagger$, Dionysios S. Kalogerias$^\dagger$, Or Sheffet$^{**}$, Anand D.~Sarwate$^*$% <-this % stops a space
% <-this % stops a space
\thanks{$^\dagger$K.E.~Nikolakakis and D.S. Kalogerias are with the Department of Electrical Engineering, Yale University, New Haven, CT, 06511 USA, (e-mail: \{konstantinos.nikolakakis, dionysis.kalogerias\}@yale.edu).}% <-this % stops a space
\thanks{$^*$A.D. Sarwate is %was with the Department of Electrical and Systems Engineering, University of Pennsylvania, Philadelphia, PA, 19104, USA when some of this work was done. He 
with the Department
of Electrical and Computer Engineering, Rutgers, The State University of New Jersey, Piscataway, NJ, 08854 USA (e-mail: anand.sarwate@rutgers.edu).}% <-this % stops a space
\thanks{$^{**}$Or Sheffet is with the Faculty of Engineering, Bar-Ilan University, Ramat Gan, 5290002, Israel (e-mail: 
or.sheffet@biu.ac.il).}
\thanks{}% <-this % stops a space
\thanks{}}

% note the % following the last \IEEEmembership and also \thanks - 
% these prevent an unwanted space from occurring between the last author name
% and the end of the author line. i.e., if you had this:
% 
% \author{....lastname \thanks{...} \thanks{...} }
%                     ^------------^------------^----Do not want these spaces!
%
% a space would be appended to the last name and could cause every name on that
% line to be shifted left slightly. This is one of those "LaTeX things". For
% instance, "\textbf{A} \textbf{B}" will typeset as "A B" not "AB". To get
% "AB" then you have to do: "\textbf{A}\textbf{B}"
% \thanks is no different in this regard, so shield the last } of each \thanks
% that ends a line with a % and do not let a space in before the next \thanks.
% Spaces after \IEEEmembership other than the last one are OK (and needed) as
% you are supposed to have spaces between the names. For what it is worth,
% this is a minor point as most people would not even notice if the said evil
% space somehow managed to creep in.

% The paper headers
\markboth{}%
{}
% The only time the second header will appear is for the odd numbered pages
% after the title page when using the twoside option.
% 
% *** Note that you probably will NOT want to include the author's ***
% *** name in the headers of peer review papers.                   ***
% You can use \ifCLASSOPTIONpeerreview for conditional compilation here if
% you desire.

% If you want to put a publisher's ID mark on the page you can do it like
% this:
%\IEEEpubid{0000--0000/00\$00.00~\copyright~2014 IEEE}
% Remember, if you use this you must call \IEEEpubidadjcol in the second
% column for its text to clear the IEEEpubid mark.

% use for special paper notices
%\IEEEspecialpapernotice{(Invited Paper)}

% make the title area
\maketitle

% As a general rule, do not put math, special symbols or citations
% in the abstract or keywords.
\begin{abstract}
We study the best-arm identification problem in multi-armed bandits with stochastic rewards when the goal is to identify the arm with the highest quantile at a fixed, prescribed level. First, we propose a successive elimination algorithm for strictly optimal best-arm identification, show that it is $\delta$-PAC and characterize its sample complexity. Further, we provide a lower bound on the expected number of pulls, showing that the proposed algorithm is essentially optimal up to logarithmic factors. Both upper and lower complexity bounds depend on a special definition of the associated suboptimality gap, designed in particular for the quantile bandit problem~--- as we show, when the gap approaches zero, best-arm identification is impossible.
%This definition of gap supports continuous, discrete, and mixture distributions, without explicit prior assumptions on the distributions. 
Second, motivated by applications where the rewards are private information, we provide a differentially private successive elimination algorithm whose sample complexity is finite even for distributions with infinite support and characterize its sample complexity. Our algorithms do not require prior knowledge of either the suboptimality gap or other statistical information related to the bandit problem at hand.\end{abstract}

% Note that keywords are not normally used for peerreview papers.
\begin{IEEEkeywords}
Quantile Bandits, Best-Arm Identification, Value at Risk, Differential Privacy, Sequential Estimation
\end{IEEEkeywords}

% For peer review papers, you can put extra information on the cover
% page as needed:
% \ifCLASSOPTIONpeerreview
% \begin{center} \bfseries EDICS Category: 3-BBND \end{center}
% \fi
%
% For peerreview papers, this IEEEtran command inserts a page break and
% creates the second title. It will be ignored for other modes.
\IEEEpeerreviewmaketitle

\section{Introduction}
% The very first letter is a 2 line initial drop letter followed
% by the rest of the first word in caps.
% 
% form to use if the first word consists of a single letter:
% \IEEEPARstart{A}{demo} file is ....
% 
% form to use if you need the single drop letter followed by
% normal text (unknown if ever used by IEEE):
% \IEEEPARstart{A}{}demo file is ....
% 
% Some journals put the first two words in caps:
% \IEEEPARstart{T}{his demo} file is ....
% 
% Here we have the typical use of a "T" for an initial drop letter
% and "HIS" in caps to complete the first word.
\IEEEPARstart{M}{ulti}-armed bandits are an important class of online learning problems with a rich history (see the book by \cite{LSbandits_book:20} for a detailed treatment). In a \textit{stochastic} $K$-armed bandit problem, a learner is presented with a set of $K$ different actions (or \textit{arms}) $\{1, 2, \ldots, K\}$ and can sequentially take actions (\textit{pull arms}) to receive random rewards. The reward of arm $i$ at time $t$ is $X^i_t$. % (generated i.i.d.~over time). %, and we denote its distribution as $\nu_i$. %The rewards $X^i_n$ are independent and identically distributed (i.i.d.) across time and independent across arms.  
%The standard objective focuses on the mean reward $\mu_i = \E[X^i_t]$ of each arm. 
The learner may have one of a number of common objectives, such as to find the arm with the maximum $\mu_i=\E[X^i]$ to minimize cumulative regret~\cite{madani2004budgeted,bubeck2009pure}. %as measured by the difference between the realized rewards and the rewards from the arm with the largest mean $\mu_i$ or to minimize alternative notions of regret~\cite{madani2004budgeted}.% The sample complexity bounds of the problem typically depend on the \emph{gap} between the mean of the best arm and the means of the other arms.

In this paper, we study a different form of bandit problems in which the figure of merit is the \textit{left-side $q$-quantile} of the involved reward distributions, defined, for arm $i$, as $F_{i}^{-1}(q) = \inf \{ x : F_i(x) \ge q \}$, where $F_i(x) = \P[X^i \le x]$ is the corresponding cumulative distribution function (CDF) \cite{yu2013sample,Szorenyi15mab}. In particular, we study the problem of \emph{best-arm identification}, i.e., that of identifying the arm with the highest or lowest $q$-quantile, with as few samples as possible. 
%The identification of arm with the lowest $q$-quantile can be solved by a minor variant of the algorithm. 
%For each arm $i$, we consider the cumulative distribution function (CDF) $F_i(x) = \P[X^i \le x]$ and ask for the argument $i$ that maximizes $F_{i}^{-1}(q) = \inf \{ x : F^i(x) \ge q \}$. %We consider this problem in the stochastic setting where the arm rewards $ X^{1}_t,X^{2}_t,\ldots,X^{K}_t $ are generated i.i.d.~over time from the product distribution $\nu_1 \times \nu_2 \times \cdots \times \nu_K$.% The problem of identifying the lowest $q$-quantile is essentially equivalent and can be solved by a minor variant of the algorithm.

%\dk{Also, in case most of the examples that we have relate to minimum quantile problems, I think it would be better to consider quantile minimization throughout the paper. Note that quantile minimization is *not* equivalent to quantile maximization where $X$ is replaced by $-X$!}
%\ads{I don't have citations for these examples but also I don't think we have to sell the problem too hard if we appeal to the vast literature on VaR + some examples from manufacturing as discussed. The rest can be whimsical.} 
The quantile bandit problem arises naturally in the context of \emph{risk-aware} optimization and learning, which has expanded considerably during the last decade~\cite{ruszczynski2006optimization,Sani12risk,shapiro2012minimax,ShapiroLectures_2ND,Tamar2017,Jiang2017,W.Huang2017,Kalogerias2018b,Vitt2018,Kalogerias2019,kagrecha2019distribution,pmlr-v89-cardoso19a,Kim2019,Zhou2020a}. There are many application scenarios which fit this quantile-based risk-aware setting:
    \begin{enumerate}[leftmargin=*]
        \item Arms are different feasible asset portfolio allocations~\cite{gaivoronski2005value} and the goal is to find the portfolio with the minimum potential monetary loss, within a target \textit{investment risk} $q$. 
        If such a (random) loss is denoted by $Z$, then this goal may be achieved by choosing $F_Z^{-1}(1-q)$ as the corresponding objective (to be minimized). In this context, the $(1-q)$-quantile is well-known as the \textit{Value-at-Risk at level $q$}, denoted as $\mathrm{V@R}_q(Z)\equiv F_Z^{-1}(1-q)$.
        \item Arms are different servers which can be assigned jobs and the rewards are delays. The goal is to identify the server with the highest 95th percentile delay because ``waiting for the slowest 5\% of the requests to complete is responsible for half of the total 99\%-percentile latency'' \cite{DeanB:2013tail}.
        \item Arms are different strains of an illness (e.g. different lung cancer genotypes) and the rewards are effectiveness of a proposed treatment on the strain. We wish to find the strain for which the treatment guarantees the highest effectiveness in at least $90\%$ of patients.%is the most maximally effective in at least 70\% of patients.\dk{This needs value-at-risk, I think. Find the treatment which is "least minimally ineffective on at least $90\%$ of the patients!" }
        %\item The arms are routes, costs (negative rewards) are delays, and we wish to identify the route which has the lowest delay with probability 90\%.
    \end{enumerate}

From a technical standpoint, the quantile bandit problem differs from the mean (or risk-neutral) bandit problem in a number of important ways. First, for the mean, the \emph{suboptimality gap} 
%\dk{what is the "gap"???at this point, no one knows...} 
between the optimal $i^*$ and a suboptimal arm $i$ is simply $\mu_{i^*} - \mu_i$, whereas the absolute difference between quantiles is less useful. %  a small difference in the quantile of two arms does not imply a large sample complexity. 
In fact, we show that the difference between the quantile of the optimal and a suboptimal arm can be arbitrarily small, while the hardness of best-quantile-arm identification remains insensitive. As we discuss below, the latter is captured by our gap definition for quantile bandits (Section~\ref{Gap}), which shows that the difficulty of the problem depends on the levels of the CDF in the neighborhood of the quantile rather than the actual values of quantile. Further, in contrast with the risk-neutral problem, the complexity of the quantile bandit problem is not affected by the tails or the range of the distributions' domain (support). Our analysis elucidates the above properties and provides guarantees on the proposed $\delta$-PAC (probably approximately correct) algorithm for highest quantile identification. In fact, we show that our new gap definition yields a fundamental quantity for the best-quantile identification problem, and the algorithm that we propose returns \textit{strictly} optimal solutions.

The advent of wide-scale data analytics has made privacy issues a growing concern. Differential privacy (DP)~\cite{DworkMNS06} has become the de-facto gold standard for privacy preserving data-analysis. For quantile bandit problems involving the data of individuals it is natural to model the reward information as private or sensitive. For example, the outcome of a treatment on an individual may be private information, but we would still like to find the most effective treatment. We therefore like to both identify the arm with the best quantile and protect the privacy of individuals. The goal is to minimize the ``cost of privacy'': how many \emph{more} samples does the private algorithm need over the non-private algorithm? To understand this, it is necessary to have a non-private baseline to measure against; our characterization of the sample complexity of the non-private problem establishes such a baseline.  We thus dedicate the first part of this work to showing that our notion of gap fully characterizes solvable instances. %\ads{see if above gets at the point well enough?}

%while incurring the the smallest increase in the number of arm pulls beyond that of the non-private algorithm. Our gap establishes what the proper non-private baseline should be for such a comparison.

%\os{, while incurring the smallest increase in the number of arm pulls. That is to say, our goal is to incur as small as possible difference between the total number of arm-pulls made by the private algorithm vs.~the non-private (and thus unrestricted) algorithm. However, to some extent, the non-private baseline remains unclear~--- while each of the above-mentioned works do characterize their successive elimination algorithm's halting time in terms of (their definition of) the gap, it is still unclear whether this gap fully characterizes the problem. Namely (and in contrast to the mean suboptimality gap) it is unclear whether there exist instances with gap $=0$ yet whose max-quantile arm is identifiable.}  %\ads{I don't quite understand the statement -- we say if gap = 0, dflip = 0 and the max quantile arm is not identifiable.}\os{a-priori, before our work, we couldn't have made such a statement. That's my point here.} \ads{ah, ok, will try to write a shorter version (I hope).}

%OS - I see now point in this next line, we've stated above our plan is to use DP: 
%It is therefore a natural fit for differential privacy, which is most useful when trying to learn population properties without revealing information about individual samples.

More specifically, in this paper we make the following contributions:
\vspace{-0.1cm}
\setitemize{leftmargin=10pt}
%In particular, the contributions of the paper are summarized as:
\begin{itemize}[leftmargin=*]
  \setlength\itemsep{0.1em}
  \item We provide a novel concentration bound for quantile estimates from $n$ i.i.d. samples, that applies for both continuous and discrete distributions. This result holds for any choice of $n$, thus it is useful for any generic sequential estimation procedure. Additionally, it efficiently captures the effect of quantile level values (close to zero $q\rightarrow 0$ and close to one $q\rightarrow 1$) and provides meaningful bounds for any $q\in (0,1)$, in contrast with alternative results in prior works that are uniform over $q$. Our concentration bound allows to solve the problem exactly and indicates that $\eps$-approximation bounds that appear in prior works are unnecessary.
  \item We provide a \emph{definition of the gap $\Delta_i$ at level $q$} between arm $i$ and the optimal arm that  generalizes those proposed in prior work \cite{Szorenyi15mab, david2016pure,HowardR19}.\footnote{A simultaneous (April 2021) preprint~\cite{HowardR19} considers a similar gap as ours using the lower quantile function (Ramdas, personal communication, 2020).} Our gap precisely captures the difficulty of the problem in the sense that when $\Delta_i = 0$ for all suboptimal arms $i$, no algorithm can hope to identify the arm with the higher $q$-quantile (Theorem \ref{lem:positive_gap}). The latter shows that our definition of gap provides a fundamental quantity for the best-quantile identification problem.
    \item We introduce a new \emph{pure-exploration successive elimination algorithm for quantile bandits} (Algorithm \ref{alg:succ_el_quant}), show that it is $\delta$-PAC (Theorem \ref{thm:identification}) and provide nearly matching upper (Theorem \ref{thm:upper_bound}) and lower (Theorem \ref{thm:lower_bound}) bounds on the sample complexity that depend on our improved gap definition. In fact, the upper bound (Theorem \ref{thm:upper_bound}) on the termination time of the algorithm is a high probability result, while the converse is a lower bound on the expectation of the termination time (Theorem \ref{thm:lower_bound}). %These results complement prior work \cite{Szorenyi15mab, david2016pure,HowardR19} to present a complete picture of the problem.\kn{Their results are different, the algorithms are different also their definition of the gap does not work in certain cases.} \ads{too vague -- the point we want to make is that we provide a complete picture, not that their work is so bad -- it's a delicate balance. Remember that these people might be a reviewer. If we are too negative about their work they will just reject. Is below better?}
    These results complement prior work
    on $\varepsilon$-optimal quantile bandits\footnote{An arm $i\in\mc{A}$ is $\varepsilon$-optimal at a level $q$ if and only if $F_i^{-1}(q+\varepsilon)\geq F_{i^*}^{-1}(q)$. For the definition and examples of the $\varepsilon$-optimality at a level $q$ see also~\cite[Definition 2]{Szorenyi15mab}.}~\cite{Szorenyi15mab, david2016pure,HowardR19} with $\delta$-PAC upper bounds and expectation lower bounds {(as we explain later, the difference in the type of upper and lower bounds is probably intrinsic to the problem setting under consideration)}. Our approach provides optimal solutions, and exact best arm identification for both continuous and discontinuous distributions in contrast with the $\varepsilon$-approximations of prior works, for which the algorithm does not terminate at levels of discontinuities when $\varepsilon\rightarrow 0$. Additionally, the approach of this work does not require any prior knowledge of either the suboptimality gap or other statistical information related to the bandit problem at hand. On the other hand, for $\varepsilon$-approximations to achieve the best approximation the value of $\varepsilon$ has to be chosen smaller than the value of gap, which is not known beforehand.  %\os{don't think remainder of sentence is needed} by developing a different algorithm whose performance depends on our improved definition of gap.
    
    %, our algorithm outputs the best arm and not an $\epsilon$-approximation. 
    %\os{I would replace the last sentence with ``These results are in line with those of~\cite{Szorenyi15mab, david2016pure,HowardR19} and are brought here mainly for completion.} \ads{how is the above now?} \os{Agreed}
    %
    \item Using our modified confidence intervals, we propose the first \emph{differentially private best-arm identification algorithm for quantile bandits} (Algorithm \ref{alg:DP_SE_Quantiles}), prove that it is private (Theorem \ref{thm:Alg_is_DP}), and analyze the trade-off between privacy budget and sample complexity (Theorems \ref{thm:utility_DP_quantile_SE} and \ref{thm:num_pulls_DP_alg}). Interestingly, the sample complexity bound for our private algorithm has no dependency on the support size of the distribution, which is \emph{necessary} in the case where one wishes to privately estimate the $q$-quantile~\cite{BeimelNS13, FeldmanX14, BunNSV15} rather than identify which arm has highest quantile. This difference between estimation and identification may be of interest for future private algorithms.
%    Our results show that the range of the distributions' support does not affect the sample complexity. Further, we comment on a slight (but equivalent) variation of the proposed algorithm that solves the problem of identifying the arm with the minimum quantile.
    %\item We present a $(\beta,\delta)-$stopping rule (Algorithm \cn); the \textit{Non-monotonic Asymptotic Sampling} algorithm for sequential estimation of the value at risk ($\var$) at some level $\alpha\in(0,1)$. Theorem \cn shows that the accuracy is guaranteed with probability at least $1-\delta$.
%    \item In some of the applications mentioned above the samples come from individuals, and when the data is sensitive we want to provide a best-arm identification algorithm with privacy guarantees. We propose such a \emph{differentially private best-arm identification algorithm} (Algorithm \ref{alg:DP_SE_Quantiles}) for quantile bandits, prove that it is private (Theorem \ref{thm:Alg_is_DP}), and analyze the trade-off between privacy budget and sample complexity (Theorems \ref{thm:utility_DP_quantile_SE} and \ref{thm:num_pulls_DP_alg}). \os{Interestingly, our private algorithm's sample complexity bound has no dependency on the support size of the distribution, which is necessary in the case where one wished to privately estimate the $q$-quantile~\cite{BunNSV15} rather than identify which arm has highest quantile.}
\end{itemize}

%\subsection{Prior Work}
%\ads{This needs a litttle reorganizing. How about the following paragraph-by-paragraph ordering?}
%\begin{enumerate}
%    \item Stochastic vs. nonstochastic (we consider stochastic)
%    \item Best arm identification vs. explore-exploit (we do the former) -- also lower bounds
%    \item Mean vs. more general measures: 
%    \item VaR vs. CVaR: why is VaR harder and what tools do we use?
%    \item Prior work on quantiles (and median specifically): how does our work differ?
%\end{enumerate}
%We refer the reader to a recent book by~\cite{LSbandits_book:20} for a detailed presentation of the early history of the multi-armed bandit (MAB) problem and several key results.
%The first formulation of the general problem was established by~\cite{thompson1933likelihood} as an application on medical trials. Fundamental results on problem of multi-armed bandit appear in the literature the last decades. Some of first manuscripts related to the problem include prior work by~\cite{bechhofer1968sequential,chernoff1972sequential} and ~\cite{lai1985asymptotically}. Although the problem is old, it finds important applications in machine learning, that provide new algorithms or improvements on existing algorithms. 

\subsection{Prior Work}
Most works on bandit problems under stochastic rewards considers the problem of best-arm identification for the mean. This setting received renewed attention after the work of Even-Dar, Mannor, and Mansour~\cite{even2002pac} on the MAB problem in the PAC learning setting. Later work follows by considering extensions/variations of this problem~\cite{kalyanakrishnan2012pac,gabillon2012best,karnin2013almost,jamieson2013finding,jamieson2014lil}. %focused on subset selection problems~\cite{kalyanakrishnan2012pac}, learning with a fixed budget and fixed confidence~\cite{gabillon2012best} and finding (nearly) optimal algorithms for the pure exploration setting~\cite{karnin2013almost,jamieson2013finding,jamieson2014lil}.
Lower bounds on the sample complexity in terms of the mean suboptimiality gap were proved by Mannor and Tsitsiklis~\cite{mannor2004sample}, and Anthony and Bartlett~\cite{anthony2009neural}. Alternative lower bounds also include results based on the KL-divergence of the arms' distributions~\cite{burnetas1996optimal,chen2015optimal,kaufmann2016complexity,garivier2016optimal}. Cappé et al.~\cite{cappe2013kullback} present the KL-UCB algorithm that achieves (asymptotically) optimal sample complexity rates by matching known lower bounds. In parallel, prior works encompass non-stochastic approaches~\cite{jamieson2016non,li2016novel}, as well.

%Works on the MAB for the mean often consider the average regret of the algorithm, which is the accumulative difference between the mean of the best arm and the expected reward that we obtain after an action (pull). 
Bandit models with non-stationary~\cite{allesiardo2017selection,allesiardo2017non}, or heavy-tailed~\cite{bubeck2012regret,bubeck2013bandits} distributions are most related to this work, since the quantile problem is often of interest in these settings. Kagrecha et al.~\cite{kagrecha2019distribution} consider the unbounded reward best-arm identification problem %is the unbounded rewards setting~\cite{kagrecha2019distribution}, 
while variants of regret-based approaches include minimization of generalized loss functions~\cite{li2018bandit,berthet2017fast,boda2019correlated,maillard2013robust}. % unknown reward delays~\cite{li2018bandit}, general loss functions~\cite{berthet2017fast} as an extended regret, and mean-squared criteria~\cite{boda2019correlated} for correlated bandits.~\cite{maillard2013robust} introduces a risk-averse analog of the cumulative regret, the cumulant generative function that acts as an alternative risk measure.
%while~\cite{jamieson2016non,li2016novel} consider the non-stochastic best arm identification problem. 
More recent works also consider risk measures, for instance conditional value-at-risk (CVaR)~\cite{yu2013sample,pmlr-v89-cardoso19a}, mean-variance~\cite{even2006risk,Sani12risk,vakili2016risk} %, Sharpe-ratio, or combination of them in a
 or unified approaches~\cite{cassel2018general}. %as an alternative objective of the mean.~\cite{sani2012risk,vakili2016risk} consider mean-variance measures, while best arm identification algorithms for identifying the arm with the greatest CVaR has been considered by~\cite{yu2013sample}.
%~\cite{cassel2018general} provide a unified analysis and they characterize the corresponding confidence bounds, their work applies on a variety of risk measures. 
These are complemented by concentration results on risk measure estimators~\cite{wang2010deviation,kolla2019concentration,bhat2019concentration}.% Additionally, extensions of the risk-averse reward settings include adversarial risk online learning by~\cite{even2006risk} and the infinite arms setting of~\cite{cardoso2018risk}.

Our results are closely related to prior work on quantile bandit problem for best-arm identification~\cite{yu2013sample,Szorenyi15mab,david2016pure,pmlr-v101-torossian19a,HowardR19}. Altschuler et al.~\cite{altschuler2019best} specifically study \emph{median} identification for contaminated distributions in the robust statistics sense. 
%\edit{under distributional assumptions on the arms}.
Of these, the  most highly related works are the beautiful work by Sz\"{o}r\'{e}nyi et al.~\cite{Szorenyi15mab}, the refinement by David and Shimkin~\cite{david2016pure}, and the preprint of Howard and Ramdas~\cite{HowardR19}. Our algorithm uses successive elimination (similarly to~\cite{Szorenyi15mab}), while Howard and Ramdas~\cite{HowardR19} consider the UCB approach. David and Shimkin~\cite{david2016pure} and Howard and Ramdas~\cite{HowardR19} tighten the upper bounds to a double-logarithmic factor. The epoch-based algorithms provide asymptotically tighter sample complexity bounds at the expense of a much larger constant. We therefore present both versions of the successive elimination algorithm for best quantile identification; the standard and the epoch-based approach. Our results for the quantile bandits problem complement the prior works by Sz\"{o}r\'{e}nyi et al.~\cite{Szorenyi15mab} and  David and Shimkin~\cite{david2016pure} by solving the problem of exact arm identification, rather than providing  an approximation. A discussion about exact and approximate approaches follows.

\subsubsection{Comparison with $\varepsilon$-approximate approaches}
The aforementioned works~\cite{Szorenyi15mab,david2016pure} study $\varepsilon$-approximate best-arm identification: the algorithm returns an arm which is within $\varepsilon$ of optimal, for some $\varepsilon\geq 0$. First we discuss the major differences on the approach, algorithm and theoretical guarantees in this work and those in prior works. Then we continue by stating advantages and disadvantages between our approach and $\varepsilon-$approximations. To begin with, the neat algorithm and analysis by Sz\"{o}r\'{e}nyi et al.~\cite{Szorenyi15mab} solves the problem of quantile bandits in a variety of cases. These cases include continuous and discrete distributions for $\varepsilon>0$ and continuous distributions for $\varepsilon=0$. That is, the algorithm by Sz\"{o}r\'{e}nyi et al.~\cite{Szorenyi15mab} does not terminate in case of discontinuous distribution for $\varepsilon=0$ when we are interested at the level of discontinuity. This fact can be verified theoretically and experimentally. Theorem 1 by  Sz\"{o}r\'{e}nyi et al.~\cite{Szorenyi15mab} (for $\varepsilon=0$) involves a gap whose definition is slightly different than our gap. In many cases of discrete distributions the gap by  Sz\"{o}r\'{e}nyi et al.~\cite{Szorenyi15mab} is zero, while the gap in this work is positive, showing that the problem instance is not hard. Thus the algorithm of this paper terminates at levels of quantiles with discontinuity as long as the problem is feasible.

To understand this further, we explain a key difference between the algorithm by Sz\"{o}r\'{e}nyi et al.~\cite{Szorenyi15mab} and the algorithm of this work. As we discuss below, this difference is also crucial for the performance of the two algorithms. The decision rule in Algorithm 1 (lines 9 and 11) by Sz\"{o}r\'{e}nyi et al.~\cite{Szorenyi15mab} involves different statistics than those that we consider. Specifically, to characterize the setting of $\varepsilon=0$ for discontinuous distributions both $\hat{Q}^{}_{t,X^i}\triangleq \inf \{\xi: \hat{\P}_t [X^i\leq \xi ]\geq q  \}$ and $\hat{Q}^{-}_{t,X^i}\triangleq \inf \{\xi: \hat{\P}_t [X^i\leq \xi ]> q  \}$ are required (see Algorithm \ref{alg:succ_el_quant} line 13), while their work~\cite[Algorithm 1]{Szorenyi15mab} involves only the quantity $\hat{Q}^{}_{t,X^i}$. That difference together with the "less or equal than" (current work) instead of a strict inequality (prior work~\cite{Szorenyi15mab}) in the elimination step, are sufficient to make the algorithm terminate for cases of $\varepsilon=0$ and discontinuous distributions. As a consequence of the different statistics involved, the proof of the concentration bound is also different. Our approach is based on Hoeffding's inequality, while the proof by Sz\"{o}r\'{e}nyi et al.~\cite{Szorenyi15mab} considers Massart's DKW inequality. Notice that the latter of the two approaches does not directly provide a concentration bound for the statistic $\hat{Q}^{-}_{t,X^i}$, however the Chernoff-Hoeffding bound solves the problem in the expense of a larger constant. For an alternative approach of the concentration bound proof that involves smaller constants and uniformity over all quantiles see also the preprint by Howard and Ramdas~\cite{HowardR19}.

Sz\"{o}r\'{e}nyi et al.~\cite{Szorenyi15mab} use a gap which depends on the parameter $\varepsilon$, while other prior works~\cite{david2016pure,HowardR19} provide an alternative gap that does not involve the quantity  $\varepsilon$, but study $\varepsilon$-approximate algorithms. By contrast, our algorithm returns the optimal arm, and we show that when our gap is $0$ then a suboptimal distribution (with small $q$-quantile) is \emph{actually indistinguishable} from a distribution with a larger $q$-quantile (see Section~\ref{Gap}, Theorem \ref{lem:positive_gap}). The main advantage of approaches that consider $\varepsilon-$approximations~\cite{Szorenyi15mab,david2016pure,HowardR19} is that the algorithm terminates even when the gap is zero (by breaking ties arbitrarily), at the expense of approximating the quantile estimate ($\varepsilon>0$). Still, in applications we may not always be able to choose $\varepsilon$ to achieve best approximation unless there is side information for the distributions of the data. For instance, if the value of the gap is not known beforehand, we may accidentally choose $\varepsilon$ to be much greater than the gap and the output of the algorithm can possibly crudely approximate the solution of the problem by returning a rather suboptimal arm. In contrast, the algorithms for exact best arm identification of this work do not require prior knowledge of any side information, making them of interest for these applications. Additionally, our results substantially differ from those by David and Shimkin~\cite{david2016pure}. Specifically, their Theorem 1 considers the case of continuous distributions, while we provide unified analysis for both discrete and continuous distributions. Further results~\cite[Theorem 2 and Theorem 3]{david2016pure} show that the algorithm by David and Shimkin is not guaranteed to terminate (unbounded expected number of samples) when $\varepsilon\rightarrow 0$. In contrast, the current work solves the problem of exact estimation.

Lastly, in our approach the upper ($\delta$-PAC) bound does not appear to provide an upper bound for the expected number of pulls, and the lower bound on the expected number of pulls does not directly guarantee a lower $\delta$-PAC bound. Specifically, under the low (at most $\delta$) probability event, there exist instances for which the problem reduces to that of zero gap problem. If the (unique) optimal arm is mistakenly eliminated under the low probability event, while at least two distributions of the remaining sub-optimal arms are identical, then 
the algorithm 
%can no more eliminate arms with a non-zero probability. As a consequence, the algorithm 
does not terminate, because the gap restricted to the remaining identical arms is zero. In fact, no algorithm can identify the best-quantile arm under the zero gap case (see Section~\ref{Gap}, Theorem \ref{lem:positive_gap}); however, $\varepsilon$-approximate solutions (with $\varepsilon>0$) terminate by breaking ties arbitrarily.

\subsubsection{Prior work on Differential Privacy.} The field of differentially private machine learning is, by now, too large to summarize here, as the following (non-exhaustive) list of works discussing learning quantiles/threshold-functions attests~\cite{NissimRS07, ChaudhuriH11, BeimelNS13, BeimelNS13b, FeldmanX14, BunNSV15, AlonLMM19, KaplanLMNS20}. For differentially private multi-armed bandit problems for the mean, 
Mishra and Thakurta~\cite{MishraT15} were the first to analyze a differentially private (DP) algorithm for multi-armed bandit, building a private variant of the UCB-algorithm~\cite{Auer2002} using the tree-based algorithm~\cite{ChanSS10,DworkNPR10}. Shariff and Sheffet~\cite{ShariffS18} have proven that any $\eps$-DP algorithm (see Section \ref{sec:dp-best-quantile}, Definition \ref{def:dp_BAI_alg}) for the (mean) multi-armed bandit problem must pull each suboptimal arm $i$ at least $\Omega\left(\log(T)/\epsilon(\mu_{i^*}-\mu_{i})\right)$ many times (with $i^*$ denoting the optimal arm, of largest mean-reward $\mu_{i^*} = \max_{i\in \mc{A}}\mu_i$) which doesn't quite meet the DP-UCB algorithm's upper bound. 
%We believe it should be possible to prove such a lower bound in our setting using the distributions in Theorem~\ref{thm:lower_bound} and an argument that does not rely on analyzing the regret, which is somewhat challenging to define in the quantile setting~\cite{Szorenyi15mab}. 
Most recently Sajed and Sheffet~\cite{SajedS19} gave a DP version of successive elimination whose regret matches the lower bound~\cite{ShariffS18}. 
%Not surprisingly, the algorithm we propose here is quite similar to theirs. \ads{took this out so we can discuss in the section instead}
% You must have at least 2 lines in the paragraph with the drop letter
% (should never be an issue)
%\hfill May 12, 2021

\section{Problem Statement}

%We denote the $q$-quantile of a random variable $X$ as $F_{X}^{-1}(q)$. 
 %\ads{does this go here or in section 2?}

%\ads{So the question given the email from the authors is whether an analogous bound can be found for discrete variables right?}\kn{Yes. We "only" need to show that $F\lp X_{\ceil[\small]{n\alpha^-}}\rp\leq \alpha$ to derive proposition 1~\cite[Section 4.1 ]{kolla2019concentration} for discrete random variables.}\begin{align}\color{blue}
%\text{To Prove: }  X_{\ceil[\small]{n\alpha^-}} \leq u_\alpha(X) \leq  X_{\ceil[\small]{n\alpha^+}}\\
%    F\lp X_{\ceil[\small]{n\alpha^-}}\rp&\color{blue}\leq \hat{F}_n\lp X_{\ceil[\small]{n\alpha^-}}\rp +\frac{1}{4n^s}\color{blue}\\
%%    &\color{blue}=\min_{x:\hat{F}_n (x)\geq \alpha^-} \hat{F}_n (x) +\frac{1}{4n^s}\label{eq:1}\\
 %   &\color{blue}\leq \min_{x:\hat{F}_n (x)\geq \alpha^-} F (x) +\frac{1}{2n^s}\label{eq:3}\\
 %   &\color{blue}\leq \min_{x:F (x)+\frac{1}{4n^s}\geq \alpha^-} F (x) +\frac{1}{2n^s}\label{eq:2}\\
 %   &\color{blue}\leq \min_{x:F (x)+\frac{1}{4n^s}\geq \alpha - \frac{1}{2n^s}} F (x) +\frac{1}{2n^s}\\
 %   &\color{blue}\leq \min_{x:F (x)\geq \alpha - \frac{3}{4n^s}} F (x) +\frac{1}{2n^s}.
%\end{align} \kn{\eqref{eq:1} always holds holds, \eqref{eq:2} holds because the set of $x$'s under the minimization in \eqref{eq:2} is a subset of the set of $x$' in \eqref{eq:3} (to see this note that the cdf and the generalized inverse of the CDF is increasing.)}
We consider a $K$-armed unstructured stochastic bandit $\nu = (\nu_i : i \in \mc{A})$, where $\mc{A}\triangleq\{1,2,\ldots,K\}$ is the set of arms and $\nu_i$ are probability measures.
%on a subset of $\mathbb{R}$ (discrete or continuous).
%on $\mathbb{R}$.\kn{possible confusion with continuous only domain?}
% \kn{Do we need to introduce the the distributions $\nu_i$, we are not using them right? we can just use the CDFs} \ads{If we want to talk about expectations, e.g. in the lower bouund, we neeed some notation for them. I could call them $f_i$ if that is clearer?} \dk{can we introduce the notation in the proof of the converse?} 
For the $i$-th arm, let $X^i$ be a random variable with distribution $\nu_i$. We will describe distributions by their cumulative distribution functions (CDFs). 

\begin{definition} \label{def:quantile}
Let $F_i(\cdot)$ be the CDF of $X^i$ for arm $i$.
The \emph{$q$-quantile} $F_i^{-1}(q)$ is defined as 
%\dk{these are defined in the first page of the paper, by the way, except for the $\nu$'s}
%We study the setting of $K$ arms and our goal is to identify the arm with the best (greater) quantile at some level $q$. Let $X^i$ be the random variable of the $i$th arm with corresponding CDF $F_i(\cdot)$. We denote the $q$-quantile of a random variable $X^i$ as $F_{i}^{-1}(q)$,
\begin{align}
    F_{i}^{-1}(q)&\triangleq \inf \{\xi: \P[X^i\leq \xi ]\geq q  \}
    %c_{\alpha}&\triangleq u_{\alpha}(X) +\frac{1}{1-\alpha}\E [ X-u_{\alpha}(X)]^+,
    \label{eq:quantile_definition}
\end{align}  
and the \emph{best arm} is defined as
    \begin{align}
    i^* \triangleq \argmax_{i\in \mc{A}} F^{-1}_i (q).
    \end{align} 
\end{definition}

For simplicity, we assume that the best arm is unique in the set $\mc{A}$. We denote the set of suboptimal arms as $\mc{A}^{-i^*}\triangleq \{1,2,\ldots,K\}\setminus \{i^*\}$. Given $n$ samples the estimated CDF of $X^i$ is $\hat{F}_{n,i} (x)\triangleq\frac{1}{n}\sum^{n}_{\ell=1} \mbb{I} \{X^i_\ell\leq x \}$. We denote the set of samples from arm $k$ as $\{X_i^k\}^n_{i=1}$, while 
%the lower index in a parenthesis represents the order statistic. 
for the $j^\text{th}$ order statistic of $\{X_i^k\}^n_{i=1}$ we use the standard notation $X_{(j)}^{k}$ for $j\in \{1,2,\ldots n\}$. If the value $j$ in $X_{(j)}^{k}$ appears out of the range $(1,n)$, while $n$ samples are available, then it is considered equal to the \textit{closest of the two values $1$ or $n$}. 
%This follows from the definition of the (empirical) quantile.

An algorithm for our quantile bandit chooses at each time $n$ an arm $i_n \in \mc{A}$ and obtains a reward $X^{i_n}_n \sim \nu_{i_n}$. The algorithm terminates by stopping sampling and declaring an arm $\hat{k}$ as the arm with the highest $q$-quantile, and succeeds if actually $\hat{k} = i^*$. We call an algorithm $\delta$-PAC if $\P(\hat{k} = i^*) \ge 1 - \delta$.

\section{Concentration Bound}
We proceed by providing a concentration bound for quantile estimation that applies for both discrete and continuous distributions. 
\begin{theorem}[Concentration Bound] \label{thm:GCB}
Choose a level $q\in (0,1)$. Fix $\delta\in (0,1)$. For any $n\in\mbb{N}$, if \begin{align}
    \sqrt{\frac{  \log(2/\delta)}{2n}} \leq \zeta \leq \min\{q,1-q\}  \label{eq:inequality_conc_1} \end{align} then \begin{align}
   \P\lp F_X^{-1} \lp q\rp\notin \left[X_{(\floor*{n(q- \zeta)})} , X_{(\ceil*{n(q+\zeta)})}  \right] \rp\leq \delta.
\end{align}
\end{theorem}\noindent In contrast, with concentration bounds that are uniform over the values of the level $q$~\cite{Szorenyi15mab}, Theorem \eqref{thm:GCB} shows the dependence of the required number of samples $n$ with respect to $q$ through the inequality \eqref{eq:inequality_conc_1}. This property explicitly expresses the difficulty of the problem when estimating the quantile close to the tails of the distribution. We provide the proof of Theorem \ref{thm:GCB} in Appendix \ref{app:properties}. %Since Theorem \ref{thm:GCB} holds for any number of samples $n\in \mbb{N}$, it is applicable to general sequential quantile estimation procedure.
%In the context of best arm identification, we have an algorithm which picks one of $K$ possible arms at each time $t$ and views a reward $X_t$. 

%We defer the definitions required for our differentially private algorithms to Section \ref{subsec:dp_preliminaries} where they will be needed.

%Due to space constraints, proofs appear in the supplementary material, Section \ref{app:algproofs}.
%\pagebreak
\section{Optimal Best-Arm Identification\\for Quantile Bandits}

\subsection{(Non-Private) Successive Elimination Algorithm}
\label{sec:succ_el_algorithm}
%is a successive elimination (SE) algorithm. % that draws batches of samples in epochs.\kn{the epoch-based is introduced later} 
We choose to study successive elimination (SE) rather than a variant of UCB~\cite{Auer2002} (adopted by Howard and Ramdas~\cite{HowardR19} for quantiles) for the following reasons. Firstly, we prove matching upper and lower bounds on the sample complexity, showing our SE algorithm is essentially optimal (up to logarithmic terms). Secondly, since we are also interested in developing differentially private algorithms (see Section \ref{sec:dp-best-quantile}), the SE algorithm is more ``privacy friendly'', because the sampling strategy is independent of the data and it uses confidence bounds in terms of the order statistics. Finally, there is no private analog to UCB when the distributions have infinite support.
\begin{algorithm}
    \begin{algorithmic}[1]
    \caption{Successive Elimination for Quantiles (SEQ)} 
	\label{alg:succ_el_quant}
	\Require $\delta, q$
	\State $\mc{A}\leftarrow\{1,\ldots,K \}$ 
	\State $\mathrm{D}(n) = \sqrt{\frac{  \log(4Kn^2/\delta)}{2n}}$
	\State Find the first $n_*\in\mbb{N}\setminus\{1\}$ such that $\mathrm{D}(n)\leq \min\{q,1-q\}$
	    \State $n\leftarrow n_*$
	    \State Pull $n$ times each arm $k\in\mc{A}$, obtain new samples $X^k_{1},\ldots,X^k_{n}$ for all $k\in K$
	    \While{$|\mc{A}|>1$}
	    \State Increment $n\leftarrow n+1$
	    %\State $s\leftarrow \left[1-\log (8\log (4Kn^2 /\delta))/\log n\right]/2$
	    \State Set $\mathrm{D} \leftarrow  \sqrt{\frac{  \log(4Kn^2/\delta)}{2n}}$
	    \State Pull each arm in $\mc{A}$, obtain samples $X^k_n$ for all $k\in \mc{A}$ 
	    \State Update $X^k_{(\floor*{n(q-\mathrm{D})})}$ and  $X^k_{(\ceil*{n(q+\mathrm{D})})} $ for all $k\in \mc{A}$  
	    \vspace{+0.1cm}
	    \For {{\bf each} pair $(j,i)\in \mc{A}\times \mc{A}$ such that $j\neq i$}
	    \vspace{+0.05cm}
	     \If { $X^j_{(\floor*{n(q-\mathrm{D})})}  \geq X^i_{(\ceil*{n(q+\mathrm{D})})}$  } $\mc{A}\leftarrow \mc{A}\setminus \{i\}$
	    \EndIf
	    \EndFor
        %\State $\mc{A}\leftarrow \mc{A}\setminus \left\{i\in \mc{A}:\exists j\in \mc{A} :X^j_{(\floor*{n(q-\mathrm{D})})}  \geq X^i_{(\ceil*{n(q+\mathrm{D})})} \right\}$ \kn{fix this to eliminate one}
	    \EndWhile
       \State \Return $\mc{A}$    
      \end{algorithmic}
\end{algorithm}

Our Successive Elimination algorithm for Quantiles (SEQ) Algorithm is shown in Algorithm \ref{alg:succ_el_quant}. To explain SEQ (Algorithm \ref{alg:succ_el_quant}), we define the sequence $\mathrm{D}(n)\triangleq\sqrt{  \log(4Kn^2/\delta)/2n}$, (we denote $\mathrm{D}(n)$ as $\mathrm{D}$ for sake of space) and we use a concentration bound on the quantile (see Lemma \ref{prop:E^c}) \begin{align}\label{eq:concentration_bound_main}
  \!\! \P\big( F_i^{-1} \lp q\rp\in \big[X^i_{(\floor*{n(q-\mathrm{D})})} , X^i_{(\ceil*{n(q+\mathrm{D})})}  \big] \big) > 1-\frac{\delta}{2Kn^2}.
\end{align} 
The latter yields the elimination condition in line 13 of Algorithm \ref{alg:succ_el_quant}. Specifically, when the inequality $X^j_{(\floor*{n(q-\mathrm{D})})}  \geq X^i_{(\ceil*{n(q+\mathrm{D})})}$ holds then $F_{i}^{-1}(q)\leq F_{j}^{-1}(q)$ \vspace{0.25mm} with probability at least $1-\delta$ (by applying union bound in \eqref{eq:concentration_bound_main} over all times $n$ and arms $K$). Thus to identify the arm with the maximum quantile, whenever $X^j_{(\floor*{n(q-\mathrm{D})})}  \geq X^i_{(\ceil*{n(q+\mathrm{D})})}$, we remove $i$ from $\mc{A}$.\footnote{To identify the arm with the minimum quantile, we modify line 13 of the algorithm as follows: If $X^j_{(\floor*{n(q-\mathrm{D})})}  \geq X^i_{(\ceil*{n(q+\mathrm{D})})}$, then we remove $j$ from $\mc{A}$.} 

A variant of the algorithm would be to take samples in epochs of increasing size. We consider this approach in the development of the differentially private version of Algorithm \ref{alg:succ_el_quant}, which reduces to a non-private epoch-based variant of Algorithm \ref{alg:succ_el_quant} (Section \ref{sec:dp-best-quantile}, Algorithm \ref{alg:DP_SE_Quantiles}). This epoch-based algorithm improves the logarithmic (and inconsequential) part of the bound of Theorem~\ref{thm:upper_bound} from $\log(1/\Delta_i)$ to $\log\log(1/\Delta_i)$ and matches asymptotically the bound for UCB~\cite{HowardR19} (see the discussion at the end of Section~\ref{sec:dp-best-quantile}).
%by taking the privacy risk $\epsilon \to \infty$.
%(see Section \ref{subsec:DP_BAI_for_quantiles}).

%We next analyze the quantile bandit problem by introducing our novel measure of suboptimality gap, showing Algorithm \ref{alg:succ_el_quant} is $\delta$-PAC, characterizing its sample complexity, and proving a lower bound on the expected number of pulls of any $\delta$-PAC algorithm. Due to space constraints, proofs appear in the supplementary material, Section \ref{app:algproofs}.

\subsection{Suboptimality gap}\label{Gap}

We first define the suboptimality gap between the best arm $i^*$ and any suboptimal arm.

% Our first contribution is to define the suboptimality gap between the best arm $i^*$ and any suboptimal arm.  The latter is rigorously captured by our definition of the suboptimality gap, as follows. 
 
\begin{definition}\label{dap_definition}
The \emph{suboptimality gap} $\Delta_{i}$ (also denoted as $\Delta(F_i, F_{i^*})$) between the optimal arm $i^*$ and any suboptimal arm $i$ at level $q\in (0,1)$ is
\begin{align}
\label{eq:gap_distributions}
    \Delta_{i} \triangleq  \sup  \{\eta\geq 0:~ F_{i}^{-1}(q+\eta) \leq F_{i^*}^{-1}(q-\eta) \}.
\end{align}
%Note that $\Delta_{i}$ depends on $q$, we may use the notation $ \Delta_{i}^q$ to be clearer if necessary. \ads{do we use this notation.}
\end{definition}

How can we interpret this gap? Roughly speaking, it is the amount of probability mass needed to swap the order of the quantiles. To get further insight into the definition \eqref{eq:gap_distributions}, notice that $\mathrm{D}(n)$ is decreasing with respect to $n$, and the elimination occurs at the first time (maximum value of $\mathrm{D}$) that gives $X^j_{(\floor*{n(q-\mathrm{D})})}  \geq X^i_{(\ceil*{n(q+\mathrm{D})})}$.\vspace{0.25mm} In fact, the value $\Delta_i$ in \eqref{eq:gap_distributions} acts as a threshold on the quantity $\mathrm{D}$ in the analysis of the algorithm (proof of Theorem \ref{thm:upper_bound}). Our definition of gap applies on continuous, discrete, and mixture distributions. 
%We continue by providing examples for a graphical representation of the suboptimality gap.  %\ads{add figure with the examples we want to consider. I think a continuous and smooth case and two cases with discontinuities should be the right thing to do. In particular, we want examples where the gap between VaRs is large but the gap is small.  If you draw them by hand I can work on plotting/drawing them out.} 

Most importantly, the key point in the Definition \ref{dap_definition} of the quantile suboptimality-gap is that it \emph{fully characterizes} the pairs of distributions for which we can discern that one has a higher $q$-quantile than another \emph{from any number of samples}. Formally, for a pair of distributions $(F_{\rm l} , F_{\rm h} )$ where the former has a suboptimal $q$-quantile than the latter, namely $F_{\rm l} ^{-1}(q)\leq F_{\rm h} ^{-1}(q)$, we define the \emph{distance to quantile-flip at $q$} as
\begin{align}
    &\!\!\!\!\dflip(F_{\rm l} , F_{\rm h} )\label{eq:dist_to_quantile_flip} \\&\!\!\!\!\!\triangleq\! \inf_{( G_{\rm h}  ,G_{\rm l} ): G_{\rm h} ^{-1}(q) > G_{\rm l} ^{-1}(q)} \max \{ \dtv(F_{\rm l} , G_{\rm h} ), \dtv(F_{\rm h} , G_{\rm l} )  \},\nonumber
\end{align} 
and by $\dtv(\cdot , \cdot )$ we denote the total variation. Next we provide a rigorous result which shows that the gap $\Delta_{i}$ is indeed a fundamental quantity which characterizes the complexity of the best quantile identification problem.

\begin{theorem}\label{lem:positive_gap}
For any $0<q<1$ and any two distributions $F_{i}$ and $F_{i^*}$ such that $F_{i}^{-1}(q) \leq F_{i^*}^{-1}(q)$ it holds that $\dflip(F_{i}, F_{i^*})=\Delta(F_i, F_{i^*})$ provided that $\Delta(F_i, F_{i^*})<\min\{q,1-q\}/2$.
\end{theorem}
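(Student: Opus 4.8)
The plan is to prove the equality through two matching inequalities, both powered by a single elementary fact linking total variation to shifts of the quantile function. Write $\Delta\triangleq\Delta(F_i,F_{i^*})$. The fact is: if $\dtv(F,G)\le\epsilon$ then $|F(x)-G(x)|\le\epsilon$ for every $x$ (total variation dominates the Kolmogorov distance), and consequently $F^{-1}(q-\epsilon)\le G^{-1}(q)\le F^{-1}(q+\epsilon)$ whenever $q\pm\epsilon\in[0,1]$. Both inclusions come from unwinding $G^{-1}(q)=\inf\{x:G(x)\ge q\}$: from $G(x)\ge F(x)-\epsilon$ one gets $\{x:F(x)\ge q+\epsilon\}\subseteq\{x:G(x)\ge q\}$, hence $G^{-1}(q)\le F^{-1}(q+\epsilon)$, and symmetrically for the lower inclusion. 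I would also record the companion observation that the set $S\triangleq\{\eta\ge0:F_i^{-1}(q+\eta)\le F_{i^*}^{-1}(q-\eta)\}$ is a down-set with $\sup S=\Delta$, since $F_i^{-1}$ is nondecreasing while $\eta\mapsto F_{i^*}^{-1}(q-\eta)$ is nonincreasing, so membership at $\eta$ forces membership at every smaller nonnegative value.

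\emph{Lower bound} ($\dflip(F_i,F_{i^*})\ge\Delta$). I would take an arbitrary admissible pair $(G_{\rm h},G_{\rm l})$ with $G_{\rm h}^{-1}(q)>G_{\rm l}^{-1}(q)$ and set $\epsilon=\max\{\dtv(F_i,G_{\rm h}),\dtv(F_{i^*},G_{\rm l})\}$. Applying the quantile--TV fact coordinatewise gives $G_{\rm h}^{-1}(q)\le F_i^{-1}(q+\epsilon)$ and $G_{\rm l}^{-1}(q)\ge F_{i^*}^{-1}(q-\epsilon)$, so the flip condition forces $F_i^{-1}(q+\epsilon)>F_{i^*}^{-1}(q-\epsilon)$; thus $\epsilon\notin S$, and because $S$ is a down-set this yields $\epsilon\ge\sup S=\Delta$. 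Taking the infimum over admissible pairs gives the claim, and this direction needs no restriction on $\Delta$.

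\emph{Upper bound} ($\dflip(F_i,F_{i^*})\le\Delta$). For each $\epsilon$ with $\Delta<\epsilon<\min\{q,1-q\}$ (such $\epsilon$ exist precisely because the hypothesis places $\Delta$ strictly below $\min\{q,1-q\}$, keeping $q\pm\epsilon\in(0,1)$ so the relevant quantiles stay finite), I would construct an explicit flipping pair of cost $\epsilon$ by mass-shifting. Define $G_{\rm h}$ from $F_i$ by deleting $\epsilon$ units of probability mass from the lower tail and reinstating them as an atom at an arbitrarily large point $M$, i.e.\ $G_{\rm h}(x)=\max\{0,F_i(x)-\epsilon\}$ for $x<M$ and $G_{\rm h}(x)=1$ for $x\ge M$; symmetrically obtain $G_{\rm l}$ from $F_{i^*}$ by adding $\epsilon$ mass far to the left, $G_{\rm l}(x)=\min\{1,F_{i^*}(x)+\epsilon\}$ above a suitably negative point. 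A direct check shows $\dtv(F_i,G_{\rm h})=\dtv(F_{i^*},G_{\rm l})=\epsilon$, while $G_{\rm h}^{-1}(q)=F_i^{-1}(q+\epsilon)$ and $G_{\rm l}^{-1}(q)=F_{i^*}^{-1}(q-\epsilon)$. Since $\epsilon>\sup S$ gives $\epsilon\notin S$, i.e.\ $F_i^{-1}(q+\epsilon)>F_{i^*}^{-1}(q-\epsilon)$, the pair is admissible, so $\dflip\le\epsilon$; letting $\epsilon\downarrow\Delta$ finishes this direction and hence the equality.

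The main obstacle I anticipate is the upper-bound construction rather than the lower bound: I must verify the perturbed objects are genuine CDFs and that the TV cost is exactly $\epsilon$ even when $F_i$ or $F_{i^*}$ carry atoms near the relevant quantiles, where the parked mass at $M$ and at the far-left point is exactly what prevents mass from leaking to $\pm\infty$ and keeps $G_{\rm h},G_{\rm l}$ proper. A secondary delicacy is the endpoint behaviour of the supremum defining $\Delta$: since the generalized inverse need not make $S$ closed, I would avoid evaluating anything at $\eta=\Delta$ and instead work with $\epsilon>\Delta$ and pass to the limit, which is where the hypothesis $\Delta<\min\{q,1-q\}/2$ supplies enough slack to keep $q\pm\epsilon\in(0,1)$ throughout.
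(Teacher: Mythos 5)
Your proposal is correct and takes essentially the same route as the paper's proof: your ``quantile--TV fact'' is the paper's Lemma~\ref{pro:close_dist_close_quantiles}, your lower bound is the contrapositive-free restatement of the paper's second part, and your upper bound uses the same mass-shifting construction as the paper's $\eta$-push $F^{\to\eta}$ and $\eta$-pull $F^{\gets\eta}$, followed by the same limit $\epsilon \downarrow \Delta$. The only differences are cosmetic (e.g., your $\max\{0, F_i(x)-\epsilon\}$ truncation makes the construction well-defined even when little mass lies below the quantile, a point the paper's informal ``subtract mass from $(-\infty,F^{-1}(q))$'' glosses over).
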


%As a consequence, when the suboptimality gap $\Delta_i=0$ then $\dflip(F_i, F_{i^*})=0$ and so no finite-size batch of samples can discern which arm has the higher $q$-quantile because the same batch can be generated by a quantile flip $( G_{\rm h}  ,G_{\rm l} )$ with the same probability. \os{Repeat the same sentence?}
%\ads{merged the two below, can comment out the above if it's ok.}
We provide the proof of Theorem \ref{lem:positive_gap} in Appendix \ref{app_quantile_flip}. Theorem \ref{lem:positive_gap} shows that if $\Delta_i=0$ then $ \dflip(F_{\rm l} , F_{\rm h} )=0$ and \emph{no algorithm} can distinguish which arm has the higher $q$-quantile, regardless of its sample-size: every batch of samples can be generated by a quantile flip pair $( G_{\rm h}  ,G_{\rm l} )$ with the same probability.
%The latter shows that if $\Delta_i=0$ then \emph{no algorithm} can distinguish the best-quantile arm for any positive number $n$ samples.  
Conversely, when $\Delta_i>0$ we devise an algorithm that discerns which arm has the higher $q$-quantile using $\tilde O(\Delta_i^{-2})$ many examples\footnote{The notation $\tilde O (\cdot)$ denotes order up to logarithmic factors.} from each arm and argue that this bound is optimal in the sense that there exists a collection of $K$ distributions requiring $\tilde O(\Delta_i^{-2})$ many examples from each distribution (Section~\ref{subsec:analysis_succ_el_quant}). Corollary \ref{corollary_cases} in Appendix \ref{appendix_Gap} provides the cases for which $\Delta>0$ or $\Delta=0$. We continue by providing graphical representations and properties of the gap in certain cases. Finally, we present the main differences between our definition and definitions in prior work.

\subsubsection{Graphical illustration of the gap} Figure \ref{fig:gaussian_exponential_gap} shows the gap as a function of the level $q$ of the quantile for two continuous distributions, the Gaussian and the exponential. We vary the optimal distribution by altering the parameter (variance or rate). For the Gaussian example (left) we look at the gap between a (suboptimal) $\mc{N}(0,2)$ distribution and Gaussians of higher variance. As expected, when looking at the median the gap is $0$ since they are both symmetric distributions. More interestingly, the best-arm identification problem becomes easiest when looking at some quantile $q^*$ (or $1 - q^*$) that lies between $1/2$ and $1$. The problem becomes hard again when looking at the tails of the distribution. For the exponential distribution we compare to a rate $\lambda_i = 1$ for smaller values of the rate. As the difference in rates grows, the problem becomes easier, as expected. Here too we see an optimal $q^*$ between $1/2$ and $1$ for which the top quantile is easiest to identify.
While analytical expressions for these optimal points could possibly be derived through analyzing the corresponding densities, this is not the focus of our work.

\begin{figure}[t!]
    \centering
    \includegraphics[width=0.45\textwidth, valign=t]{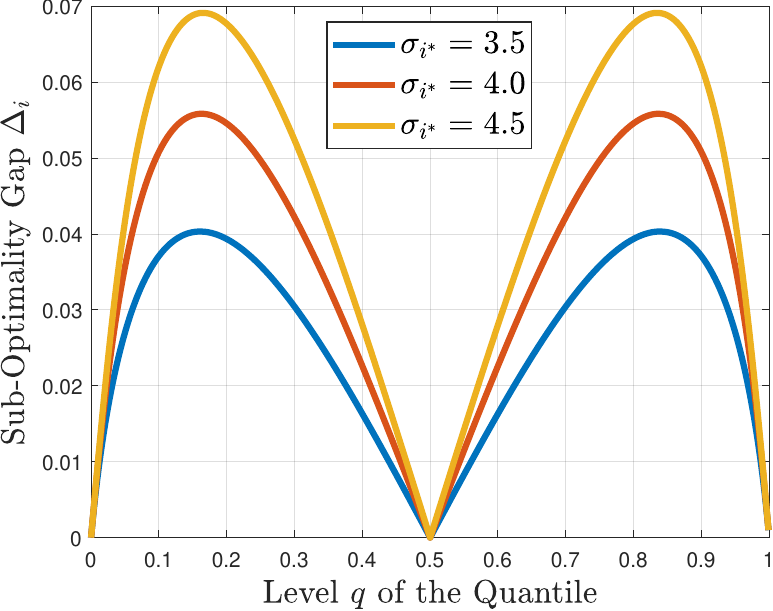}
    \vspace{8bp}
    \\
    \includegraphics[width=0.45\textwidth, valign=t]{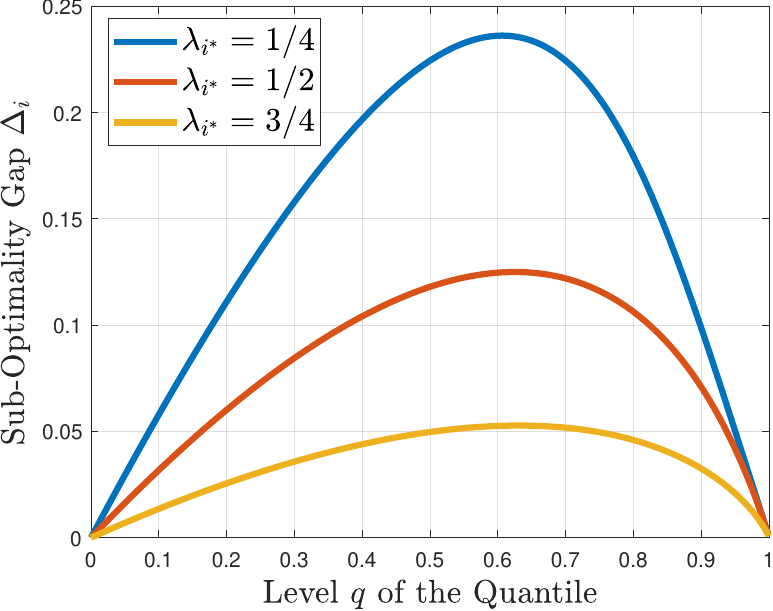}
    \caption{Illustration of the suboptimality gap for two example distributions. Upper: Gaussian with $\mu_{i^*}=\mu_{i}=0$, $\sigma_i=2$ and different values of $\sigma_{i^*}$, Lower: Exponential with $\lambda_i=1$.}
    \label{fig:gaussian_exponential_gap}
\end{figure}

For discrete distributions, we can show that the difference between the quantiles can become arbitrarily small while the definition of the gap and the sample complexity of Algorithm \ref{alg:succ_el_quant} remain insensitive, see Figure \ref{fig:quantile_diff}. Additionally, while the difference between the quantile values is not the correct definition to use for the gap in general, the two quantities are related in the case of Lipschitz CDFs.

\begin{proposition} \label{prop:lipschitz_cdf}
Suppose $F$ and $G$ are two distributions with $L$-Lipschitz continuous and strictly increasing CDFs. Then the following inequality holds $\Delta(F,G) \le \frac{L}{2} |F^{-1}(q) - G^{-1}(q)|$.
\end{proposition}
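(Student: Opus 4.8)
The plan is to exploit the two hypotheses separately: strict monotonicity together with continuity makes each quantile function a genuine two-sided inverse, while the $L$-Lipschitz bound on the CDFs converts into a \emph{lower} bound on the increments of the quantile functions. I would first assume without loss of generality that $F^{-1}(q)\le G^{-1}(q)$ (the reverse case is symmetric, and this is exactly the orientation under which $\Delta(F,G)$ in \eqref{eq:gap_distributions} is defined), and set $\Delta\defeq\Delta(F,G)$. If $\Delta=0$ the claimed inequality is trivial, so I may assume $\Delta>0$. The key elementary fact I would establish is that for any $p\in(0,1)$ and $\eta>0$ with $p+\eta<1$, the identities $F(F^{-1}(p))=p$ and $F(F^{-1}(p+\eta))=p+\eta$ hold (from continuity plus strict monotonicity), so applying the $L$-Lipschitz bound to the pair $F^{-1}(p),F^{-1}(p+\eta)$ gives $\eta\le L\,\bigl(F^{-1}(p+\eta)-F^{-1}(p)\bigr)$, i.e. $F^{-1}(p+\eta)-F^{-1}(p)\ge \eta/L$. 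I would record this once for $F$ with $(p,\eta)=(q,\Delta)$ and once for $G$ with $(p,\eta)=(q-\Delta,\Delta)$.

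Next I would argue that the supremum defining $\Delta$ is attained with equality. The map $\eta\mapsto F^{-1}(q+\eta)-G^{-1}(q-\eta)$ is continuous and strictly increasing (since $F^{-1}$ is strictly increasing and $\eta\mapsto G^{-1}(q-\eta)$ is strictly decreasing); it is $\le 0$ at $\eta=0$ by the ordering $F^{-1}(q)\le G^{-1}(q)$, so $\Delta$ is its unique zero, giving $F^{-1}(q+\Delta)=G^{-1}(q-\Delta)$. Writing $x^\ast$ for this common value, the two increment bounds read $x^\ast-F^{-1}(q)\ge \Delta/L$ and $G^{-1}(q)-x^\ast\ge \Delta/L$. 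Adding them telescopes the $x^\ast$ terms:
\[
\bigl(x^\ast-F^{-1}(q)\bigr)+\bigl(G^{-1}(q)-x^\ast\bigr)=G^{-1}(q)-F^{-1}(q)\ \ge\ \frac{2\Delta}{L},
\]
and since $G^{-1}(q)\ge F^{-1}(q)$ the left-hand side equals $|F^{-1}(q)-G^{-1}(q)|$, yielding $\Delta\le \tfrac{L}{2}\,|F^{-1}(q)-G^{-1}(q)|$, as desired.

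The only real subtlety, and the step I expect to need the most care, is the domain bookkeeping: the increment bounds and the inverse identities require $q+\Delta<1$ and $q-\Delta>0$ so that $F^{-1}(q+\Delta)$ and $G^{-1}(q-\Delta)$ are finite. This is automatic here, because an $L$-Lipschitz strictly increasing CDF is a continuous bijection $\mathbb{R}\to(0,1)$; hence if $q+\Delta\ge 1$ then $F^{-1}(q+\Delta)$ fails to be a finite real and the defining inequality of the supremum cannot hold, forcing $q+\Delta<1$, and symmetrically $q-\Delta>0$. Verifying this, together with the (brief) argument that the supremum is achieved rather than merely approached, is the only place requiring attention; everything else reduces to the two-line telescoping above.
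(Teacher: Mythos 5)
Your core computation is the same as the paper's: continuity plus strict monotonicity make $F\circ F^{-1}$ the identity, so the Lipschitz hypothesis yields the increment bounds $F^{-1}(q+\eta)-F^{-1}(q)\ge \eta/L$ and $G^{-1}(q)-G^{-1}(q-\eta)\ge \eta/L$, and adding them telescopes to $2\eta\le L\left(G^{-1}(q)-F^{-1}(q)\right)$. Where you diverge is the final step. The paper never needs the supremum to be attained: for \emph{every} $\eta$ in the set defining the gap, the quantity $G^{-1}(q-\eta)-F^{-1}(q+\eta)$ is nonnegative, so the combined inequality $2\eta+L\left(G^{-1}(q-\eta)-F^{-1}(q+\eta)\right)\le L\left(G^{-1}(q)-F^{-1}(q)\right)$ already gives $2\eta\le L\left(G^{-1}(q)-F^{-1}(q)\right)$ pointwise on that set, and one simply takes the supremum. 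This sidesteps all of your ``domain bookkeeping'': any $\eta$ for which $F^{-1}(q+\eta)$ or $G^{-1}(q-\eta)$ is infinite is automatically outside the defining set, so nothing needs to be checked about the boundary.

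Your detour through attainment is workable, but the justification you give at the step you yourself flagged contains a logical slip. From ``$q+\Delta\ge 1$ makes $F^{-1}(q+\Delta)$ infinite, so the defining inequality cannot hold'' you conclude $q+\Delta<1$; but the supremum of a set need not belong to the set, so failure of the defining inequality \emph{at} $\eta=\Delta$ is not a contradiction --- that is precisely the scenario (sup approached but not attained) you are trying to exclude, so the argument is circular as stated. The repair uses the ingredient you already named: since an $L$-Lipschitz strictly increasing CDF is a bijection onto $(0,1)$, we have $F^{-1}(p)\to+\infty$ as $p\to 1^-$ and $G^{-1}(p)\to-\infty$ as $p\to 0^+$, hence $F^{-1}(q+\eta)-G^{-1}(q-\eta)\to+\infty$ as $\eta\to\min\{q,1-q\}^-$, so the defining inequality fails on an entire left-neighborhood of $\min\{q,1-q\}$, not merely at one point. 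Because the defining set is downward closed (if $\eta'$ satisfies the inequality and $\eta<\eta'$, monotonicity of the quantile functions shows $\eta$ does too), its supremum must then be strictly below $\min\{q,1-q\}$, and your continuous strictly increasing map has a unique zero at $\Delta$ by the intermediate value theorem. With that patch your proof is complete; the comparison with the paper shows what the attainment route costs --- an extra compactness-style argument that the uniform ``bound each $\eta$, then take sup'' route gets for free.
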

\begin{proof}[Proof of Proposition \ref{prop:lipschitz_cdf}]
By definition, we have
    \begin{align*}
    \eta &= \left|F\lp F^{-1}(q + \eta)\rp - F\lp F^{-1}(q) \rp\right|\\
    &\le L \left| F^{-1}(q + \eta) - F^{-1}(q) \right|
    \!= L \lp F^{-1}(q + \eta) - F^{-1}(q) \rp\\
    \eta &= \left|G\lp G^{-1}(q)\rp - G\lp, G^{-1}(q-\eta) \rp\right|\\
    &\le L \left| G^{-1}(q) - G^{-1}(q-\eta) \right| 
    \!= L \lp G^{-1}(q) - G^{-1}(q-\eta) \rp\!.
    \end{align*}
So,
    \begin{align*}
    \!2 \eta + L\! \lp G^{-1}(q-\eta) - F^{-1}(q + \eta) \rp
    \!\le L\! \lp G^{-1}(q) - F^{-1}(q) \rp\!.
    \end{align*}
From the definition of the gap, taking the supremum over $\eta$ gives
    \begin{align*}
    \Delta(F,G) \le \frac{L}{2} \left| F^{-1}(q) - G^{-1}(q) \right|.
    \end{align*} This completes the proof.
\end{proof} We continue by providing the difference between our definition for the gap in comparison with a similar definition in prior work. By providing a simple example, we explain that previous definitions fail to capture certain cases of interest.

\subsubsection{Difference between the proposed gap and prior work}\label{sec:Difference between the proposed gap and prior work} Although the suboptimality gap we propose (Definition \ref{dap_definition}) may look similar to those proposed in prior works~\cite{Szorenyi15mab,david2016pure}, there are several important differences. Earlier work by Sz\"{o}r\'{e}nyi et al.~\cite{Szorenyi15mab} explicitly incorporates the approximation parameter, whereas our definition depends only on the arm distributions. Further, the proposed gap differs from that in~\cite{Szorenyi15mab} when $\varepsilon=0$, because ''less or equal than'' takes the place of a strict inequality. This is not a trivial point, because for the case of discrete distributions the gap by Sz\"{o}r\'{e}nyi et al. can be zero, while the gap of the present work is positive, Algorithm \ref{alg:succ_el_quant} terminates and the problem is not hard. For instance, consider an example of two arms with the sub-optimal arm following a Bernoulli distribution with probability $1/2$ and support $\{1,2\}$, and with the optimal arm taking the value $2$ with probability $1$. Then the problem is not hard in terms of the sample complexity; Definition \ref{dap_definition} gives $\Delta_i>0$, while the gap in Sz\"{o}r\'{e}nyi et al.~\cite{Szorenyi15mab} is zero when $\varepsilon=0$. Additionally, in Theorem \ref{lem:positive_gap} we show that if $\Delta_i=0$, no algorithm can identify the best quantile-arm with probability greater than $1/2$. Further, we provide a minimax lower bound on the expected number of pulls based on the gap $\Delta_i$ (Theorem \ref{thm:lower_bound}); the latter shows that our upper bound is optimal up to logarithmic factors. Finally, the gap in \cite{david2016pure} involves a strict inequality instead of ''less or equal than'' and the supremum involves the distribution of the sub-optimal arm. This definition does not capture the difficulty of the problem for general cases of discrete distributions as we discussed above. The gap in this work (see Definition \ref{dap_definition}) captures the difficulty of the problem for discrete, continuous distributions, as well as for mixtures.

\DeclareRobustCommand{\LemmaDistFlip}{For any $0<q<1$ and any two distributions $F_{i}$ and $F_{i^*}$ such that $F_{i}^{-1}(q) \leq F_{i^*}^{-1}(q)$ it holds that $\dflip(F_{i}, F_{i^*})=\Delta(F_i, F_{i^*})$ provided that $\Delta(F_i, F_{i^*})<\min\{q,1-q\}/2$.}
\begin{figure}[t]
    \centering
    \!\!\!\!\includegraphics[width=0.45\textwidth, valign=c]{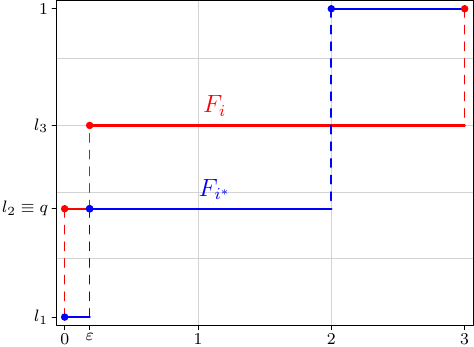}
    \hspace{3.1pt}
    \\
    \vspace{8bp}
    \includegraphics[width=0.455\textwidth, valign=c]{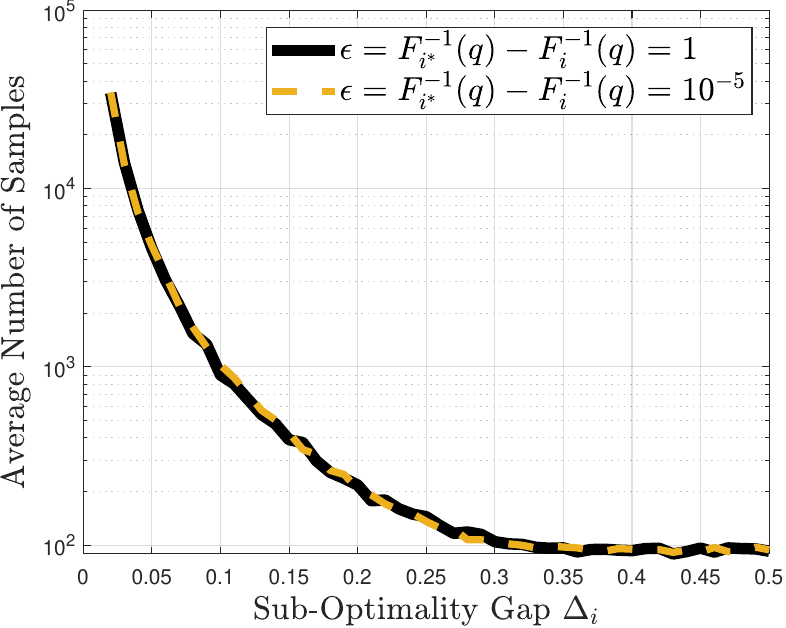}%{}
    \caption{Upper: Distributions $F_{i^*}(\cdot),F_i(\cdot)$ of the optimal and suboptimal arm. Note that $0=F^{-1}_{i}(q)<F^{-1}_{i^*}(q)=\eps$, for some $\eps\in (0,1]$. However, the gap is $\Delta_i = \min\{ l_3 -q , q-l_1 \}$, independent of $\eps$. Lower: Experimental evaluation of the average number of samples at termination of Algorithm 1 for $\eps=1$ and $\eps=10^{-5}$.}
    \label{fig:quantile_diff}
\end{figure}

%Now, consider the case where $\dflip(F_{\rm l} , F_{\rm h} )=0$. On the one hand, the $q$-quantile of $F_{\rm h} $ is (weakly) greater than the $q$-quantile of $F_{\rm l} $ so we would like our algorithm to eliminate the arm corresponding to $F_{\rm l} $ after seeing enough samples. On the other hand, for \emph{any} number of samples $n$, the fact that $\dflip(F_{\rm l} , F_{\rm h} )=0$ implies that there exists a pair of distributions $G_{\rm l} $ and $G_{\rm h} $ where $G_{\rm h} $ has the (strictly) greater $q$-quantile of the two, and where both $\dtv(F_{\rm l} ,G_{\rm h} )$ and $\dtv(F_{\rm h} ,G_{\rm l} )$ are infinitesimally small, say $\ll n^{-2}$. Thus, our two batches of $n$ examples each from each distribution are (almost) as likely to come from the distribution pair $(F_{\rm l} , F_{\rm h} )$ as they are likely to come from the pair $(G_{\rm h} , F_{\rm l} )$. This implies that no finite $n$ is sufficient to decide whether we need to eliminate the former distribution or the latter distribution. 

\subsection{Analysis}
\label{subsec:analysis_succ_el_quant}

Our first result guarantees that Algorithm \ref{alg:succ_el_quant} eliminates the suboptimal arms while the unique best arm remains in the set $\mc{A}$ with high probability until the algorithm terminates. For the rest of the paper we assume that $\Delta_i>0$ for all $i\in \{1,2\ldots,K\}\setminus i^*$. %Recall that our analysis holds for discrete, continuous and mixture distributions.

\begin{theorem}\label{thm:identification}
Algorithm \ref{alg:succ_el_quant} is $\delta$-PAC.
\end{theorem}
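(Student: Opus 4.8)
The plan is to exhibit a single high-probability ``good event'' on which the optimal arm $i^*$ survives every elimination round, and then to control the failure probability by a union bound over arms and times. First I would define the good event
\[
\mathcal{E} \triangleq \bigcap_{i \in \mc{A}} \bigcap_{n \geq n_*} \Big\{ F_i^{-1}(q) \in \big[\, X^i_{(\floor*{n(q-\mathrm{D})})}, \ X^i_{(\ceil*{n(q+\mathrm{D})})} \,\big] \Big\},
\]
that is, the event that for every arm and every round the true $q$-quantile lies inside the order-statistic confidence interval tested in line~13 of Algorithm~\ref{alg:succ_el_quant}. The choice of $n_*$ in line~3 guarantees $\mathrm{D}(n)\le\min\{q,1-q\}$ for all $n\ge n_*$, so these intervals (after clamping out-of-range indices) are well defined, and the concentration bound \eqref{eq:concentration_bound_main} (Lemma~\ref{prop:E^c}) says that each individual event fails with probability at most $\delta/(2Kn^2)$.

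Next I would bound $\P(\mathcal{E}^c)$ by summing the per-arm, per-round failure probabilities:
\[
\P(\mathcal{E}^c) \le \sum_{i \in \mc{A}} \sum_{n=n_*}^{\infty} \frac{\delta}{2Kn^2} \le \frac{\delta}{2} \sum_{n=1}^{\infty} \frac{1}{n^2} = \frac{\delta}{2}\cdot\frac{\pi^2}{6} < \delta .
\]
The crucial feature here is the quadratic-in-$n$ denominator built into $\mathrm{D}(n)$ through the $\log(4Kn^2/\delta)$ term: it makes the sum over the infinitely many rounds converge (to $\pi^2/6<2$), so that all elimination tests at all times can be controlled simultaneously while leaving the total failure mass below $\delta$.

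Then, working on $\mathcal{E}$, I would show $i^*$ is never removed. Suppose toward a contradiction that at some round $n$ a rival arm $j\neq i^*$ triggers the elimination of $i^*$, i.e.\ $X^j_{(\floor*{n(q-\mathrm{D})})} \ge X^{i^*}_{(\ceil*{n(q+\mathrm{D})})}$. On $\mathcal{E}$ the left-hand order statistic is a lower confidence bound for $F_j^{-1}(q)$ and the right-hand one is an upper confidence bound for $F_{i^*}^{-1}(q)$, so chaining the two containments through the elimination inequality yields
\[
F_j^{-1}(q) \;\ge\; X^j_{(\floor*{n(q-\mathrm{D})})} \;\ge\; X^{i^*}_{(\ceil*{n(q+\mathrm{D})})} \;\ge\; F_{i^*}^{-1}(q),
\]
which contradicts the standing assumption that $i^*$ is the \emph{unique} maximizer of the $q$-quantile. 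Hence on $\mathcal{E}$ the set $\mc{A}$ retains $i^*$ at every step, so whenever the while-loop exits with $|\mc{A}|=1$ the surviving arm is exactly $i^*$. Combining this with termination (Theorem~\ref{thm:upper_bound}), I conclude $\P(\hat{k}=i^*) \ge \P(\mathcal{E}) \ge 1-\delta$, which is the claim.

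The entire substance of the argument rests on the two-sided order-statistic concentration bound \eqref{eq:concentration_bound_main}; once it is granted, the remaining work is only a convergent union bound and a two-line sandwiching argument that invokes the uniqueness of $i^*$. I therefore expect the main obstacle to lie not in this theorem but in establishing Lemma~\ref{prop:E^c} itself~--- namely converting a Hoeffding-type deviation bound on the empirical CDF at level $q$ into a confidence interval expressed through the $\floor*{n(q-\mathrm{D})}$-th and $\ceil*{n(q+\mathrm{D})}$-th order statistics, correctly accounting for the floor/ceiling rounding and for the clamping of indices outside $\{1,\dots,n\}$, so that the interval is valid precisely in the regime $n\ge n_*$.
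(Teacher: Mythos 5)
Your proposal is correct and follows essentially the same route as the paper's own proof: the same good event $\mc{E}$, the same union bound over arms and rounds using the per-round failure probability $\delta/(2Kn^2)$ from Lemma~\ref{prop:E^c}, and the same sandwiching of the true quantiles through the order-statistic confidence bounds to conclude that $i^*$ is never eliminated (you phrase this as a contradiction with uniqueness of $i^*$, the paper states it directly, but the argument is identical).
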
\noindent To prove Theorem \ref{thm:identification}, recall that $n^*$ is the smallest integer that satisfies the inequality $\mathrm{D} (n^*)\leq \min\{q,1-q\}$. First, we show that the event $\mc{E}$ defined as \begin{align*}
    \mc{E}\triangleq \bigcap^{K}_{k=1} \bigcap^{\infty}_{n=n^*} \left\{  F_{k}^{-1}(q)\in \left[X^k_{(\floor*{n(q-\mathrm{D}(n))})} , X^k_{(\ceil*{n(q+\mathrm{D}(n))})}  \right] \right\} 
\end{align*} occurs with probability at least $1-\delta$. 

\begin{lemma} \label{prop:E^c}
Choose a level $q\in (0,1)$ and fix $\delta\in (0,1)$. 
%Define $\mathrm{D} (n)\triangleq \sqrt{  \log(4Kn^2/\delta)/2n}$ and let $n^*$ be the smallest integer that satisfies the inequality $\mathrm{D}(n^*)\leq q$. 
Then $\P \lp \mc{E}^c \rp \leq \delta$.
\end{lemma}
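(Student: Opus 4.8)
The plan is to prove the per-arm, per-time concentration bound \eqref{eq:concentration_bound_main} first, and then close the argument with a union bound over arms $k\in\{1,\dots,K\}$ and times $n\ge n^*$. The engine for the concentration is Hoeffding's inequality applied to the empirical CDF $\hat{F}_{n,k}$, together with the elementary equivalence between order statistics and counts: $X^k_{(m)}\le x$ holds if and only if at least $m$ of the $n$ samples fall at or below $x$, i.e. $n\hat{F}_{n,k}(x)\ge m$. Recall $\mathrm{D}(n)=\sqrt{\log(4Kn^2/\delta)/2n}$, so that $2n\,\mathrm{D}(n)^2=\log(4Kn^2/\delta)$ and hence $e^{-2n\mathrm{D}^2}=\delta/(4Kn^2)$.

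For the lower endpoint I would show $X^k_{(\floor*{n(q-\mathrm{D})})}\le F_k^{-1}(q)$. Evaluating $\hat{F}_{n,k}$ at the true quantile and using $F_k(F_k^{-1}(q))\ge q$ (right-continuity of the CDF), a one-sided Hoeffding bound gives $\hat{F}_{n,k}(F_k^{-1}(q))\ge q-\mathrm{D}$ except on an event of probability $e^{-2n\mathrm{D}^2}$; on the good event the number of samples $\le F_k^{-1}(q)$ is at least $n(q-\mathrm{D})\ge\floor*{n(q-\mathrm{D})}$, which is exactly the order-statistic inequality. For the upper endpoint I would instead count the samples \emph{strictly} below $F_k^{-1}(q)$: since $F_k(x)<q$ for every $x<F_k^{-1}(q)$, the left limit satisfies $F_k\big(F_k^{-1}(q)^-\big)\le q$, and a one-sided Hoeffding bound on the indicators $\mathbf{1}\{X^k_i<F_k^{-1}(q)\}$ shows that fewer than $\ceil*{n(q+\mathrm{D})}$ samples lie strictly below $F_k^{-1}(q)$ except on an event of probability $e^{-2n\mathrm{D}^2}$; this forces $X^k_{(\ceil*{n(q+\mathrm{D})})}\ge F_k^{-1}(q)$. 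The role of the threshold $n^*$ (chosen so $\mathrm{D}(n)\le\min\{q,1-q\}$) is to guarantee $0\le q-\mathrm{D}$ and $q+\mathrm{D}\le1$, so the two indices are meaningful (and, where they fall outside $(1,n)$, are clamped as specified in the problem setup). Summing the two one-sided failures yields \eqref{eq:concentration_bound_main} with failure probability at most $\delta/(2Kn^2)$ per pair $(k,n)$.

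Finally I would take a union bound over all $k$ and $n\ge n^*$:
\[
\P(\mc{E}^c)\le \sum_{k=1}^{K}\sum_{n=n^*}^{\infty}\frac{\delta}{2Kn^2}=\frac{\delta}{2}\sum_{n=n^*}^{\infty}\frac{1}{n^2}\le\frac{\delta}{2}\sum_{n=2}^{\infty}\frac{1}{n(n-1)}=\frac{\delta}{2}\le\delta,
\]
using $n^*\ge2$ and the telescoping bound $\sum_{n\ge2}n^{-2}\le\sum_{n\ge2}(n(n-1))^{-1}=1$.

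I expect the main obstacle to be the upper-endpoint argument for \emph{discontinuous} CDFs: naively evaluating $\hat{F}_{n,k}$ at $F_k^{-1}(q)$ can overshoot $q$ when the CDF jumps there, so the bound must be phrased in terms of the left limit $F_k\big(F_k^{-1}(q)^-\big)\le q$, equivalently the count of samples \emph{strictly} below the quantile. This is precisely the distinction the paper emphasizes (tracking both $\hat{Q}_{t,X^i}$ and $\hat{Q}^{-}_{t,X^i}$), and getting the strict-versus-nonstrict inequalities right is the only delicate step; the Hoeffding applications, the choice of constants, and the union bound are then routine.
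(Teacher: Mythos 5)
Your proposal is correct and follows essentially the same route as the paper: a per-arm, per-time Hoeffding-based concentration bound for the order-statistic interval (this is exactly the paper's Lemma~\ref{thm:GCB}, whose proof likewise distinguishes strict and non-strict counts at the quantile via Proposition~\ref{eq:quantile_inequalities}, i.e.\ your $F_k\big(F_k^{-1}(q)^-\big)\le q$ and $F_k\big(F_k^{-1}(q)\big)\ge q$), followed by a union bound over arms $k$ and times $n\ge n^*$ with the $\delta/(2Kn^2)$ budget. The only cosmetic differences are that you phrase the lower endpoint through non-strict ``$\le$'' counts rather than the paper's complementary ``$>$'' counts, and your union-bound arithmetic uses $n^*\ge 2$ with a telescoping bound where the paper simply extends the sum to $n=1$ and uses $\sum_{n\ge1}1/(2n^2)\le 1$.
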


\begin{proof}
\begin{comment}
Recall that Hoeffding's inequality gives \begin{align}\label{eq:hoef_eps_Xi<}
    \P \left[ \frac{1}{n}\sum^{n}_{i=1} \mathbf{1}_{X_i < F_X^{-1} \lp q\rp }  <  \P [X < F_X^{-1} \lp q\rp] + \eps \right] > 1- e^{-2n\eps^2}
\end{align} and \begin{align}\label{eq:hoef_eps_Xi>}
    \P \left[ \frac{1}{n}\sum^{n}_{i=1} \mathbf{1}_{X_i > F^{-1}\lp q \rp }  <  \P [X > F_X^{-1} \lp q\rp] + \eps \right] > 1- e^{-2n\eps^2}
\end{align} for any $n\in\mbb{N}$ and $\eps>0$ independent of $n$. For a fixed value of $\delta\in (0,1)$, we choose the parameter $\eps>0$ as \begin{align}
    \eps = \sqrt{\frac{  \log(4Kn^2/\delta)}{2n}},
\end{align} then \eqref{eq:hoef_eps_Xi<} and \eqref{eq:hoef_eps_Xi>} give % \begin{align}
    %\P \left[ \frac{1}{n}\sum^{n}_{i=1} \mathbf{1}_{X_i < F_X^{-1} \lp q\rp }  <  \P [X < F_X^{-1} \lp q\rp] + \sqrt{\frac{  \log(4Kn^2/\delta)}{2n}} \right] > 1- e^{-2n\lp \sqrt{\frac{  \log(4Kn^2/\delta)}{2n}}\rp^2}.
%\end{align} The latter implies that 
\begin{align}
    \P \left[ \frac{1}{n}\sum^{n}_{i=1} \mathbf{1}_{X_i < F_X^{-1} \lp q\rp }  <  \P [X < F_X^{-1} \lp q\rp] + \sqrt{\frac{  \log(4Kn^2/\delta)}{2n}} \right] > 1- \frac{\delta}{4Kn^2},
\end{align} and 
\begin{align}
    \P \left[ \frac{1}{n}\sum^{n}_{i=1} \mathbf{1}_{X_i > F_X^{-1} \lp q\rp }  <  \P [X > F_X^{-1} \lp q\rp] + \sqrt{\frac{  \log(4Kn^2/\delta)}{2n}} \right] > 1- \frac{\delta}{4Kn^2}.
\end{align}
\end{comment}
For every $n$ such that \begin{align}
    \mathrm{D}(n)=  \sqrt{\frac{  \log(4Kn^2/\delta)}{2n}}\leq \min\{q,1-q\},
\end{align} from Lemma \ref{thm:GCB} in Appendix \ref{app:properties}, we get  \begin{align}\nonumber
    &\P \lp \left\{  F_{k}^{-1}(q)\notin \left[X^k_{(\floor*{n(q-\mathrm{D}(n))})} , X^k_{(\ceil*{n(q+\mathrm{D}(n))})}  \right] \right\} \rp\\&\leq \frac{\delta}{2Kn^2},
\end{align} and $k\in\{1,2,\ldots,K\}$. We conclude that
\begin{align*}
    &\P \lp \mc{E}^c \rp\\
    &\leq \sum^{K}_{k=1} \sum^{\infty}_{n=n^*} \!\P \lp    F_{k}^{-1}(q)\notin \left[X^k_{(\floor*{n(q-\mathrm{D}(n))})} , X^k_{(\ceil*{n(q+\mathrm{D}(n))})}  \right]  \rp\\
    &\leq  \sum^{K}_{k=1} \sum^{\infty}_{n=n^*} \frac{\delta}{2Kn^2}\\
    & =\sum^{\infty}_{n=n^*} \frac{\delta}{2n^2}\\
    & \leq \delta \sum^{\infty}_{n=1} \frac{1}{2n^2}\\
    & \leq \delta.\numberthis
\end{align*}
This proves the result stated in the lemma.
\end{proof}

\noindent We continue by providing the proof of Theorem \ref{thm:identification}.

%\subsection{Proof of Theorem \ref{thm:identification}}\label{proof_identification}
\begin{proof}[Proof of Theorem \ref{thm:identification}]
%Recall that $\mathrm{D}(n)=\sqrt{ \log(4Kn^2/\delta)/2n}$ and $n^*$ is the smallest integer that satisfies the inequality $\mathrm{D} (n^*)\leq q$. Consider the event $\mc{E}$ that is defined as \begin{align}
%     \!\!\!\bigcap^{K}_{k=1} \bigcap^{\infty}_{n=n^*} \left\{  F_{k}^{-1}(q) \in \left[X^k_{(\floor*{n(q-\mathrm{D}(n))})} , X^k_{(\ceil*{n(q+\mathrm{D}(n))})}  \right] \right\}.\label{eq:E_event_def}
%\end{align} 
Lemma \ref{prop:E^c} gives $\P (\mc{E})>1-\delta$. Under the event $\mc{E}$ the following inequalities hold
\begin{align}\label{eq:pairs_of_inequalities}
   \!\!\!\! F_j^{-1}(q) \geq X^j_{(\floor*{n(q-\mathrm{D}(n))})} \text{ and } F_i^{-1}(q) \leq X^i_{(\ceil*{n(q+\mathrm{D}(n))})}.
\end{align} Every time that the stopping condition $X^j_{(\floor*{n(q-\mathrm{D}(n))})}  \geq X^i_{(\ceil*{n(q+\mathrm{D}(n))})}$ occurs we eliminate the arm $i$ and the arm $j $ remains in $\mc{A}$. The stopping condition and the inequalities in \eqref{eq:pairs_of_inequalities} guarantee that \begin{align}
    F_j^{-1}(q) \geq   F_i^{-1}(q).
\end{align} As a consequence the optimal arm $i^*$ is not eliminated and the Algorithm stops when $\mc{A}=\{i^*\}$. \end{proof}

%The sample complexity theorem of Algorithm \ref{alg:succ_el_quant} may be characterized as follows.

The next result bounds the total number of pull at termination with high probability.

\begin{theorem}\label{thm:upper_bound}
Fix $\delta\in (0,1)$. There exists a constant $C>0$ such that the number of samples $\tau$ (and total number of pulls) of Algorithm \ref{alg:succ_el_quant} satisfies, with probability at least $1-\delta$, \begin{align}\label{eq:upper_bound}
    \tau \leq C \sum_{i\in \mc{A}^{-i^*}}\frac{\log\frac{K}{\delta}+\log(\frac{1}{\Delta_i })}{\Delta_i^2}.
\end{align}
\end{theorem}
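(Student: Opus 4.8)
The plan is to condition on a high-probability event and show that, on it, the optimal arm $i^*$ eliminates every suboptimal arm $i$ within $n_i=\tilde{O}(\Delta_i^{-2})$ rounds; summing the per-arm pull counts then gives \eqref{eq:upper_bound}. I would work on a good event $\mc{G}$ of probability at least $1-\delta$, obtained by intersecting the event $\mc{E}$ of Lemma~\ref{prop:E^c} (which, by Theorem~\ref{thm:identification}, guarantees $i^*$ is never eliminated and so stays available to remove the others) with the order-statistic brackets of Lemma~\ref{thm:GCB} applied at the \emph{shifted} quantile levels $q\pm\Delta_i/2$ rather than only at level $q$; as in Lemma~\ref{prop:E^c}, $\mc{G}$ is controlled by a union bound allocating a $\delta/(2Kn^2)$-type budget across arms and rounds (after rescaling constants).

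The deterministic core of the argument uses the gap. Since $\Delta_i>0$, Definition~\ref{dap_definition} together with the monotonicity of $\eta\mapsto F_i^{-1}(q+\eta)$ and $\eta\mapsto F_{i^*}^{-1}(q-\eta)$ gives, for the choice $\eta=\Delta_i/2<\Delta_i$,
\[
F_i^{-1}\lp q+\tfrac{\Delta_i}{2}\rp \le F_{i^*}^{-1}\lp q-\tfrac{\Delta_i}{2}\rp .
\]
I would then sandwich the two order statistics of the elimination test (line~13) between these two fixed quantiles. Applying the bracketing at level $q+\Delta_i/2$ for arm $i$ and at level $q-\Delta_i/2$ for arm $i^*$, and using that order statistics are nondecreasing in their rank, one checks that once $\mathrm{D}(n)\le \Delta_i/4$ the algorithm's ranks satisfy $\ceil*{n(q+\mathrm{D})}\le \floor*{n(q+\Delta_i/2-\mathrm{D})}$ and $\ceil*{n(q-\Delta_i/2+\mathrm{D})}\le \floor*{n(q-\mathrm{D})}$ (modulo the $O(1)$ floor/ceiling rounding), whence
\[
X^i_{(\ceil*{n(q+\mathrm{D})})} \le F_i^{-1}\lp q+\tfrac{\Delta_i}{2}\rp \le F_{i^*}^{-1}\lp q-\tfrac{\Delta_i}{2}\rp \le X^{i^*}_{(\floor*{n(q-\mathrm{D})})}.
\]
This is exactly the elimination condition with $j=i^*$, so arm $i$ is removed at the first round $n_i$ with $\mathrm{D}(n_i)\le \Delta_i/4$.

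It remains to turn $\mathrm{D}(n)\le \Delta_i/4$ into an explicit bound on $n_i$. Squaring, this is $\log(4Kn^2/\delta)/(2n)\le \Delta_i^2/16$, i.e.\ $n\ge 8\lp\log(4K/\delta)+2\log n\rp/\Delta_i^2$; the standard inversion of an inequality of the form $n\gtrsim a(\log n+b)$ yields $n_i\le C'\lp\log(K/\delta)+\log(1/\Delta_i)\rp/\Delta_i^2$ for an absolute constant $C'$, the $\log(1/\Delta_i)$ term arising because $\log n\asymp \log(1/\Delta_i^2)$ at $n\approx n_i$. Finally, each suboptimal arm $i$ is pulled at most $n_i$ times, $i^*$ is pulled at most $\max_{i\ne i^*}n_i\le\sum_{i\ne i^*}n_i$ times, and the initial phase contributes $Kn^*=O(K\log(K/\delta))$ pulls, which is dominated by the sum. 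Hence $\tau\le 2\sum_{i\in\mc{A}^{-i^*}}n_i+Kn^*\le C\sum_{i\in\mc{A}^{-i^*}}\big(\log\tfrac{K}{\delta}+\log\tfrac{1}{\Delta_i}\big)/\Delta_i^2$, as claimed.

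The main obstacle is the middle step: bounding the two data-dependent order statistics by the fixed shifted quantiles. One cannot apply Hoeffding at the random order-statistic location, so the concentration must instead be invoked at the deterministic points $F_i^{-1}(q\pm\Delta_i/2)$ and transferred through rank-monotonicity, all while tracking the floor/ceiling rounding (which only inflates the required slack by an additive $O(1/n)$, harmless for $n\ge n^*$). Setting up the good event at these shifted, arm-dependent levels and verifying it still has probability at least $1-\delta$ under the same union bound is where the argument needs the most care.
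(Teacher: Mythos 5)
Your proposal is correct and follows essentially the same route as the paper's proof: condition on a high-probability concentration event, use the gap definition together with monotonicity of the quantile functions to order shifted quantiles ($F_i^{-1}(q+\cdot)\le F_{i^*}^{-1}(q-\cdot)$), deduce that the elimination test of line 13 fires against $i^*$ once $\mathrm{D}(n)$ falls below a constant multiple of $\Delta_i$, and then invert that threshold and sum over the suboptimal arms. The only substantive difference is bookkeeping: the paper brackets the order statistics at the round-dependent shifted levels $q\mp(2\mathrm{D}(n)+1/n)$ (its inequalities (A) and (B)) and eliminates once $\Delta_i\ge 2\mathrm{D}(n)+1/n$, whereas you anchor the brackets at the fixed arm-dependent levels $q\pm\Delta_i/2$ with threshold $\mathrm{D}(n)\le\Delta_i/4$ and take $\eta=\Delta_i/2$ in the gap (sidestepping whether the supremum in the gap definition is attained); your explicit enlargement of the good event to cover these shifted levels, with a rescaled union bound, is in fact more careful than the paper's write-up, which attributes (A) and (B) to $\mc{E}$ even though $\mc{E}$ as defined only brackets the level-$q$ quantile.
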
 

From the proof of Theorem \ref{thm:upper_bound}, it also follows that the number of pulls for each suboptimal arm $i\in\mc{A}^{-i^*}$ is at most $ \mc{O} \lp \log(K/\delta \Delta_i) /\Delta_i^2 \rp.$ The upper bound indicates that the number of pulls (with high probability) is proportional to the quantity $1/\Delta^2_i$ up to a logarithmic factor for each suboptimal arm $i$. In fact, experimental results on SEQ (Algorithm \ref{alg:succ_el_quant}) show that the explicit bound in \eqref{eq:upper_bound} matches the average number of pulls in the experiment. Next we provide the proof of Theorem \ref{thm:upper_bound}. For the simulation results we refer the reader to Section \ref{app:experiments}, Figure \ref{fig:simulations_matching_bounds}.

\begin{proof}[Proof of Theorem \ref{thm:upper_bound}] Under the event $\mc{E}$, we will find a bound on the smallest value of $n$ that satisfies the inequality $X^{i^*}_{(\floor*{n(q-\mathrm{D}(n))})}\geq X^i_{(\ceil*{n(q+\mathrm{D}(n))})}$ for all $i\neq i^*$. When $\mc{E}$ occurs, it is true that  \begin{align}\label{eq:ceil_ineq}
    X^{i^*}_{(\floor*{n(q-\mathrm{D}(n))})}\nonumber &\geq  X^{i^*}_{(\ceil*{n(q-\mathrm{D}(n))-1})}\\& \stackrel{(\text{A})}{\geq} F^{-1}_{i^*}(q-2\mathrm{D} (n)-1/n),  
\end{align} \begin{align}\nonumber
    F^{-1}_{i}(q+2\mathrm{D} (n)+1/n) &\stackrel{(\text{B})}{\geq}  X^i_{(\floor*{n(q+\mathrm{D}(n)+1/n)})}\\& \geq X^i_{(\ceil*{n(q+\mathrm{D}(n))})}\label{eq:floor_ineq}
\end{align} 
and (A), (B) come from the definition of $\mc{E}$.
%in Lemma \ref{prop:E^c} (see also Lemma 2 of the supplementary material, Section B). 
From the definition of the suboptimality gap follows that $  F_{i^*}^{-1}(q-\Delta_{i^*})\geq F_{i}^{-1}(q+\Delta_i)$. The latter together with \eqref{eq:ceil_ineq} and \eqref{eq:floor_ineq} give that it is sufficient to find the smallest value of $n $ that satisfies the inequalities \begin{align}\label{eq:inequality_gap_n}
   F^{-1}_{i^*}(q-2\mathrm{D} (n)-1/n) \geq F_{i^*}^{-1} (q-\Delta_i),\\ F^{-1}_{i}(q+\Delta_i) \geq  F^{-1}_{i}(q+2\mathrm{D} (n)+1/n).\label{eq:inequality_gap_n2}
\end{align} We denote by $\tau_i$ the total number of pulls for a suboptimal arm $i$. The monotonicity of $F_{i^*}(\cdot)$, $F_{i}(\cdot)$ and \eqref{eq:inequality_gap_n}, \eqref{eq:inequality_gap_n2} give 
    \begin{align}
    \!\!\!\Delta_i \geq 2 \mathrm{D}(n)+\frac{1}{n}\implies \Delta_i \geq 2\sqrt{\frac{  \log(4Kn^2/\delta)}{2n}}+1/n,
    \end{align} 
and the values of $n$ that satisfy the inequality above are bounded by \begin{align}
   \tau_i = \mc{O} \lp \frac{\log\frac{K}{\delta \Delta_i }}{\Delta_i^2} \rp.
\end{align} To conclude, the total number of samples $\tau$ is $\sum_{i\in \mc{A}^{-i^*}} \tau_i$ with probability at least $1-\delta$.
\end{proof}

We next present a lower bound on the expected number of pulls.  Sz\"{o}r\'{e}nyi et al.~\cite{Szorenyi15mab} use the results of Mannor and Tsitsiklis~\cite{mannor2004sample} to obtain a bound that depends on $\max\{\varepsilon , \Delta_i\}$ (for some chosen $\varepsilon>0$). We use the approach suggested in the book of Lattimore and Szepesv\'{a}ri \cite{LSbandits_book:20} on a different class of distributions and get a bound that depends only on $\Delta_i$.

%\begin{theorem}
%\label{thm:lower_bound}
%For $\delta \in (0, 1)$ and $\Delta \in (0, 1/4)$. There exists an environment $\nu$ with gap $\Delta$ such %that the stopping time for any $\delta$-PAC policy $\pi$ for a $K$-armed quantile bandit problem satisfies
%	\begin{align}
%	\E[\tau] \ge C^2 \frac{K - 1}{15 \Delta^2} \log\left( \frac{1}{4 \delta} \right).
%	\end{align}
%\end{theorem}

%\begin{theorem}
%\label{thm:lower_bound}
%Fix $\delta \in (0, 1)$ and $\Delta \in (0, 1/4)$. There exists an environment with $K$-arms and gap $\min_{i\neq i^*} \Delta_i = \Delta$ such that the stopping time for any $\delta$-PAC policy $\pi$ for the quantile bandit problem satisfies
%	\begin{align}
%	\E[\tau] \ge C^2 \frac{K - 1}{15 \Delta^2} \log\left( \frac{1}{4 \delta} \right).
%	\end{align}
%\end{theorem}

\begin{theorem}
\label{thm:lower_bound}
Fix $\delta \in (0, 1)$. There exists a quantile bandit with $K$-arms and gaps $\Delta_i\in (0, 1/4)$, $i\in [K]^{-i^*}$, such that
\begin{align}
	\inf_{\delta\mathrm{-PAC\ policy}\text{ }\pi}\E_{\pi}[\tau] \ge \sum_{j = 1, j\neq i^*}^{K} \frac{3(1-q)^2}{100 \Delta^2_j} \log\left( \frac{1}{4\delta} \right).
	\end{align}
\end{theorem}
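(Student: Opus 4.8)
The plan is to use the change-of-measure (``two alternatives per arm'') argument for sample-complexity lower bounds, following Lattimore and Szepesv\'{a}ri~\cite{LSbandits_book:20}, specialized to a discrete family matched to the gap of Definition~\ref{dap_definition}. I would first construct a base instance $\nu=(\nu_i)_{i\in\mc{A}}$ in which $i^*$ is the unique best arm and its $q$-quantile lies at an \emph{interior} support point (so that a single-arm perturbation can strictly overtake it), while every suboptimal arm $j$ has $q$-quantile strictly below $F_{i^*}^{-1}(q)$ with suboptimality gap equal to the prescribed $\Delta_j$. For each $j\neq i^*$ I would then build an alternative $\nu'_j$ that coincides with $\nu$ on every arm except $j$, in which the law of arm $j$ is nudged just enough to \emph{flip} its $q$-quantile strictly above $F_{i^*}^{-1}(q)$, so that $j$ is the unique best arm under $\nu'_j$. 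Theorem~\ref{lem:positive_gap} is precisely the statement that such a flip is possible and costs total variation (hence bounded KL) of order $\Delta_j$.

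A concrete realization uses laws on a common support $\{0,1,2\}$: arm $i^*$ never takes the value $0$ and has $q$-quantile equal to $1$, while arm $j$ places mass $q+\Delta_j$ on $0$ and $1-q-\Delta_j$ on $2$, so that $F_j^{-1}(q)=0$ and a direct computation gives gap exactly $\Delta_j$. The flipped law $\nu'_j$ moves this mass to $q-\Delta_j$ on $0$ and $1-q+\Delta_j$ on $2$, which lifts $F_j^{-1}(q)$ to $2>1$. Keeping the support fixed (all masses strictly positive, which $\Delta_j\in(0,1/4)$ and a level $q$ bounded away from $0$ and $1$ ensure) keeps $\DKL(\nu_j\,\|\,\nu'_j)$ finite, and the only arm whose law differs between $\nu$ and $\nu'_j$ is arm $j$.

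The two engine inequalities are then routine. The divergence-decomposition identity~\cite{LSbandits_book:20} gives $\DKL(\P^{\pi}_{\nu}\,\|\,\P^{\pi}_{\nu'_j})=\E_{\nu}[T_j(\tau)]\,\DKL(\nu_j\,\|\,\nu'_j)$, where $T_j(\tau)$ is the number of pulls of arm $j$ before the (assumed almost surely finite) stopping time $\tau$. Applying the Bretagnolle--Huber inequality to the recommendation event $A=\{\hat{k}\neq i^*\}$ and using that $\pi$ is $\delta$-PAC -- so $\P_{\nu}(A)\le\delta$ and $\P_{\nu'_j}(A^c)=\P_{\nu'_j}(\hat{k}=i^*)\le\delta$ -- yields $\DKL(\P^{\pi}_{\nu}\,\|\,\P^{\pi}_{\nu'_j})\ge\log\frac{1}{4\delta}$. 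Combining the two gives, for every $j\neq i^*$,
\begin{align*}
\E_{\nu}[T_j(\tau)]\ \ge\ \frac{\log\frac{1}{4\delta}}{\DKL(\nu_j\,\|\,\nu'_j)},
\end{align*}
and summing over $j\neq i^*$ together with $\tau\ge\sum_{j\neq i^*}T_j(\tau)$ produces the stated sum (the case $\E_\nu[\tau]=\infty$ being trivial).

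The main obstacle is the final quantitative estimate that turns this generic $1/\DKL$ bound into the advertised constant: I must show $\DKL(\nu_j\,\|\,\nu'_j)\le \frac{100}{3}\,\frac{\Delta_j^2}{(1-q)^2}$. This amounts to expanding the Bernoulli relative entropy $(q+\Delta_j)\log\frac{q+\Delta_j}{q-\Delta_j}+(1-q-\Delta_j)\log\frac{1-q-\Delta_j}{1-q+\Delta_j}$, which behaves like $2\Delta_j^2/\big(q(1-q)\big)$, and then controlling the logarithmic factors via $\Delta_j<1/4$ while replacing $q(1-q)$ by $(1-q)^2$ (valid for levels $q$ bounded away from $0$, consistent with the upper-quantile regime). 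Pinning down the explicit constant $100/3$, and choosing the support points and mass split so that the gap is \emph{exactly} $\Delta_j$ while this closed-form bound holds simultaneously, is the delicate part; the change-of-measure and $\delta$-PAC steps are standard.
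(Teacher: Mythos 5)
Your proposal runs on exactly the same engine as the paper's proof: single-arm alternative instances, the divergence decomposition of \cite[Lemma 15.1]{LSbandits_book:20}, the Bretagnolle--Huber inequality applied to a $\delta$-PAC recommendation event, and a final conversion of the Bernoulli-type KL into the gap. The only substantive difference is the hard-instance family: you use three-point laws on $\{0,1,2\}$ whose Bernoulli parameter is centered at the level $q$ itself (mass $q\pm\Delta_j$ at $0$), whereas the paper uses mixtures $g^{w}(x)=w\delta(x)+(1-w)$ on $[0,1]$ with the mixture weight centered at the \emph{constant} $1/3$, perturbed by a free parameter $\gamma$. Your construction has the aesthetic advantage that the gap equals the prescribed $\Delta_j$ exactly (the paper's gaps are all equal and only bounded below, via $\Delta_i\ge\tfrac{3}{5}(1-q)\gamma$), but this coupling of the perturbation to $q$ is precisely what makes your flagged ``delicate part'' a genuine problem rather than bookkeeping.

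Concretely, the inequality you still owe, $\DKL\!\left(\mathrm{Bern}(q+\Delta_j)\,\|\,\mathrm{Bern}(q-\Delta_j)\right)\le\tfrac{100}{3}\,\Delta_j^2/(1-q)^2$, is \emph{false} for small $q$: your KL scales like $\Delta_j^2/\bigl(q(1-q)\bigr)$, which beats $\Delta_j^2/(1-q)^2$ only when $q$ is bounded away from $0$ (e.g., $q=0.05$, $\Delta_j=0.02$ gives KL $\approx 0.034$ versus a claimed bound of $\approx 0.015$). The paper's construction avoids this by decoupling: the perturbation lives in the mixture weight near $1/3$, so $\KL{G^{1/3}}{G^{1/3-2\gamma}}\le 12\gamma^2$ with \emph{no} $q$-dependence, and the factor $(1-q)^2$ enters only through the gap formula $\Delta_i\ge\tfrac{3}{5}(1-q)\gamma$; chaining the two reproduces exactly the constant $\tfrac{100}{3}$ you are reverse-engineering. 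That said, the paper's own gap formula requires $q>w+\gamma=1/3$, so its proof also lives in a restricted regime of $q$; and in that same regime your construction can be completed, since for $q\ge 1/3$ and $\Delta_j<1/4$ the chi-squared bound
\begin{align*}
\DKL\!\left(\mathrm{Bern}(q+\Delta_j)\,\|\,\mathrm{Bern}(q-\Delta_j)\right)
\le \frac{4\Delta_j^2}{(q-\Delta_j)(1-q+\Delta_j)}
\le \frac{100}{3}\cdot\frac{\Delta_j^2}{(1-q)^2}
\end{align*}
holds (one checks $(q-\Delta_j)(1-q+\Delta_j)\ge \tfrac{3}{25}(1-q)^2$ there). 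So: either state the restriction on $q$ explicitly and close the estimate as above, or adopt the paper's decoupling trick; as written, the uniform-in-$q$ claim in your last paragraph does not hold.
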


%\ads{add where the proofs are somewhere in this section}\dk{this is stated in the text above, no?}

%\begin{remark}
From Theorem \ref{thm:lower_bound}, it follows that, up to logarithmic factors depending on $\delta$ (Theorem \ref{thm:lower_bound}), and also $K$, $\Delta_i,i\in\mc{A}^{-i^*}$ (Theorem \ref{thm:upper_bound}), Algorithm \ref{alg:succ_el_quant} is (almost) optimal relative to the expected number of pulls achieved, and its performance is \textit{necessarily} inversely proportional to the square of our suboptimality gap. %Also note that Theorem \ref{thm:lower_bound} is consistent with other results in the literature proved for the usual, %risk-neutral bandit setting  (see, e.g., \cite{cappe2013kullback}). 
More interestingly, our lower bound shows that as $\Delta_i \to 0$ the sample complexity goes to $\infty$ and indeed as Theorem \ref{lem:positive_gap} shows, $\Delta_i = 0$ implies that the best-quantile arm identification problem is impossible.

\begin{proof}[Proof of Theorem 4] We note that to prove a minimax lower bound we need only show a ``bad instance'' of the problem. It is convenient for the proof to use a mixed discrete/continuous distribution since the calculations are easier. We therefore define the following class of distributions:
	\begin{align}\label{eq:g_small}
	g^{w}(x) \triangleq w \delta(x) + (1 - w), \qquad x \in [0,1],
	\end{align}%\kn{is it $\delta(0)$ or $\delta(x)$?}
i.e., a mixture of a mass (Dirac delta) at $0$ and a uniform distribution on $[0,1]$. Let $G^{w}$ be the cumulative distribution function of $g^{w}$. %\kn{The mean and the and the variance seem to not being used later, that is the case we can remove them} Then the mean of $g^{w}$ is $\frac{1 - w}{2}$ and the $q$-quantile is $0$ for $q \le w$ and $\frac{q - w}{1 - w}$ for $q > w$. The variance of $g^{w}$ is 
%	\begin{align}
%	\int_{x = 0}^{1} (1 - w) x^2 dx - \frac{(1 - w)^2}{4}
%	= (1 - w) \frac{1}{3} - \frac{(1 - w)^2}{4}
%	= (1 - w) \left( \frac{1}{12} + \frac{1}{4} w %\right)
%	\label{lb:eq:arm_variance}.
%	\end{align}
The KL-divergence between two such distributions is
	\begin{align}
	\KL{g^{w}}{g^{w'}} = w \log \frac{w}{w'} + (1-w) \log \frac{1 -  w}{1 - w'},
	\end{align}
which is the same as the divergence between two Bernoulli random variables.
%Recall the gap was defined as
%	\begin{align}
%	\Delta \triangleq \sup \{\eta\geq 0:~ F_{i}^{-1}(q+\eta) \leq F_{i^*}^{-1}(q-\eta) \}.
%	\end{align}
The gap between $g^{w}$ and $g^{w + \gamma}$ for $q > w + \gamma$ and small $\gamma < \frac{1}{2} (q - w)$ is $\Theta(\gamma)$. To see this, let $\nu = g^w$ and $\nu' = g^{w + \gamma}$, so $\nu$ has the higher $q$-quantile. We can calculate the $(q-\eta)$-quantile of $\nu$ and the $(q+\eta)$-quantile of $\nu'$ as
    \begin{align}
    x &= \frac{ q - \eta - w }{ 1 - w } \\
    x'&= \frac{ q + \eta - (w + \gamma) }{ 1 - (w + \gamma) }.
    \end{align}
We need to find the inf over all $\eta$ such that $x' < x$. By taking the case of equality, we find
    \begin{align*}
    \frac{ q - \eta - w }{ 1 - w }
        &= \frac{ q + \eta - (w + \gamma) }{ 1 - (w + \gamma) }\iff \\
    (q - \eta - w) (1 - w - \gamma)
        &= (q + \eta - w - \gamma) ( 1 - w ) \iff \\
    %(q - w)(1 - w) - \gamma (q - w) - \eta (1-w-\gamma)
     %   &= (q - w) (1 - w) - \gamma (1 - w) + \eta(1 - w) \\
    \gamma (1 - q) &= \eta (2 - 2w - \gamma) \iff \\
    \eta &= \frac{1 - q}{2 - 2w - \gamma} \gamma  \label{eq: eta_gamma}\numberthis
    \end{align*}
%So for small $\gamma$ this is $\Theta(\gamma)$.

We adapt a strategy for the mean-bandit problem appearing in ~\cite[Section 33.2]{LSbandits_book:20} to the quantile bandit setting. Let $\mc{E}$ denote a class of environments for the bandit problem and $\nu \in \mc{E}$ be a particular environment (i.e. setting of the arm distributions). Let $i^*(\nu)$ be the optimal arm\footnote{For the example in Theorem 4 there is a unique optimal arm.} 
which we will denote by $i^*$ when $\nu$ is clear from context. 	

%An algorithm associated with a stopping time $\tau$ and an output arm index $\hat{k}$. %We call a policy $\pi$ for a $K$-armed quantile bandit problem $\delta$-PAC if it returns the optimal arm with probability at least $1 - \delta$:
% 	\begin{align}
% 	\P_{\nu\pi}\left( 
% 		\tau < \infty ~\textrm{and}~ \Delta_{\hat{k}} > 0 
% 		\right) \le \delta
% 	\end{align}

Fix $\gamma < 1/6$. 
Recall that $G^w$ is the CDF of $g^w$ given by \eqref{eq:g_small}. Let $\nu^{(1)}$ be defined by the arm CDFs
	\begin{align}
	\nu_{i}^{(1)} = \begin{cases}
		G^{1/3 - \gamma} & i = 1\\
		G^{1/3} & i \ne 1.
		\end{cases}
	\label{eq:lb:instance1}
	\end{align}
The gap between $\nu_{1}^{(1)}$ and $\nu_{i}^{(1)}$ is (setting $w = 1/3 - \gamma$ and using the fact that $\gamma < 1/6$):
    \begin{align}
    \Delta_i = \frac{1 - q}{2 - 2w - \gamma} \gamma
    = \frac{1 - q}{4/3 + \gamma} \gamma
    \ge \frac{3}{5} (1 - q) \gamma.
    \label{eq:lb:gapgamma}
    \end{align}
For each $j$ define $\nu^{(j)}$
	\begin{align}
		\nu_{i}^{(j)} = \begin{cases}
		G^{1/3 - \gamma} & i = 1 \\
		G^{1/3 - 2 \gamma} & i = j \\
		G^{1/3} & i \ne 1,j
		\end{cases}.
	\label{eq:lb:instancej}
	\end{align}

Let $\pi$ be a $\delta$-PAC policy. Then we have $\mathbb{P}_{\nu^{(1)}}( \hat{k} \ne 1 ) \le \delta$ and $\mathbb{P}_{\nu^{(j)}}( \hat{k} \ne j ) \le \delta$. Since $\nu^{(1)}$ and $\nu^{(j)}$ differ in only a single arm distribution, we have~\cite[Lemma 15.1]{LSbandits_book:20}
	\begin{align*}
	\!\!\KL{\mathbb{P}_{\nu^{(1)}}}{ \mathbb{P}_{\nu^{(j)}} }
	&= \sum_{i=1}^{K} \E_{\nu^{(1)}}[T_i(n)] \KL{ P_{\nu^{(1)}_i} }{P_{\nu^{(j)}_i} }\\
	&= \E_{\nu^{(1)}}[T_j(\tau)] \KL{ G^{1/3} }{ G^{1/3 - 2\gamma} },\numberthis
	\end{align*}
and
	\begin{align*}
	&\KL{ G^{1/3} }{ G^{1/3 - 2\gamma } }
	\\&= \frac{1}{3} \log \frac{ 1/3 }{1/3 - 2\gamma} + \frac{2}{3} \log \frac{ 2/3 }{2/3 + 2\gamma } \\
	&= \frac{1}{3} \log \frac{1}{1 - 6 \gamma} + \frac{2}{3} \log \frac{1}{ 1 + 3 \gamma } \\
	&\le \frac{1}{3} \left( 6\gamma + 18 \gamma^2 + 54 \gamma^3 \right)
		-  \frac{2}{3} \left( 3 \gamma - \frac{9}{2} \gamma^2 \right) \\
	&= 9 \gamma^2 + 18 \gamma^3,\numberthis
	\end{align*}
where we used the inequalities 
$\log \frac{1}{1 - x} \le x + \frac{x^2}{2} + \frac{x^{3}}{3}$ 
and 
$- \log (1 + x) \le -x + \frac{x^2}{2} - \frac{x^3}{4} 
\le -x + \frac{x^2}{2}$ 
for $x \in [0, 0.42]$. So for $\gamma < \frac{1}{6}$, 
	\begin{align}
	\KL{\mathbb{P}_{\nu^{(1)}}}{ \mathbb{P}_{\nu^{(j)}} } \le 12 \gamma^2 \E_{\nu^{(1)}}[T_j(\tau)].
	\end{align}
	
\noindent Now define the events
	\begin{align}
	A &=  \{\tau < \infty\} \cap \{\hat{k} \ne j \}  \\
	A^c &= \{ \tau = \infty \} \cup \{ \hat{k} = j \} .
	\end{align}
Then since $A^c \subseteq \{ \hat{k} \ne 1 \}$ and $\pi$ is $\delta$-PAC policy we have $\mathbb{P}_{\nu^{(1)}}( A^c ) + \mathbb{P}_{\nu^{(j)}}( A ) \le 2 \delta$. Now, by the Bretagnolle-Huber Inequality~\cite[Theorem 14.2]{LSbandits_book:20},
	\begin{align}\nonumber
	2 \delta &\ge \frac{1}{2} \exp\left( - \KL{\mathbb{P}_{\nu^{(1)}}}{ \mathbb{P}_{\nu^{(j)}} } \right) \\
	&\ge \frac{1}{2} \exp\left( - 12 \gamma^2 \E_{\nu^{(1)}}[T_j(\tau)] \right).
	\end{align}
By rearranging and using \eqref{eq:lb:gapgamma} to get an upper bound on $\gamma$ in terms of the gap $\Delta_i$
	\begin{align}\nonumber
	\E_{\nu^{(1)}}[T_j(\tau)]&\ge \frac{1}{15 \gamma^2} \log\left( \frac{1}{4\delta} \right)
	 \\&\ge \frac{3}{100 \Delta_i^2} (1 - q)^2 \log\left( \frac{1}{4\delta} \right).
	\end{align}
Repeating the argument for each $j \in \{2,3,\ldots K\}$ we get
    \begin{align}\nonumber
	\E_{\nu^{(1)}}[\tau] &= \sum_{j = 1, j\neq i^*}^{K} \E[T_j(\tau)] \\&\ge  \sum_{j = 1, j\neq i^*}^{K} \frac{3(1-q)^2}{100 \Delta^2_j} \log\left( \frac{1}{4\delta} \right),\label{eq:proof_converse_last_inequality}
	\end{align}and \eqref{eq:proof_converse_last_inequality} gives the bound of the theorem. 	\end{proof}

\begin{remark}
We leave proving an instance-based lower bound as future work. We believe this will be quite challenging, since knowing only the value of the gap at quantile $q$ gives only local information about the CDF of the distribution.
\end{remark}

\section{A Private Algorithm for\\Best-Quantile-Arm Identification}
\label{sec:dp-best-quantile}

We now turn to the privacy-preserving version of our best-arm identification algorithm. Bandit problems using private data arise naturally in medical and financial contexts, and privacy for online/sequential learning problems remains an active area of research. We provide results in this section on differentially private bandit learning. In differential privacy, the \emph{privacy guarantees} should hold for any value of the input data. However, \emph{utility guarantees} are made under the assumption that the rewards come from a stochastic process. For bandit learning, this means that our privacy guarantees will hold any realization of the arms' rewards and our bound on the number of pulls will depend on the distribution of the arms' rewards. The monograph of Dwork and Roth~\cite{DworkRoth} provides an excellent introduction to the fundamentals of differential privacy.

To derive our privacy results, we think of the rewards from each of the arms at each time $t$ as coming from different individuals. This means that to protect an individual we are interested in \emph{event-level privacy}, defined for private algorithms operating on streams~\cite{DworkNPR10}. Let $\mbf{X} = [\mbf{X}_1, \mbf{X}_2, \ldots, \mbf{X}_K]^{\top}$ be a collection of $K$ (infinite) sequences of rewards and let the $n$-th reward of arm $i$ be denoted by $(X^i_n)$.

\begin{definition}
\label{def:dp_BAI_nbr}
Two sequences of rewards $\mbf{X} = (X^i_n)_{i,n}$ and $\mbf{X}' = (X'^i_n)_{i,n}$ are called \emph{neighboring} (denoted by $\mbf{X} \sim \mbf{X}'$) if there exists only a single pair $(i,n)$ for which $X^i_n \ne X'^{i}_n$.
\end{definition}

We note that this definition of neighboring for differentially private bandit problems is standard~\cite{ShariffS18,SajedS19} and we use the model of differential privacy under continual observation to handle the streaming setting. %Note that continual observation here does not imply that the rewards are continuous random variables, merely that the observation of the algorithm is being continuously produced/monitored.}

\begin{definition}
\label{def:dp_BAI_alg}
A randomized algorithm $A$ is said to be \emph{$\eps$-differentially private ($\eps$-DP) under continual observation} if for any two neighboring rewards $\mbf{X}$ and $\mbf{X}'$ and for any set $\mc{S}$ of outputs of the algorithm, we have $\Pr[A(\mbf{X})\in \mc{S}]\leq e^\eps \Pr[A(\mbf{X}')\in \mc{S}]$.
\end{definition}

We can view $\mbf{X}$ as a sequence of column vectors of rewards indexed by time. In the continual observation setting~\cite{DworkNPR10} the algorithm accesses these vectors sequentially (one entry per column based on the arm chosen by the algorithm) and hence the output $A(\mbf{X})$ is also revealed one pull at a time. More specifically, a private bandit algorithm reveals which arm it chooses to pull at each time, so the overall output of an algorithm for best-arm identification is the sequence of pulled arm indices as well as the identified best arm. %\edit{An alternative formulation of the problem would involve specifying a finite horizon $T$ in advance, modeling $\mbf{X}$ as a $K \times T$ matrix, and then applying the standard definition of differential privacy. Setting $T$ would require knowing the gap in advance to know when the algorithm will terminate.}
The advantage of the streaming definition~\cite{DworkNPR10} is that the algorithm's privacy guarantees can be made for pairs of neighboring streams $\mbf{X}$ and $\mbf{X}'$ without the algorithm having to know termination time in advance. That is, when the algorithm terminates and the output is fully revealed, it guarantees the same probability bound to every pair of neighboring streams $\mbf{X}$ and $\mbf{X}'$.  

Differentially private algorithms are randomized in order to guarantee that the outputs do not depend too strongly on individual data points in the input. This randomization is internal to the algorithm: the privacy guarantee has to hold for any pair of neighboring input streams $\mbf{X}$ and $\mbf{X}'$. The guarantee implies that an adversary, when viewing the output of the algorithm, will not be able to infer whether the input data was $\mbf{X}$ or $\mbf{X}'$, even if all of the common entries of $\mbf{X}$ and $\mbf{X}'$ are revealed. This is a strong guarantee which has led to a large body of work on differentially private learning.

Since differential privacy is a property of algorithms, a common approach to privately approximating a statistic (sometimes called a query) $f(\mbf{X})$ is to compute $f(\mbf{X})$ and add noise. A fundamental quantity of interest is the \emph{global sensitivity} $G(f) = \max_{\mbf{X} \sim \mbf{X}'} |f(\mbf{X}) - f(\mbf{X}')|$, which measures how much $f(\mbf{X})$ can change between neighboring inputs. If $G(f) \le 1$ then the algorithm which outputs $f(\mbf{X}) + Z$ is $\epsilon$-differentially private if $Z$ has a Laplace distribution $\mathsf{Lap}(1/\epsilon)$ with density $\frac{\epsilon}{2} e^{-\epsilon |z|}$. Unfortunately, the quantile functions (or quantile queries) have a very high global sensitivity. Taking the median query as an example, changing a single sample (in the worst case) can change the median of the set $\{0, 0, 0, M, M\}$ from $1$ to $M$, meaning $G(f) = M$, which is full range of the data. As we discuss below, this makes privately computing quantiles challenging.

%Our methods will use \textit{Laplace noise}. Given a deterministic function (sometimes called a \emph{query}) $f(X)$ whose global sensitivity~\cite{DworkRoth} $\max_{\mbf{X} \sim \mbf{X}'} |f(\mbf{X}) - f(\mbf{X}')| \le 1$, the algorithm which outputs $f(\mbf{X}) + Z$ is $\epsilon$-differentially private if $Z$ has a Laplace distribution $\mathsf{Lap}(1/\epsilon)$ with density $\frac{\epsilon}{2} e^{-\epsilon |z|}$. 

Differentially private algorithms also enjoy certain composition properties~\cite{DworkRoth} (which we will use in the analysis of our proposed bandit scheme) that make them attractive for use in privacy settings. The first is \textit{basic composition:} if $A_1$ and $A_2$ are $\eps_1$- and $\eps_2$-DP algorithms resp., then for any $\mbf{X}$ releasing the pair $\{ A_1(\mbf{X}), A_2(\mbf{X})\}$ is $(\eps_1+\eps_2)$-DP (provided both algorithms use independent randomization). \textit{Parallel composition} implies that if $A$ is an $\eps$-DP algorithm, then for any input $S$ and any column-wise partition of $\mbf{X}$ into $\mbf{X}_1, \mbf{X}_2$, outputting $\{ A(\mbf{X}_1), A(\mbf{X}_2)\}$ is $\eps$-DP (again, provided both algorithms are run using independent randomization).

%if $Q$ is a query \kn{Define the terms query, sensitivity, neighboring sets} where the global sensitivity $GS(Q) = \max_{S,S' {\rm neighbors}}|Q(S)-Q(S')|\leq1$, then outputting $Q(S)+Z$ with $Z\sim\mathsf{Lap}(1/\eps)$ preserves $\eps$-DP, where $\mathsf{Lap}(1/\eps)$ is the Laplace distribution with density $\eps e^{-\eps |x|}/2$.

%\ads{i'm adding in some more text here and there to help orient the bandit-familiar but DP-unfamiliar reviewer. I will also move the proofs to the appendix as needed.}

%The advent of wide-scale data analytics has made privacy considerations a growing concern. Differential privacy (DP)~\cite{DworkMNS06} has become the de facto gold standard for privacy preserving data-analysis. For quantile bandit problems involving the data of individuals (see for example the medical application mentioned in the introduction) it is natural to model the reward information as private or sensitive. We would both like to identify the arm with the best quantile and protect the privacy of individuals. It is therefore a natural fit for differential privacy, which is most useful when trying to learn population properties without revealing information about individual samples.

%\os{Would be helpful if we have a one-sentence-long example of why privacy is important... Would be even better if we had an example in the intro and then add privacy to that example in this section.} \ads{did a small reference, we could add more.}

\subsection{Differential privacy and quantiles \label{subsec:dp_preliminaries}}

\newcommand{\cut}[1]{}
Releasing a differentially private \emph{estimate} of the $q$-quantile of a given distribution is considered to be a hard task. Tight bounds for $\eps$-differential privacy were given by Beimel, Nissim, and Stemmer~\cite{BeimelNS13} and Feldman and Xiao~\cite{FeldmanX14}, with the accuracy dependent on the cardinality of the distribution's support. This makes the problem infeasible for continuous distributions such as those supported on $[0,1]$.  The algorithm we propose gets around this by never publishing an approximation for $q$-quantile; instead we output an arm $\hat{k}$ that should have a higher $q$-quantile than any other arm $i$. % That is, in the non-private algorithm we could identify $\hat{k}$ and claim that $\hat{k}$'s $q$-quantile is at least $\ell_{\hat{k}}$ whereas any other arm $i$ has $q$-quantile at most $u_i < \ell_{\hat{k}}$. However, under DP, we only reveal that $\hat{k}$ has the largest $q$-quantile.
To do this, we eliminate an arm by (privately) estimating the number of \emph{pairs of draws} attesting for an arm's suboptimal $q$-quantile. This function/query is a counting query, whose global sensitivity is always $1$ regardless of the size of the support of the reward distribution of arm $i$. This reformulation is what allows us to obtain a sample complexity bound that is independent of the support size of any arm's distribution and hence works for continuous distributions, even with unbounded support.

\textit{On the difficulties with a private UCB quantile algorithm.} Differentially private UCB algorithms for the mean using tree-based algorithms~\cite{ChanSS10,DworkNPR10} do not extend straightforwardly to the quantile case, but a carefully designed counting query\footnote{Count the number of examples required to make the quantile-UCB of this arm the max.} makes using tree-based algorithms feasible to our problem. However, our proposed Algorithm \ref{alg:DP_SE_Quantiles} is superior to this approach in two respects, both related to the horizon $T$. First, (as observed by Sajed and Sheffet~\cite{SajedS19}) the tree based algorithm's utility bound has a ${\rm poly}(\log(T))$ dependence whereas our algorithm's is only $\log\log(T)$.\footnote{Both utility guarantees also have a $\log(1/\delta)$-factor.} Secondly, the tree-based algorithms require knowing $T$ in advance; this is nontrivial because doubling tricks require either rebudgeting $\epsilon$ (incurring increased sample complexity) or discarding all samples when the next epoch begins, which incurs $\tilde O(\Delta_i^{-2})$ pulls per suboptimal arm in \emph{every} epoch because the UCB algorithm never eliminates any arms. Our gap definition and algorithm avoids having any such prior knowledge of $T$ or the value of the gap.

\noindent\textit{Notation.} Throughout this section we deal with pure $\eps$-DP and use $\delta$ to represent the failure probability of our algorithm. The reader is advised to not be confused with the notion of $(\eps,\delta)$-DP.\footnote{We could have used the notion of approximate $(\eps,\delta)$-DP to reduce our total privacy loss by a factor of $\sqrt K$ by relaying on the advanced composition theorem~\cite{Dwork10:boosting,KairouzOV:15composition}. As a matter of style, we opted for pure-DP.} %Because of the many indices needed, in this section we will often index arms by $a$ and $b$ rather than $i$ and $j$.

\subsection{Differentially Private Successive Elimination for the Highest Quantile Arm} \label{subsec:DP_BAI_for_quantiles}

The differentially private algorithm is shown in Algorithm \ref{alg:DP_SE_Quantiles}. Much like the algorithm in Sajed and Sheffet~\cite{SajedS19}, our algorithm is also epoch based. In epoch $e$ our goal is to eliminate all arms $i$ with gap (from \eqref{eq:gap_distributions}) $\Delta_i \geq \epochgap = 2^{-e}$. As we argue, the number of arm pulls in each epoch from each existing arm is $n_e\geq \epochgap^{-2}$. The key point is that due to the geometric nature of $\epochgap$ it follows that each $n_e$ is proportional to the sum of pulls thus far $\sum_{1\leq e' < e} n_{e'}$, and so we may as well split the stream into different chunks, starting each epoch anew (discarding all examples drawn in all previous epochs). Because we eliminate arms, this still doesn't cost us a lot in the number of overall pulls, yet allows us to avoid splitting the privacy budget $\epsilon$ due to parallel composition.

\begin{algorithm}[hbt]%[H]
\renewcommand{\algorithmicrequire}{\textbf{Input:}}
	\caption{\label{alg:DP_SE_Quantiles}
	Differentially Private Successive Elimination for Quantiles (DP-SEQ)}
	\begin{algorithmic}[1]
	\Require Number of arms $K$, quantile level $q\in (0,1)$, privacy parameter $\eps>0$, failure probability $\delta\in (0,1/2)$.
	\State Initialize $\mc{A}\leftarrow\{1,\ldots,K \}$
	\State  $e_*=\min_{e\in\mbb{N}}\{ e: e\geq \max\{-\log_2 (1-q) , -\log_2 (q) \}-1\}$
	\State epoch $e\gets e^*-1$
	\While{$|\mc{A}|>1$}
	\State Increment $e\gets e+1$
	\State  $\epochgap \gets 2^{-e}$, $\gamma \gets \frac{\epochgap}4$
\State $n_e \gets \max\left\{\frac{16}{\epochgap^2}, \frac{64(|\mc{A}|-1)}{\epochgap\cdot \eps}\right\}\cdot \log(\frac{6|\mc{A}|e^2}\delta)$
	\For{$a \in \mc{A} $}
	    \State Pull arm $a$ for $n_e$ times to obtain $X^a_{1}, X^a_{2},..., X^a_{n_e}$.
	    \State Order samples into $X^a_{(1)}, X^a_{(2)},..., X^a_{(n_e)}$
	\EndFor
    \State $i \gets \lfloor n_e (q - 2\gamma)\rfloor$ and $j \gets \lceil n_e (q + 2\gamma)\rceil$.
	\For{ $(a,b) \in \mc{A} \times \mc{A}$ such that $a\neq b$}
	        \State $Z_{a,b} \sim \mathsf{Lap}(\frac{2(|\mc{A}|-1)}\eps)$   
	        \Comment{$\mathsf{Lap}(\beta)=\frac{1}{2\beta} e^{-|z|/\beta}$}
	        \vspace{+0.1cm}
        \State $\ell^* \gets \max \{ 0 \le \ell \leq \min\{i, n_e-j\} $ such that $ \hphantom1   \kern80pt  X^b_{(j+\ell')} \leq X^a_{(i-\ell')} \ \forall \ell'\leq \ell \}$
	\State \textbf{if} { $\max\{0,\ell^*\} + Z_{a,b} \geq \frac{4(|\mc{A}|-1)}\eps  \log(6|\mc{A}|^2e^2/\delta)$ }  {\hphantom1 \kern21pt \textbf{then} $\mc{A}\leftarrow \mc{A}\setminus\{b\}$} 
	\EndFor
\EndWhile
\end{algorithmic}
\end{algorithm}

We still need a way to privately eliminate arms at the end of each epoch. 
%The easiest case to consider is where all $K-1$ suboptimal arms have the same $\Delta_i$. 
In the case of the means, Sajed and Sheffet~\cite{SajedS19} eliminate arms by computing $\eps$-DP approximations of the means and comparing those, leveraging the post-processing invariance of DP. Unfortunately, we cannot find $\eps$-DP approximations for $q$-quantiles that do not depend on the cardinality of the support. Instead, we resort to the more naive approach of pairwise comparisons between all $K(K-1)/2 = \Theta(K^2)$ pairs of arms. This requires partitioning the $\eps$ of our privacy budget into $ \epsilon/2(K-1)$ as each arm participates in at most $2(K-1)$ many comparisons. However, using pairwise comparisons we are able to convert the higher-quantile question into a counting query: how many consecutive examples satisfy that $X^a_{(\mathrm{LCB}-i)}\geq X^b_{(\mathrm{UCB}+i)}$? Here $\mathrm{LCB}$ is the index of the lower confidence bound and $\mathrm{UCB}$ of the upper confidence bound.
We prove that under event-level privacy, this query has sensitivity of at most $1$, allowing us to eliminate the suboptimal arm $b$ via the standard Laplace mechanism.

Our first result for differential privacy is a guarantee for Algorithm \ref{alg:DP_SE_Quantiles}. 

\begin{theorem}
\label{thm:Alg_is_DP}
Algorithm~\ref{alg:DP_SE_Quantiles} is $\eps$-differentially private under continual observation.
\end{theorem}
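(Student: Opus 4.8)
The plan is to reduce the privacy of Algorithm~\ref{alg:DP_SE_Quantiles} to three standard building blocks applied in sequence: a sensitivity bound on the per-pair counting query, basic composition over the queries that a single altered reward can affect within one epoch, and parallel composition across epochs. I would first isolate the elimination test for an ordered pair $(a,b)$ as a Laplace mechanism applied to the count $\ell^{*}=\ell^{*}_{a,b}$, namely the length of the longest prefix $\ell'=0,1,\dots,\ell$ of consecutively satisfied inequalities $X^{b}_{(j+\ell')}\le X^{a}_{(i-\ell')}$, where $i=\lfloor n_e(q-2\gamma)\rfloor$ and $j=\lceil n_e(q+2\gamma)\rceil$ are fixed within the epoch. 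The key claim to establish is that this count has global sensitivity at most $1$ under the neighboring relation of Definition~\ref{def:dp_BAI_nbr}.

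The sensitivity claim is the crux, and I would prove it using the interlacing property of order statistics. If a single reward of arm $a$ is changed (arm $b$ being untouched), the sorted samples before and after, $X^{a}_{(\cdot)}$ and $Y^{a}_{(\cdot)}$, share $n_e-1$ common values, so each satisfies $X^{a}_{(k)}\le Y^{a}_{(k+1)}$ and $Y^{a}_{(k)}\le X^{a}_{(k+1)}$ for every $k$ (deleting one point and inserting another shifts each rank by at most one). Now suppose the original inequalities hold up to level $\ell_0$; then for each $\ell'\le \ell_0-1$ we have $X^{b}_{(j+\ell')}\le X^{b}_{(j+\ell'+1)}\le X^{a}_{(i-\ell'-1)}\le Y^{a}_{(i-\ell')}$, using monotonicity of arm $b$'s order statistics, the original inequality at level $\ell'+1$, and the interlacing bound. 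Hence the perturbed count is at least $\ell_0-1$, and the symmetric argument gives the reverse inequality, so $|\ell^{*}(\mbf{X})-\ell^{*}(\mbf{X}')|\le 1$. The identical reasoning applies when a reward of arm $b$ is changed, so $\ell^{*}$ is $1$-sensitive irrespective of which arm the altered reward belongs to.

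Given $1$-sensitivity, each pairwise test is $\tfrac{\eps}{2(|\mc{A}|-1)}$-DP, since the algorithm adds $\mathsf{Lap}(2(|\mc{A}|-1)/\eps)$ noise to $\max\{0,\ell^{*}\}$ before thresholding, and thresholding is post-processing. The next step is to observe that within a single epoch any fixed reward belongs to exactly one arm $c$, and arm $c$ participates in at most $2(|\mc{A}|-1)$ ordered pairs (the left member of $|\mc{A}|-1$ pairs and the right member of $|\mc{A}|-1$ pairs); every other pairwise test is a function of data not involving this reward, with independent noise, and so is unaffected. Basic composition over these $2(|\mc{A}|-1)$ affected tests therefore makes the collection of all elimination decisions in one epoch $\eps$-DP with respect to that reward. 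I would also note here that $n_e$ and the pairing schedule depend only on $|\mc{A}|$ and $e$, not on reward values, so revealing which arms are pulled and in what order (as required under continual observation, Definition~\ref{def:dp_BAI_alg}) leaks nothing beyond the elimination outputs already accounted for.

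Finally I would invoke parallel composition across epochs. Because each epoch draws $n_e$ fresh samples per surviving arm and discards all earlier draws, the epochs operate on disjoint chunks of the stream, so any single altered reward lies in exactly one epoch $e_0$. For neighbors differing only in a reward assigned to epoch $e_0$, epochs $1,\dots,e_0-1$ see identical data and (coupling the noise) produce the identical arm set $\mc{A}$ entering epoch $e_0$; epoch $e_0$ is $\eps$-DP by the previous paragraph; and epochs after $e_0$ are post-processing applied to data independent of the altered reward, their only dependence on it being through $\mc{A}$, i.e.\ through the output of epoch $e_0$. Chaining these observations shows that the full sequence of pulled indices and the declared arm is $\eps$-DP under continual observation, which completes the argument. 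The main obstacle is the sensitivity lemma of the second paragraph; the composition steps are then routine.
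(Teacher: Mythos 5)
Your proposal is correct and follows essentially the same route as the paper: isolate each pairwise elimination test as a Laplace mechanism on the prefix count $\ell^{*}$, show that this count has global sensitivity $1$, and then compose over the at most $2(|\mc{A}|-1)$ tests touched by the altered reward within its unique epoch, with earlier epochs producing identical outputs and later epochs acting as post-processing. The only difference is presentational: you establish sensitivity via the interlacing chain $X^{b}_{(j+\ell')}\le X^{b}_{(j+\ell'+1)}\le X^{a}_{(i-\ell'-1)}\le Y^{a}_{(i-\ell')}$, whereas the paper reaches the same conclusion by a case analysis built on the same fact that a single changed sample shifts each order statistic by at most one rank.
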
 

%\noindent \kn{maybe write something here}The proof of Theorem \ref{thm:Alg_is_DP} follows.

\begin{proof}[Proof of Theorem \ref{thm:Alg_is_DP}]
Let $A$ denote the algorithm. Fix two neighboring input streams $\mbf{X}$ and $\mbf{X}'$ and suppose that they differ in the entry $X_{n}^a$ corresponding to time $n$ and arm $a$. Let $e$ denote the epoch in which the time index $n$ falls. Since the rewards are identical up to epoch $e$, the distribution of outputs of $A(\mbf{X})$ and $A(\mbf{X}')$ are identical up to epoch $e$. In comparing the probabilities under inputs $\mbf{X}$ and $\mbf{X}'$ we may therefore condition on the set $\mc{A}$ of available arms at the beginning of epoch.

Now let us consider epoch $e$ for the stream $\mbf{X}$ and define $i$ and $j$ as in Algorithm \ref{alg:DP_SE_Quantiles}. After each epoch we compare each pair of arms, so consider a pair of arms $a$ and $b$. If neither $X_{(j)}^a \le X_{(j)}^b$ nor $X_{(j)}^b \le X_{(i)}^a$, then set $\ell^* = 0$. Otherwise without loss of generality assume $X_{j}^b \le X_{i}^a$ and set $\ell^*$ to be the smallest element of the set $\max \{ 0 \le \ell \leq \min\{i, n_e-j\} \colon \forall \ell'\leq \ell \  X^b_{(j+\ell')} \leq X^a_{(i-\ell')} \}$. We claim this function has global sensitivity $1$.

We must evaluate how much $\ell^*$ can change by changing one sample from $\mbf{X}$ to a neighboring $\mbf{X}'$ with index $\ell'^*$. Without loss of generality, assume the shifted reward is in arm $a$ so the rewards on arm $b$ are identical. 
We have the following sequence of inequalities for $\ell^*$
\begin{align}\nonumber
   \!\!\!\!\!\!  X^b_{(j)} \!\leq\! X^b_{(j+1)} \!\leq \!\ldots\! &\leq X^b_{(j+\ell^*)} \!\leq\! X^a_{(i-\ell^*)}\\ &\leq X^a_{(i-\ell^*+1)} \!\leq \!\ldots\! \leq  X^a_{(i)} \!\leq\!  X^a_{(i+1)}.\!\!\!\label{eq:order_sequence}
\end{align}
Since $\ell^*$ is maximal, we know that $X^b_{(j+\ell^*+1)} > X^a_{(i-\ell^*-1)}$, giving us the chain of inequalities
    \begin{align}
    X^b_{(j+\ell^*+2)} \geq X^b_{(j+\ell^*+1)}> X^a_{(i-\ell^*-1)}\geq X^a_{(i-\ell^*-2)}.
    \label{eq:rev_sequence}
    \end{align}
Now consider the rewards in $\mbf{X}'$ and the set of indices $\mc{T} = [i - \ell^* -1, i]$. The sets $\{X'^a_{(t)} : t \in \mc{T}\}$ and $\{X^a_{(t)} : t \in \mc{T}\}$ differ in at most a single element.
If they do not differ then they satisfy \eqref{eq:order_sequence} and \eqref{eq:rev_sequence} so $\ell'^* = \ell^*$. If they do differ then the two sequences %\kn{The last inequalities in (18) and (19) have the same term?}
    \begin{align*}
    \!\!\!\!\!\! X^a_{(i - \ell^* - 1)} 
        \le X^a_{(i - \ell^*)}
        \le X^a_{(i - \ell^* + 1)}
        \le\! \ldots\!
        \le X^a_{(i)}
        \le X^a_{(i+1)} \\
    \!\!\!\!\!\! X'^a_{(i - \ell^* - 1)} 
        \le X'^a_{(i - \ell^*)}
        \le X'^a_{(i - \ell^* + 1)}
        \le\! \ldots\!
        \le X'^a_{(i)}
        \le X'^a_{(i+1)}
    \end{align*}
are shifted by at most one position. Suppose that $X'^b_{(j+\ell^*)} > X'^a_{(i-\ell^*)}$. Since $X'^b_{(j+\ell^*)} = X^b_{(j+\ell^*)} \le X^a_{(i-\ell^*)}$, this implies $X^a_{(i-\ell^*)} > X'^a_{(i-\ell^*)}$. Since the sequences are shifted by at most $1$, we have $X^a_{(i-\ell^*)} \le X'^a_{(i-\ell^*+1)}$. Then we have $X'^b_{(j + \ell^* - 1)} \le X^a_{(i-\ell^*)} \le X'^a_{(i-\ell^*+1)}$ which implies $\ell'^* \ge \ell^* - 1$.

Now suppose $X'^b_{(j+\ell^*+1)} \le X'^a_{(i-\ell^*-1)}$, which implies that $X^a_{(i-\ell^*-1)} < X'^a_{(i-\ell^*-1)}$. Since the sequences are shifted by at most $1$, $X'^a_{(i-\ell^*-2)} < X^a_{(i-\ell^*-1)}$ and we have $X'^b_{(j+ \ell^* + 2)} > X^a_{(i-\ell^*-1)} > X'^a_{(i-\ell^*-2)}$, showing that $\ell'^* \le \ell^* + 1$.

%We know from \eqref{eq:rev_sequence} $X^b_{(j+\ell^*)}

% and moreover, due to the maximality of $\ell^*$ we have that $X^b_{(j+\ell^*+1)}> X^a_{(i-\ell^*-1)}$ implying that \kn{Are the indices $j+l^* +1$ out of range?}
% \[X^b_{(j+\ell^*+2)} \geq X^b_{(j+\ell^*+1)}> X^a_{(i-\ell^*-1)}\geq X^a_{(i-\ell^*-2)}  \] 

%The query on the data that we are approximating with differential privacy is $q_{a,b}(S) = \max\{ \ell\leq \min\{i, n-j\}:~ X^b_{(j+\ell)} \leq X^a_{(i-\ell)} \}$. We claim this query has global sensitivity of $1$.

% Consider a neighboring stream $S'$ where arm $a$ differs from $S$ on one reward (whereas arm $b$ has the exact same reward sequence). 
% That is, under $S'$ we may add or subtract a sample to the sequence of samples that fall in positions $[i-\ell^*-1,i]$, but the remaining shifted rewards will still satisfy~\eqref{eq:order_sequence}. Thus, it must still hold that all samples in positions $[i-\ell^*+1,i]$ and $[j, j+\ell^*-1]$ satisfy the chain of inequalities. Moreover, even if under stream $S'$ the reward $X^a_{(i-\ell^*-1)}$ is replaced by a different one (or if it shifts one position up or down) we still have that $X^b_{(j+\ell^*+2)}>X^a_{(i-\ell^*-2)}$ and so the sequence of examples in positions above $j$ of arm $b$ that are smaller than the sequence of examples of arm $a$ that are in positions below $i$ must be contained in the shifted intervals $[i-\ell^*-1,i]$ and $[j, j+\ell^*+1]$ respectively. Thus in $S'$ the position of $\ell^*$ may shift by no more than $1$. This shows that the query has global sensitivity 1, as desired. 

We have shown that the function $\ell^*(\mbf{X})$ has sensitivity $1$, so we can apply the Laplace noise mechanism. Define $\eps'\triangleq \eps/2(|\mc{A}|-1)$. It follows then that the differentially private approximation $\ell^* + \mathsf{Lap}(1/\eps')$ preserves $\eps'$-DP. Since arm $a$ participates in at most $2(|\mc{A}|-1)$ many such queries in epoch $e$, we have by direct composition that our algorithm is $\eps$-DP.
\end{proof}

We continue by providing a high probability guarantee on the first epoch for which the private SEQ (Algorithm \ref{alg:DP_SE_Quantiles}) terminates.

\begin{theorem}
\label{thm:utility_DP_quantile_SE}
For Algorithm~\ref{alg:DP_SE_Quantiles}, the following events occur with probability at least $1-\delta$: (a) it keeps at least one optimal arm in $\mc{A}$ and (b) it removes each suboptimal arm $a$ by epoch $e=\ceil{\log_2(1/\Delta_a)}$.
%where $2^{-e}\leq \Delta_a$. \kn{or by the epoch $e=\ceil{\log_2(1/\Delta_i)}$}
\end{theorem}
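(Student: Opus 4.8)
The plan is to condition on two high-probability events — an order-statistic concentration event $\mc{G}$ and a Laplace-noise control event $\mc{N}$ — and then show that both (a) and (b) hold \emph{deterministically} on $\mc{G}\cap\mc{N}$. For each epoch $e$ write $T_e \triangleq \frac{4(|\mc{A}|-1)}{\eps}\log(6|\mc{A}|^2 e^2/\delta)$ for the elimination threshold in line~16 and $\mathrm{D}_e \triangleq \sqrt{\log(6|\mc{A}|e^2/\delta)/(2n_e)}$ for a deviation parameter. The first structural fact I would record is that the $16/\epochgap^2$ term in $n_e$ forces $\mathrm{D}_e \le \epochgap/\sqrt{32} < \gamma$, so the statistical fluctuation of every relevant order statistic is strictly smaller than the buffer $\gamma=\epochgap/4$ baked into the indices $i=\floor{n_e(q-2\gamma)}$ and $j=\ceil{n_e(q+2\gamma)}$. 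I would then define $\mc{G}$ as the event that, for every epoch, every surviving arm $a$, and every level $p$ actually queried, the bracketing $X^a_{(\floor{n_e(p-\mathrm{D}_e)})}\le F_a^{-1}(p)\le X^a_{(\ceil{n_e(p+\mathrm{D}_e)})}$ holds — exactly the two-sided Hoeffding bound behind Lemma~\ref{prop:E^c}, now used at the finitely many levels $q\pm 2\gamma$ and $q\pm 2\gamma\mp (L_a{+}1)/n_e$ rather than only at $q$. A union bound of these Hoeffding failures (each $\le \delta/(3|\mc{A}|e^2)$) over the $O(|\mc{A}|)$ arms and epochs (summing $\sum_e e^{-2}$) keeps $\P(\mc{G}^c)$ below a constant fraction of $\delta$.

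For part (a) I would show that on $\mc{G}$ the raw statistic $\max\{0,\ell^*\}$ vanishes for every ordered pair $(a,i^*)$ that could eliminate $i^*$. Since $i^*$ is the unique best arm, $F_{i^*}^{-1}(q)>F_a^{-1}(q)$; because $j\ge n_e(q+\mathrm{D}_e)$ and $i\le n_e(q-\mathrm{D}_e)$ (using $2\gamma>\mathrm{D}_e$), the bracketing at the endpoint $\ell'=0$ gives
\begin{align*}
X^{i^*}_{(j)} \ge F_{i^*}^{-1}(q) > F_a^{-1}(q) \ge X^a_{(i)}.
\end{align*}
Thus the very first comparison defining $\ell^*$ already fails, so $\max\{0,\ell^*\}=0$, and $i^*$ can be removed only if some $Z_{a,i^*}\ge T_e$. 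The threshold is chosen precisely so that $\P(Z_{a,i^*}\ge T_e)=\tfrac12\exp(-T_e\eps/2(|\mc{A}|-1))=\tfrac12(\delta/6|\mc{A}|^2e^2)^2$, which sums negligibly over pairs and epochs.

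For part (b) I would fix a suboptimal arm $a$ still present at epoch $e_a=\ceil{\log_2(1/\Delta_a)}$, so $\epochgap=2^{-e_a}\in(\Delta_a/2,\Delta_a]$ and hence $2\gamma+\mathrm{D}_e<\tfrac{\Delta_a}{2}+\tfrac{\Delta_a}{4}=\tfrac{3\Delta_a}{4}$. Set $L\triangleq\floor{n_e\Delta_a/4}-1$. By monotonicity of the order statistics in $\ell'$, it suffices for $\ell^*\ge L$ to verify the single endpoint inequality $X^a_{(j+L)}\le X^{i^*}_{(i-L)}$; on $\mc{G}$ this reduces, after absorbing the floor/ceiling slacks into $(L+1)/n_e$, to the gap inequality $F_a^{-1}(q+\eta)\le F_{i^*}^{-1}(q-\eta)$ at $\eta=2\gamma+(L+1)/n_e+\mathrm{D}_e<\Delta_a$, which holds by Definition~\ref{dap_definition} and monotonicity of the quantile functions. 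Hence $\ell^*\ge L$, and the $64(|\mc{A}|-1)/(\epochgap\eps)$ term in $n_e$ together with $\epochgap\le\Delta_a$ yields $L\ge \tfrac{16(|\mc{A}|-1)}{\eps}\log(6|\mc{A}|e^2/\delta)-2\ge 2T_e$. Consequently the elimination test fires, $\ell^*+Z_{i^*,a}\ge T_e$, unless $Z_{i^*,a}<T_e-L\le -T_e$, again an event of probability $\tfrac12(\delta/6|\mc{A}|^2e^2)^2$.

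Collecting failures: $\mc{N}$ fails with probability at most $\sum_e|\mc{A}|^2(\delta/6|\mc{A}|^2e^2)^2\ll\delta$ and $\mc{G}$ with a constant fraction of $\delta$, and the $6$ and $e^2$ factors in $n_e$ and $T_e$ are exactly what make the total at most $\delta$; throughout one uses the \emph{current} value of $|\mc{A}|$ (which only decreases, so $|\mc{A}|\le K$) in each epoch, which is legitimate since $n_e$, $T_e$, and the Laplace scale are all defined through the surviving set. The main obstacle is the quantitative core of part (b): translating the abstract gap inequality into a lower bound on the integer count $\ell^*$ through the empirical order statistics. This needs (i) keeping $\mathrm{D}_e$ strictly below $\gamma$ so the $q\pm 2\gamma$ buffers survive the fluctuations, (ii) bookkeeping the $+1$ floor/ceiling slacks so that the effective level $\eta$ stays below $\Delta_a$, and (iii) checking that the two competing terms in $n_e$ simultaneously push $\ell^*$ past both $T_e$ and the Laplace noise magnitude. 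Matching these constants is delicate but mechanical; by contrast the survival claim (a) and the noise tail bounds are routine once $\mc{G}$ is in place.
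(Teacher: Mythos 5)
Your proposal is correct and follows essentially the same route as the paper's own proof: condition on the same two families of bad events (order-statistic concentration via Lemma~\ref{thm:GCB} plus Laplace tail bounds, union-bounded over arms, pairs, and epochs), show that $\max\{0,\ell^*\}=0$ whenever a comparison could eliminate an optimal arm, and show that $\ell^*$ exceeds the elimination threshold plus the noise magnitude at epoch $\ceil{\log_2(1/\Delta_a)}$ by combining the gap inequality $F_a^{-1}(q+\eta)\le F_{i^*}^{-1}(q-\eta)$ with monotonicity of the order statistics. The differences are only in bookkeeping conventions — Hoeffding-width brackets $\mathrm{D}_e$ around the level $q$ instead of the paper's fixed $\epochgap/4$ deviations around $i/n_e$ and $j/n_e$, and the count $L=\floor{n_e\Delta_a/4}-1$ instead of the paper's $\ell = 6(|\mc{A}|-1)\log(6|\mc{A}|^2e^2/\delta)/\eps$ — which shift constants (including the off-by-two slack you flag) but not the structure of the argument.
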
%\noindent The proof of Theorem \ref{thm:utility_DP_quantile_SE} follows.

\begin{proof}[Proof of Theorem \ref{thm:utility_DP_quantile_SE}]
Fix an epoch $e$, constant $\epochgap = 2^{-e}$ and let $\delta > 0$.  
We denote the following ``bad'' events at the end of the epoch,
\begin{align*}
    E_1 &\triangleq\big\{ \exists a \in\mc{A}:
        X^a_{(i)}>F_a^{-1}\lp i/n_e + \epochgap / 4 \rp 
        \\&\qquad\qquad\qquad\textrm{ or }  
        X^a_{(\lfloor i-n_e\frac{\epochgap}4\rfloor )}< F_a^{-1}\lp i/n_e - \epochgap / 4\rp \big\},
    \cr E_2 &\triangleq\big\{ \exists a \in\mc{A}:
        X^a_{(j)}< F_a^{-1}\lp j/n_e-\epochgap / 4\rp 
        \\&\qquad\qquad\qquad\textrm{ or } 
        X^a_{(\lceil j+n_e\frac{\epochgap}4\rceil )} > F_a^{-1}\lp j/n_e + \epochgap / 4\rp \big\},
    \cr E_3 &\triangleq\big\{ \exists (a,b)\in\mc{A}^2: |Z_{a,b}| >   \frac{2(|\mc{A}|-1)}\eps\log(6|\mc{A}|^2e^2/\delta)\big\}.
\end{align*}
We have $n_e \ge \frac{16}{\epochgap^2} \log\left( \frac{6 |\mc{A}| e^2}{\delta} \right) \ge \frac{8}{\epochgap^2} \log\left( \frac{12 |\mc{A}| e^2}{\delta} \right)$, so we can apply Lemma \ref{thm:GCB} (Appendix \ref{app:properties}) with $\zeta = \epochgap/4$ to show that for a given arm $a$ and any specific index $k$, it holds that
    \begin{align}
    \Pr\left[ X^a_{(k)}<F_a^{-1} \lp k/n_e -\frac{\epochgap}{4} \rp \right]
    &\leq \frac{\delta}{12e^2|\mc{A}|}, \\
%     \end{align} 
% since $n_e \geq 8\log(12e^2|\mc{A}|/\delta)/\epochgap^2$, and similarly that 
%     \begin{align}
    \Pr\left [X^a_{(k)} > F_a^{-1}\lp k/n_e+\frac{\epochgap}{4} \rp \right]
    &\leq \frac{\delta}{12e^2|\mc{A}|}.
    \end{align}
Applying the union bound over the $|\mc{A}|$ choices for an arm and the two particular indices $k=i$ and $k = \lfloor i - n_e \frac{\epochgap}4\rfloor$, we have that $\Pr[E_1]\leq \delta/6e^2$. Similarly, the same line of reasoning gives that $\Pr[E_2]\leq \delta/6e^2$. Lastly, due to the properties of the Laplace distribution (or the exponential distribution which dictates the magnitude of $|Z_{a,b}|$ we have that $\Pr[E_3]\leq |\mc{A}|^2 \delta/6e^2|\mc{A}|^2= \delta/6e^2$. We apply the union bound again (twice) to infer that $\Pr[E_1 \cup E_2 \cup E_3]\leq \delta/2e^2$, and thus, the probability that
    \begin{align}
    \Pr\left[\exists e: E_1, E_2 \textrm{ or } E_3\textrm{ occur}\right]
    \leq \sum_{e\geq 0}\delta/2e^2 \leq \delta.
    \end{align}

We continue under the assumption that in all epochs all three bad events never occur. Also by our choice of $n_e$ it is true that $8(|\mc{A}|-1)\eps^{-1} \log(6|\mc{A}|^2e^2/\delta) \leq 16(|\mc{A}|-1)\eps^{-1} \log(6|\mc{A}|e^2/\delta) \leq n_e \epochgap /4$. It is now fairly straightforward to argue that when comparing a suboptimal arm $a$ and an optimal arm $b$ we never remove $b$: this follows from the fact that in this case we have 
    \begin{align*}
    X^b_{(j)} &\geq F_b^{-1}\lp \frac j {n_e} - \frac{\epochgap} 4 \rp \\&\geq F_b^{-1}(q)> F_a^{-1}(q) \geq F_a^{-1}\lp\frac{i}{n_e} + \frac{\epochgap} 4\rp \geq X^a_{(i)}
    \end{align*}
and so for such a pair $\ell^* = 0$, making $\ell^*+Z_{a,b} \leq 2(|\mc{A}|-1)\log(6|\mc{A}|^2e^2/\delta)/\eps$ under the complement of $E_3$. Thus, we can only eliminate an optimal arm when comparing it to another optimal arm, and so $\mc{A}$ must always contain at least one optimal arm. Secondly, when comparing an optimal arm $a$ to a suboptimal arm $b$ where the optimality gap is at least $2^{-e}$ we have that at epoch $e$ it holds that for $\ell = 6(|\mc{A}|-1) \log(6|\mc{A}|^2e^2/\delta)\eps$ we have
\begin{align} \nonumber
    X^b_{(j)}\leq X^b_{(j+1)} \leq ... &\leq  X^b_{(j+\ell)} \\&\leq X^b_{(\lceil j+n_e \frac{\epochgap}4)\rceil} \leq F_b^{-1}(q+\epochgap)\label{eq:long1}\end{align} and
\begin{align}    \nonumber
    F_b^{-1}(q+\epochgap) \leq F_a^{-1}(q-\epochgap) &\leq X^a_{(\lfloor i-n_e\frac{\Delta}4\rfloor)} \\&\leq X^a_{(i-\ell)} \leq... \leq X^a_{(i)}.\label{eq:long2} \end{align}
It follows that for such a pair $\ell^* \geq 6(|\mc{A}|-1)\log(6|\mc{A}|^2e^2/\delta)/\eps$, under $E_3$ we have that $\ell^*+Z_{a,b} \geq 6(|\mc{A}|-1)\log(4|\mc{A}|^2e^2/\delta)/\eps$ so we eliminate arm $b$. The latter, \eqref{eq:long1} and \eqref{eq:long2} complete the proof.
\end{proof}

Lastly, we characterize the sample complexity of DP-SEQ (Algorithm \ref{alg:DP_SE_Quantiles}), the number of pulls for each suboptimal arm and the total number of pulls at termination.
\begin{theorem}
\label{thm:num_pulls_DP_alg}
With probability at least $1 - \delta$,  Algorithm~\ref{alg:DP_SE_Quantiles}, pulls each suboptimal arm $a$ at most %$O\lp \lp \frac 1{\Delta_a^2}+\frac{K}{\eps\Delta_a} \rp \cdot {\log(\frac{K}{\delta}\log(\frac 1 {\Delta_a}))} \rp$ 
\begin{align}
    \mc{O}\lp \lp \frac 1{\Delta_a^2}+\frac{K}{\eps\Delta_a} \rp  {\log\lp\frac{K}{\delta}\log\lp \frac 1 {\Delta_a}\rp\rp} \rp
\end{align}many times. 

\begin{comment}
\os{Consider eliminating if stressed for space} 
In particular, in a setting with a single optimal arm and all $K-1$ arms are suboptimal, 
Algorithm~\ref{alg:DP_SE_Quantiles} makes at most 
\begin{align}
  \mc{O}  \Bigg( \sum_{a \in \mc{A}^{-i^*}}\lp \frac 1{\Delta_a^2}+\frac{K}{\eps\Delta_a} \rp  {\log\lp \frac{K}{\delta}\log\lp\frac 1 {\Delta_a}\rp\rp} \Bigg)
\end{align}many pulls of suboptimal arms.
\end{comment}
\end{theorem}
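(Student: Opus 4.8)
The plan is to derive the bound directly from the utility guarantee of Theorem~\ref{thm:utility_DP_quantile_SE}, which already carries all of the probabilistic content. First I would condition on the $(1-\delta)$-probability event of Theorem~\ref{thm:utility_DP_quantile_SE}, on which every suboptimal arm $a$ is eliminated by the end of epoch $e_a \triangleq \lceil \log_2(1/\Delta_a)\rceil$. On this event arm $a$ is pulled only during epochs $e^*, e^*+1, \ldots, e_a$, and in each such epoch it is pulled exactly $n_e$ times (line~9 of Algorithm~\ref{alg:DP_SE_Quantiles}), so its total number of pulls is at most $\sum_{e=e^*}^{e_a} n_e$, where $n_e = \max\{16\Gamma_e^{-2}, 64(|\mc{A}|-1)\Gamma_e^{-1}\eps^{-1}\}\log(6|\mc{A}|e^2/\delta)$ and $\Gamma_e = 2^{-e}$. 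Since the stated bound is a high-probability statement at confidence $1-\delta$, it transfers verbatim from Theorem~\ref{thm:utility_DP_quantile_SE}.

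The core of the argument is evaluating this sum, and the key observation is that $n_e$ grows geometrically in $e$, so the sum is dominated (up to a constant) by its last term. Concretely, I would first bound the slowly varying logarithmic factor $\log(6|\mc{A}|e^2/\delta)$ — increasing in $e$, and with $|\mc{A}|\le K$ — by its value $\log(6Ke_a^2/\delta)$ at $e=e_a$, and pull it out of the sum. I would then use $\max\{A_e,B_e\}\le A_e+B_e$ and evaluate the two geometric series separately: $\sum_{e\le e_a} 16\cdot 4^e = \mathcal{O}(4^{e_a})$ and $\sum_{e\le e_a} 64(|\mc{A}|-1)\eps^{-1}2^e = \mathcal{O}((|\mc{A}|-1)\eps^{-1}2^{e_a})$.

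To finish I would substitute the defining value of $e_a$. Since $e_a \le \log_2(1/\Delta_a) + 1$, we have $2^{e_a}\le 2/\Delta_a$ and $4^{e_a}\le 4/\Delta_a^2$, so using $|\mc{A}|\le K$ the geometric part is $\mathcal{O}(\Delta_a^{-2}) + \mathcal{O}(K\eps^{-1}\Delta_a^{-1})$. For the logarithmic factor, $e_a = \mathcal{O}(\log(1/\Delta_a))$ gives $\log(6Ke_a^2/\delta) = \mathcal{O}(\log(K\delta^{-1}\log(1/\Delta_a)))$, since $\log(e_a^2) = \mathcal{O}(\log\log(1/\Delta_a))$ is absorbed into the $\log\log$ term. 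Multiplying the geometric part by this factor yields exactly the claimed per-arm bound.

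There is no serious obstacle, since the probabilistic work is inherited from Theorem~\ref{thm:utility_DP_quantile_SE}; the only points requiring care are (i) recognizing that a maximum of two geometric sequences times a slowly growing log is still dominated by its final term, so that the starting epoch $e^*$ (a constant depending only on $q$) affects only the hidden constant, and (ii) correctly tracking that the $e^2$ inside the algorithm's log contributes precisely a $\log\log(1/\Delta_a)$ term, matching the stated $\log\bigl(\tfrac{K}{\delta}\log(1/\Delta_a)\bigr)$ factor rather than inflating the bound.
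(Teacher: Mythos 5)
Your proposal is correct and follows essentially the same route as the paper's own proof: condition on the event of Theorem~\ref{thm:utility_DP_quantile_SE}, sum $n_e$ over epochs up to $\lceil\log_2(1/\Delta_a)\rceil$, pull out the log factor, bound the two geometric series by their last terms, and substitute $2^{e_a}\le 2/\Delta_a$ so that the $e^2$ inside the logarithm becomes the $\log\log(1/\Delta_a)$ term. The only cosmetic differences are that the paper sums from epoch $0$ rather than the algorithm's starting epoch and writes the $\max$-to-sum step implicitly, neither of which changes the argument.
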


By taking $\epsilon\to\infty$, Theorem \ref{thm:num_pulls_DP_alg} provides the utility of the standard (non-private) epoch-based successive elimination variant of Algorithm \ref{alg:succ_el_quant}. % The following remark provides the resulting sample complexity.
Indeed, by introducing epochs the concentration bound in \eqref{eq:concentration_bound_main} becomes \begin{align*}
   \P\big( F_i^{-1} \lp q\rp\in \big[X^i_{(\floor*{n_e(q-\mathrm{D}_e)})} , X^i_{(\ceil*{n_e(q+\mathrm{D}_e)})}  \big] \big) > 1-\frac{\delta}{2Ke^2},
\end{align*} $e$ denotes the epoch, $\mathrm{D}_e =2^{-e}$ and $n_e= \mathrm{D}^{-2}_e \log(4Ke^2/\delta)/2$. This yields a bound on the total number of pulls for the epoch-based algorithm of the order of 
\begin{align}
  \mc{O}  \Bigg( \sum_{i \in \mc{A}^{-i^*}}\frac 1{\Delta_i^2}  {\log\lp \frac{K}{\delta}\log\lp\frac 1 {\Delta_i}\rp\rp} \Bigg),
\end{align} 
matching the ($\varepsilon$-optimal) bounds of~\cite{HowardR19}.
%\os{Consider eliminating if stressed for space}
As a consequence of the epoch-based approach, the dependence $\log(K/(\delta\Delta_i))$ in \eqref{eq:upper_bound} becomes $\log(K\delta^{-1} \log(\Delta^{-1}_i))$ for $i\in\mc{A}\setminus\{i^*\}$. However, this comes at the expense of much larger constants. We continue by presenting the proof of Theorem \ref{thm:num_pulls_DP_alg}.

\begin{proof}[Proof of Theorem \ref{thm:num_pulls_DP_alg}]
Fix any suboptimal arm $a$. 
Denote $e^*$ as the first integer $e$ for which $2^{-e}\leq \Delta_a$. Thus $\Delta_{e^*} = 2^{-e^*}\leq \Delta_a < 2\Delta_{e^*}$ making $2^{e^*}\leq 2/\Delta_a$. 
According to Theorem~\ref{thm:utility_DP_quantile_SE} we have that with probability at least $1-\delta$ by epoch $e^*$ arm $a$ is eliminated. Since in any epoch we have that $|\mc{A}|\leq K$, we have that the total number of pulls of arm $a$ is
\begin{align*}
    &\sum_{0\leq e\leq e^*}\!\!\! n_e 
    \cr&\leq \sum_{0\leq e\leq e^*} \left(\frac{16}{\epochgap^2}+ \frac{64(K-1)}{\epochgap\cdot \eps}\right)\cdot \log\lp \frac{6Ke^2}\delta\rp
    \cr &\! \leq 
    16\log\lp\frac{6K(e^*)^2}\delta\rp \!\!\sum_{0\leq e\leq e^*}\!\!\! 2^{2e} + \frac{64K\log\lp\frac{6K(e^*)^2}\delta\rp}{\eps} \!\!\!\sum_{0\leq e\leq e^*}\!\!\! 2^{e}
    \cr &\! \leq  
    32\log\lp\frac{6K(e^*)^2}\delta\rp 2^{2e^*} +\frac{128K\log\lp\frac{6K(e^*)^2}\delta\rp}{\eps}2^{e^*}
    \cr &\!\leq 
    \frac {128\log\lp\frac{6K(e^*)^2}\delta\rp}  {\Delta_a^2} +\frac{256K\log\lp\frac{6K(e^*)^2}\delta\rp}{\eps\Delta_a} 
    \cr 
    &\!\leq \left(\frac 1 {\Delta_a^2}+\frac{K}{\eps\Delta_a}\right)\cdot 512 \log\lp \frac{6K}{\delta} \cdot \log\lp\frac 1 {\Delta_a}\rp\rp.
\end{align*} To conclude, the total number of samples (and pulls) is \begin{align}
  \mc{O}  \lp \sum_{a \in \mc{A}^{-i^*}}\lp \frac 1{\Delta_a^2}+\frac{K}{\eps\Delta_a} \rp \cdot {\log\lp \frac{K}{\delta}\log\lp\frac 1 {\Delta_a}\rp\rp} \rp
\end{align}with probability at least $1-\delta$. 
\end{proof}

\section{Numerical Illustrations and Further Discussion of the Results \label{app:experiments}}

\begin{comment}
\begin{figure}[t]
    \centering
    \includegraphics[width=0.45\textwidth, valign=c]{}
    \caption{Comparison of the performance of the successive elimination for quantiles (Algorithm \ref{alg:succ_el_quant}) and the classical successive elimination for mean values~\cite{even2002pac}. For the best mean value identification problem, the distributions of the rewards can possibly have different variances and the rewards are unbounded, thus we normalize by dividing each reward with its standard deviation. Both algorithms run on synthetic data. We consider two arms with  lognormal distribution and parameters $\mu,\sigma$. Specifically, Arm 1 ($\mu_1=1$) has both mean and median greater than the mean and median of Arm 2 ($\mu_2=0$) correspondingly, while $\sigma_1=\sigma_2\in (1,3)$. The two algorithms receive the same dataset for each value of $\sigma$ while the parameter $\mu$ is fixed. In fact, for a lognormal distributed variable $X$ the mean, median and variance are given by $\E[X]=e^{\mu+\sigma^2/2}$, $F_X^{-1}(1/2)=e^\mu$, $\mathrm{Var}(X)=e^{2\mu+\sigma^2}(e^{\sigma^2}-1)$ respectively. For large values of $\sigma$, the performance of the best mean identification requires exponentially many samples due to the high variance of the distribution (outliers occur with higher probability).}
    \label{fig:mean_vs_median}
\end{figure}
\end{comment}
In this section, we provide indicative numerical simulations (along with relevant discussion) exploring and confirming various properties related to the proposed elimination algorithms (private and non-private), as well as the proposed definition of the associated suboptimality gap.%\edit{First, we provide an example for lognormal distributed data (Figure \ref{fig:mean_vs_median}). For certain values of the parameter $\sigma$ ($\sigma>2.2$), Algorithm \ref{alg:succ_el_quant} identifies the arm with the highest median much faster than the classical successive elimination algorithm which identifies the arm with the highest mean.}

\begin{figure}[t!]
    \centering
    \vspace{-0.2cm}
    \includegraphics[width=0.45\textwidth, valign=t]{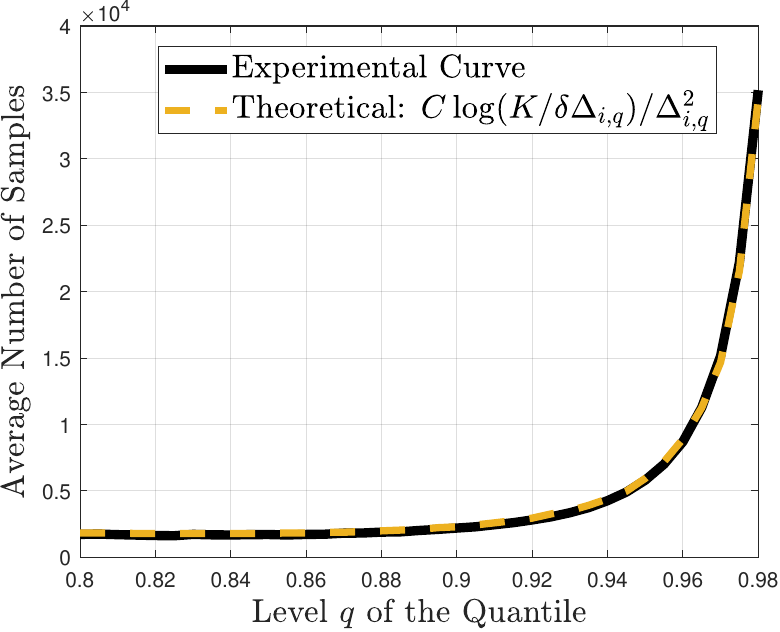}
    \includegraphics[width=0.45\textwidth, valign=t]{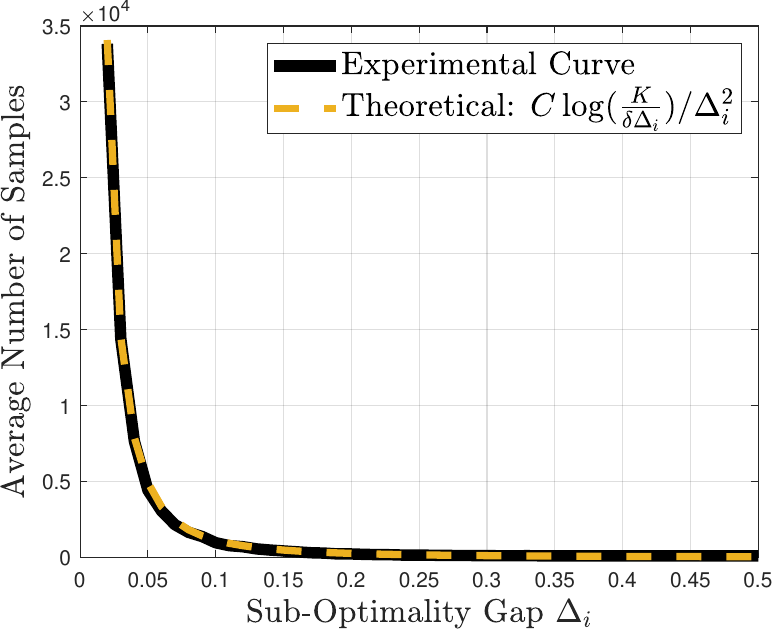}
    \caption{Upper: Gaussian data, $\mu_{i^*}=\mu_{i}=0$, $\sigma_{i^*}=1$ and $\sigma_{i}=0.5$, illustration of the average number of samples (and pulls) as the level of $q$ increases. Lower: Discrete data, $q=0.4$, illustration of the average number of samples (and pulls) as the suboptimality gap $\Delta_i$ increases. In both cases we use $100$ in total independent runs.}
    \label{fig:simulations_matching_bounds}
\end{figure}

\paragraph{Empirical verification of the tightness of the bounds} To empirically validate our theoretical results on the sample complexity, we show in Figure \ref{fig:simulations_matching_bounds} the average number of samples to identify the best arm for a Gaussian (left) and discrete (right) problem setting. For both settings the average number of pulls is evaluated through $100$ independent runs. These curves show that there exists a constant $C$ such that the sample complexity of the algorithm matches our analysis, specifically $C=3/2$ for the left and right figure. For the Gaussian distribution, $\mu_{i^*}=\mu_{i}=0$, $\sigma_i=2$ while $\sigma_{i^*}$ varies. The discrete distribution is provided in Figure \ref{fig:quantile_diff} on the left, $q=0.4$ while the levels $\ell_1,\ell_3$ vary (see Figure \ref{fig:quantile_diff}, left).
\begin{figure}[t!]
    \centering
    \!\!\!\!\includegraphics[width=0.46\textwidth, valign=t]{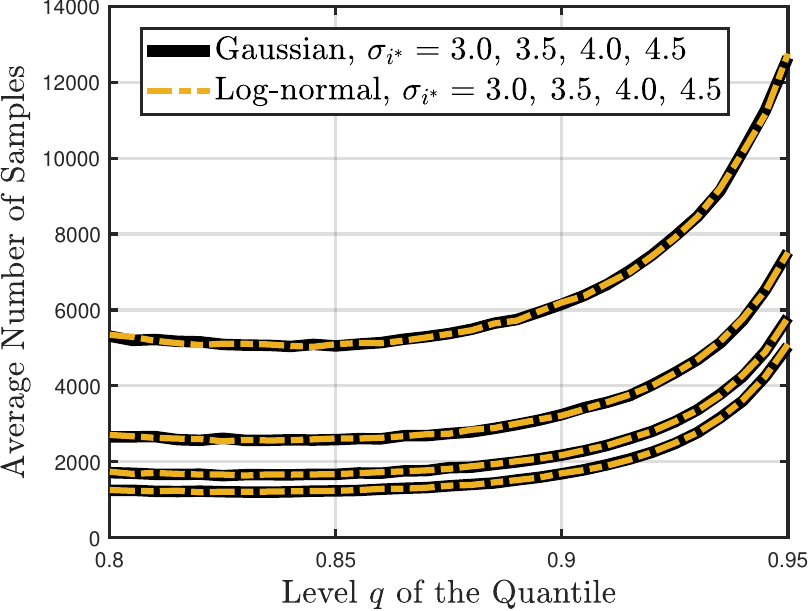}
    \\
    \vspace{11bp}
    \includegraphics[width=0.44\textwidth, valign=t]{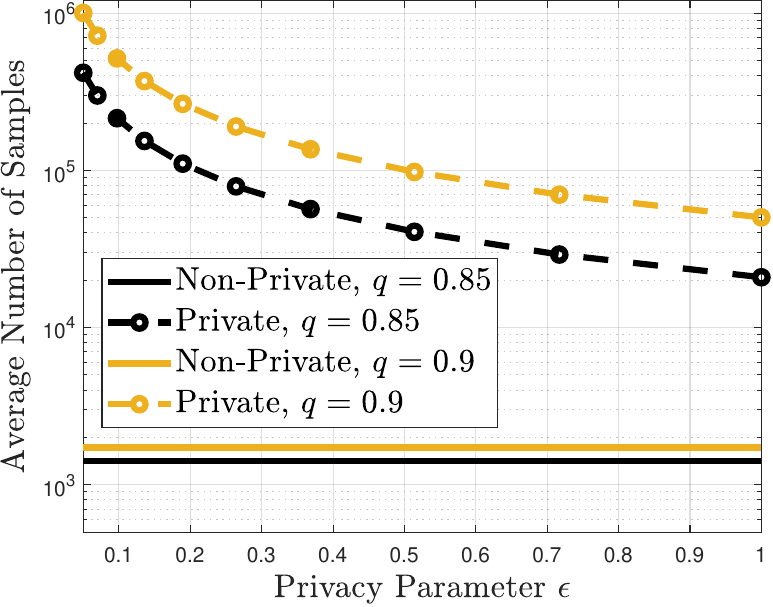}
    \caption{Upper: Sample complexity of Algorithm \ref{alg:succ_el_quant} for Gaussian and log-normal data, $K=2$. For both cases $\mu=0$ for all the arms, $\sigma_i=2$. Lower: Comparison of the private (Algorithm \ref{alg:DP_SE_Quantiles}) and non-private SEQ (Algorithm \ref{alg:succ_el_quant}). We consider log-normal distributions with $K=10$ arms and parameters $\mu_{i^*}=\mu_i=0$, $\sigma_i\in [0.1, 0.2,\ldots,0.9]$ and $\sigma_{i^*}=2$. We compare the estimated number of samples (averaging over 10 iterations) for different values of the privacy parameter $\eps\in[0.05,1]$ and for quantile levels $q=0.85$ and $q=0.9$.}
    \label{fig:simulations}
\end{figure}
% Our results also imply that the gap is the relevant quantity rather than the distance between the quantile values. In the following example, we consider two cases (Gaussian and Log-normal distributions), for which the differences between the quantile values are different but the gap is identical for any $q\in (0,1)$. The latter can be verified by our Definition \ref{eq:gap_distributions}. As a consequence we expect to find the same average number of pulls for Gaussian and Log-normal quantile bandits in our experiment. We can see this by comparing the performance (average termination time averaged over $500$ runs) for a normal distribution and a log-normal distribution for large values of $q$, see Figure \ref{fig:simulations} (left). We take $K = 2$. The suboptimal distribution (normal or log-normal) has mean $0$ and parameter $\sigma = 2$. We vary the best arm by changing $\sigma$. In our definition of gap, the gap between two normal distributions with parameters $\sigma_i$  and $\sigma_{i^*}$ is the same as the gap between two log-normal distributions with parameters $\sigma_i$  and $\sigma_{i^*}$. Each curve shows that the sample complexity when comparing normal and log-normal distributions is the same. In the case of the log-normal distributions the difference in the $q$-quantiles may be quite large. However, the sample complexity of the algorithm depends on the gap. 
\paragraph{Sample complexity for heavy-tailed distributions is not expensive at all} In the following example, we consider two cases (Gaussian and log-normal distributions), for which the differences between the quantile values are different but the gap is identical for any $q\in (0,1)$. The latter can be verified by our Definition \ref{eq:gap_distributions}. As a consequence we expect to find the same average number of pulls for Gaussian and log-normal quantile bandits in our experiment. We can see this by comparing the performance (average termination time averaged over $500$ runs) for a normal distribution and a log-normal distribution for large values of $q$, see Figure \ref{fig:simulations} (left). We take $K = 2$. The suboptimal distribution (normal or log-normal) has mean $0$ and parameter $\sigma = 2$. We vary the best arm by changing $\sigma$. In our definition of gap, the gap between two normal distributions with parameters $\sigma_i$  and $\sigma_{i^*}$ is the same as the gap between two log-normal distributions with parameters $\sigma_i$  and $\sigma_{i^*}$. Each curve shows that the sample complexity when comparing normal and log-normal distributions is the same. In the case of the log-normal distributions the difference in the $q$-quantiles may be quite large. However, the sample complexity of the algorithm depends on the gap.

\paragraph{The cost of privacy for Algorithm \ref{alg:succ_el_quant}} Figure \ref{fig:simulations} (right) shows the performance of Algorithm \ref{alg:succ_el_quant} as a function of the privacy risk $\epsilon$. As expected, as $\epsilon$ increases the sample complexity decreases. The plots show that as the quantile decreases the gap in expected pulls between the private and non-private algorthms decreases. The high cost of privacy in this example shows that there is potential for improvement in the private algorithm: in order to get the sample complexity scaling we chose to double epoch sizes (a standard technique) but empirically we may choose a less aggressive approach. 

\section{Discussion and Future Directions}

In this paper we characterized the sample complexity of the quantile multi-armed bandit problem when the goal is to exactly identify the arm with the highest $q$-quantile in terms of a new measure of suboptimality (gap) between the distributions of each pair of arms. The problem of the lowest $q$-quantile is a simple modification of our method. %The quantile problem differs from the mean bandit problem, so we proposed a different form of the classical successive elimination procedure. 
Motivated by scenarios where the arm rewards are private or carry sensitive information, we also provided the first differentially private algorithm for the quantile bandit problem. These privacy considerations lead to an interesting open problem which we discuss next.

%\kn{edit the next paragraph?}
\noindent\textit{Open Problem for Privacy.} Algorithm \ref{alg:DP_SE_Quantiles} pulls each suboptimal arm $i$ roughly $K/\eps\Delta_i$ times more than Algorithm \ref{alg:succ_el_quant}. Because we cannot publish approximations of the $q$-quantiles, the factor of $K$ comes because of the need to make private pairwise comparisons. An open question remains: can we avoid this factor of $K$ or is there a converse showing it is necessary? This factor does not appear when looking at the difference between private and non-private best \emph{mean} arm identification. We would like to know if a different elimination procedure would have the same property but for the quantiles.

The bandit literature is vast, with many variations, and for some of these the quantile bandit setting might provide an interesting twist as a form of risk-aware learning. Bandit optimization with risk control is a particularly interesting direction to which this work can apply. For the case of contaminated quantiles~\cite{altschuler2019best} our results imply that $\leq \Delta_i/2$ fraction of contaminated examples could be handled for general $q$-quantiles.
%Our experiments indicate that there are many interesting connections between the quantile bandit problem and other target statistics of interest. 
%Formulating a proper notion of regret for the quantile problem \os{Done in \cite{Szorenyi15mab}} and associated algorithms for regret minimization would expand the scope to more general exploration-exploitation trade-offs. 
There are still open fundamental questions one may ask, in particular related to the hardness of best-arm-identification for functionals of the distribution beyond the mean and variance~\cite{cassel2018general} in the private and non-private case.
%\os{Mostly settled} and the hardness of best-quantile-arm identification.
%\os{I'd pose instead the question of other properties of the distribution rather than the mean and quantiles, as well as their private version (e.g. entropy). Also, the case of contaminated quantiles~\cite{altschuler2019best} --- we give a simple proof of being able to deal with, say $\leq \Delta_i/2$ fraction of contaminated examples.}\ads{See above and if it is ok}.

% if have a single appendix:
%\appendix[Proof of the Zonklar Equations]
% or
%\appendix  % for no appendix heading
% do not use \section anymore after \appendix, only \section*
% is possibly needed

% use appendices with more than one appendix
% then use \section to start each appendix
% you must declare a \section before using any
% \subsection or using \label (\appendices by itself
% starts a section numbered zero.)
%

\appendices

\section{Proof of Theorem 1} We start by providing a lemma and then we continue with the proof of Theorem 1.\label{app_quantile_flip}

\begin{lemma}
\label{pro:close_dist_close_quantiles}
Let $F$ and $F'$ be two distributions such that $\dtv(F,F')=\eta$. Then for any $q\in (\eta,1-\eta)$ it holds that $F^{-1}(q-\eta) \leq (F')^{-1}(q) \leq F^{-1}(q+\eta)$.
\end{lemma}
\begin{proof}
By definition, for any $x\in \mathbb{R}$ it holds that $|F(x) -F'(x)|\leq \eta$. Define the set $S(F,q) = \{\xi:~ F(\xi)\geq q\}$ where $F^{-1}(q) = \inf S(F,q)$. It follows that any $\xi\in S(F',q)$ also satisfies that $F(\xi) \geq F'(\xi)-\eta \geq q-\eta$ which means $\xi \in S(F,q-\eta)$, and so $F^{-1}(q-\eta) = \inf S(F,q-\eta) \leq (F')^{-1}(q)=\inf S(F',q)$. Similarly, any $\xi \in S(F,q+\eta)$ also belongs to the set $S(F',q)$ proving that $(F')^{-1}(q)\leq F^{-1}(q+\eta)$.
\end{proof}

\label{apx_subsec:proof_lemma_gap_dist_to_flip}
%\begin{theorem}[Theorem~\ref{lem:positive_gap} restated]
%\LemmaDistFlip
%\end{theorem}
\subsection{Proof of Theorem 1.} First, recall the definition of the distance to flip $\dflip(F_{\rm l} , F_{\rm h} )$ that is equal to %(Equation~\eqref{eq:dist_to_quantile_flip})
\begin{align*}
  \inf_{(G_{\rm h} ,G_{\rm l} ): (G_{\rm h} )^{-1}(q) > G_{\rm l} ^{-1}(q)} \max \{ \dtv(F_{\rm l} , G_{\rm h} ), \dtv(F_{\rm h} , G_{\rm l} )\}
\end{align*}
and the definition of the gap %(Equation~\eqref{eq:gap_distributions})
\begin{align*}
\Delta(F_{\rm l} , F_{\rm h} ) = \sup  \{\eta\geq 0:~ F_{\rm l} ^{-1}(q+\eta) \leq F_{\rm h} ^{-1}(q-\eta) \}.
\end{align*}

Now, given $\eta < \min\{q, 1-q\}/2$ and a distribution $F$ we define two specific shifts. The first is referred to as \emph{$\eta$-push} of $F$ and denoted $F^{\to \eta}$~--- we subtract $\eta$ probability mass 
%uniformly 
from the interval $(-\infty, F^{-1}(q))$ and add $\eta$ probability mass to any point or interval in $(F^{-1}(q+2\eta),\infty)$. It is now clear that the $q$-quantile of $F^{\to\eta}$ is in fact $F^{-1}(q+\eta)$ and that $\dtv(F,F^{\to\eta}) = \eta$.
The second shift is equivalent and is an \emph{$\eta$-pull}, denoted $F^{\gets \eta}$~--- we subtract $\eta$-probability mass from the interval $(F^{-1}(q),\infty)$ and move it to the interval $(-\infty, F^{-1}(q-2\eta))$. One can check that the $q$-quantile of $F^{\gets \eta}$ is in fact $F^{-1}(q-\eta)$ and that $\dtv(F,F^{\gets\eta}) = \eta$. %Note that in both cases, the total-variation distance between $F$ and $F^{\to \eta}$ (or $F^{\gets \eta})$ is precisely $\eta$.

We now prove the first part of the lemma. Denote that $\Delta(F_{\rm l} ,F_{\rm h} )=\Delta$. Namely, for every $\eta>0$ it holds that $F_{\rm l} ^{-1}(q+\Delta+\eta)>F_{\rm h} ^{-1}(q-\Delta-\eta)$. For any $\eta>0$, consider the $(\Delta+\eta)$-push of $F_{\rm l} $ so that $(F_{\rm l} ^{\to(\Delta+\eta)})^{-1}(q) = F_{\rm l} ^{-1}(q+\Delta+\eta)$ and the $(\Delta+\eta)$-pull of $F_{\rm h} $ so that $(F_{\rm h} ^{\gets (\Delta+\eta)})^{-1}(q) = F_{\rm h} ^{-1}(q-\Delta-\eta)$. Putting these inequalities together shows that $(F_{\rm l} ^{\to(\Delta+\eta)})^{-1}(q) > (F_{\rm h} ^{\gets (\Delta+\eta)})^{-1}(q)$. Applying the definition of the distance to quantile flip, this shows that $\dflip(F_{\rm l} ,F_{\rm h} ) < \Delta+ \eta$ for any positive $\eta$. Thus, $\dflip(F_{\rm l} ,F_{\rm h} ) \leq \Delta$. Specifically, in the case where $\Delta(F_{\rm l} ,F_{\rm h} )=\Delta=0$ we have that $\dflip(F_{\rm l} ,F_{\rm h} )=0$.

We now show the contrapositive. Assume that $\Delta(F_{\rm l} ,F_{\rm h} ) >0$. Fix any $0< \eta <\Delta(F_{\rm l} ,F_{\rm h} )$, and note that it holds that $F_{\rm l} ^{-1}(q+\eta)\leq F_{\rm h} ^{-1}(q-\eta)$. 
% We utilize the following proposition.
% \begin{proposition}
% \label{pro:close_dist_close_quantiles}
% Let $F$ and $F'$ be two distributions such that $\dtv(F,F')=\eta$. Then for any $q\in (\eta,1-\eta)$ it holds that $F^{-1}(q-\eta) \leq (F')^{-1}(q) \leq F^{-1}(q+\eta)$.
% \end{proposition}
% \begin{proof} [Proof of proposition.]
% By definition, for any $x\in \mathbb{R}$ it holds that $|F(x) -F'(x)|\leq \eta$. We denote our definition of the set $S(F,q) = \{\xi:~ F(\xi)\geq q\}$ where $F^{-1}(q) = \inf S(F,q)$. It follows that any $\xi\in S(F',q)$ also satisfies that $F(\xi) \geq F'(\xi)-\eta \geq q-\eta$ which means $\xi \in S(F,q-\eta)$, and so $F^{-1}(q-\eta) = \inf S(F,q-\eta) \leq (F')^{-1}(q)=\inf S(F',q)$. Similarly, any $\xi \in S(F,q+\eta)$ also belongs to the set $S(F',q)$ proving that $(F')^{-1}(q)\leq F^{-1}(q+\eta)$.
% \end{proof}
Fix any $\tilde G$ and $\tilde H$ such that $\dtv(F_{\rm l} ,\tilde G)\leq \eta$ and $\dtv(F_{\rm h} ,\tilde H)\leq \eta$. It follows from Lemma \ref{pro:close_dist_close_quantiles} and the definition of the gap that
    \begin{align*}
    \tilde G^{-1}(q) \leq F_{\rm l} ^{-1}(q+\eta) 
        \leq F_{\rm h} ^{-1}(q-\eta)
        \leq H^{-1}(q).
    \end{align*}
This shows that any pair of distributions with max TV-distance of $\eta$ to $F_{\rm l} $ and $F_{\rm h} $ is such that that the $q$-quantile has not flipped and it still holds that $\tilde G^{-1}(q)\leq \tilde H^{-1}(q)$, and so $\dflip(F_{\rm l} ,F_{\rm h} )\geq \eta$. Since we've shown that for all $\eta$ that satisfy $\eta<\Delta(F_{\rm l} ,F_{\rm h})$ the inequality $\dflip(F_{\rm l} ,F_{\rm h} )\geq \eta$ holds, it follows that $\dflip(F_{\rm l} ,F_{\rm h}) )\geq \sup_{\eta}\{0< \eta < \Delta(F_{\rm l} ,F_{\rm h})\} = \Delta(F_{\rm l} ,F_{\rm h})$. The last completes the proof. \QEDB

\section{Proof of Properties and Concentration Bound}\label{app:properties}

%\section{Quantile properties \label{app:properties}}

We start by providing two inequalities for the quantile that we will use later.  
\begin{proposition}\label{eq:quantile_inequalities}
Fix $q\in (0,1)$. Let $F^{-1}_X (q)$ be the $q$-quantile of a random variable $X$ provided in Definition \ref{def:quantile}. Then \begin{align}
    \P \left[ X<F^{-1}_X (q)\right]\leq q
\end{align} and \begin{align}
    \P [X>F^{-1}_X (q)]\leq 1-q.
\end{align}
\end{proposition}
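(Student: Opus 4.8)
The plan is to argue directly from the definition $\xi_q \triangleq F_X^{-1}(q) = \inf\{\xi : \P[X \le \xi] \ge q\}$, reading off the two bounds by combining the infimum characterization with the continuity of probability measures — from below for the first inequality and from above (equivalently, the right-continuity of the CDF) for the second. Throughout I take $\xi_q$ finite, which is standard for a genuine distribution when $q \in (0,1)$.

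For the first inequality, $\P[X < \xi_q] \le q$, the key observation is that every point strictly below the infimum fails the defining condition: for any $\xi < \xi_q$ we have $\xi \notin \{\xi' : \P[X \le \xi'] \ge q\}$, hence $\P[X \le \xi] < q$. I would then write $\{X < \xi_q\} = \bigcup_{k \ge 1} \{X \le \xi_q - 1/k\}$ as an increasing union and invoke continuity of probability from below, obtaining $\P[X < \xi_q] = \lim_{k\to\infty}\P[X \le \xi_q - 1/k] \le q$ since each term is strictly below $q$. This half is essentially immediate from the infimum.

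For the second inequality, $\P[X > \xi_q] \le 1-q$, I would first rewrite it as the equivalent statement $\P[X \le \xi_q] \ge q$, i.e. that the infimum defining the quantile is \emph{attained}. To show this I would pick a sequence $\xi_n \downarrow \xi_q$ with $\P[X \le \xi_n] \ge q$ (such a sequence exists by the definition of the infimum), note the decreasing identity $\{X \le \xi_q\} = \bigcap_{n}\{X \le \xi_n\}$, and apply continuity of probability from above to conclude $\P[X \le \xi_q] = \lim_{n\to\infty}\P[X \le \xi_n] \ge q$. The desired bound then follows from $\P[X > \xi_q] = 1 - \P[X \le \xi_q] \le 1 - q$.

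The main obstacle is precisely this last step: verifying that the infimum is achieved, $\P[X \le \xi_q] \ge q$, which is exactly where right-continuity of the CDF (continuity of the measure from above) is needed and is the only nonroutine point. The first inequality and the final rearrangement are straightforward once the infimum characterization is in hand.
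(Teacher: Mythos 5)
Your proof is correct and follows essentially the same route as the paper's: both halves rest on the infimum characterization of the quantile together with continuity of the probability measure (from below for the first inequality, right-continuity/from above for the second). The only differences are cosmetic --- you argue the first bound directly where the paper uses a contradiction, and you spell out the continuity-from-above step that the paper merely cites as right-continuity of the CDF.
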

\begin{proof}
Let $x_n$ be a monotonically increasing sequence such that $\lim_{n\to \infty} x_n =F^{-1}_X (q)$, then
	\[  \Pr[X<F^{-1}_X (q)] = \lim_{n\to \infty}\Pr[X\leq x_n] = \lim_{n\to \infty} F(x_n). \]
	 Consider $x_n = F^{-1}_X (q)-2^{-n}$ and assume for sake of contradiction that $\Pr[X<F^{-1}_X (q)]=q+\eps>q$ for some $\eps>0$. It follows that for some $n$ it holds that $F(x_n) > q +\nicefrac{\eps}{2} $. Assuming $\xi=x_n<F^{-1}_X (q)$ in the set $\{ \xi:~ \Pr[X\leq \xi]\geq q   \}$ contradicts the definition of $F^{-1}_X (q)$.
	
	It is true that $\Pr[X > F^{-1}_X (q)] = 1-\Pr[X\leq F^{-1}_X (q)]$. The second part of the claim follows from $\Pr[X\leq F^{-1}_X (q)]\geq q$, the definition of the quantile $F^{-1}_X (q)$ and the fact that the CDF $F_X (\cdot)$ is right continuous.% So, assume for the sake of contradiction that $\Pr[X\leq F^{-1}_X (q)] =q-\epsilon < q$. This implies $F^{-1}_X (q) \notin \{ \xi:~ \Pr[X\leq \xi]\geq q   \}$ yet by definition $F^{-1}_X (q) = \inf \{ \xi:~ \Pr[X\leq \xi]\geq q   \}$. Thus, there exists some series $\{x_n\}_{n=1}^\infty$ monotonically decreasing with $\lim_{n\to \infty}x_n = F^{-1}_X (q)$ such that all elements in the series belong to this set. This implies $F(x_n)\geq q$ for each $n$ yet $F(\lim_{n\to \infty}x_n) = q-\epsilon<q$. Therefore, $F$ is not right-continuous at the point $F^{-1}_X (q)$, which is contradiction as CDFs must be right-continuous.
	\end{proof}

%We continue by providing the of proof Proposition \ref{prop:lipschitz_cdf} in the main part of the paper.

%\begin{proof}
%By definition, we have
%    \begin{align*}
%    \eta &= \left|F\lp F^{-1}(q + \eta)\rp - F\lp F^{-1}(q) \rp\right|
%    \le L \left| F^{-1}(q + \eta) - F^{-1}(q) \right|
%    = L \lp F^{-1}(q + \eta) - F^{-1}(q) \rp\\
%    \eta &= \left|G\lp G^{-1}(q)\rp - G\lp G^{-1}(q-\eta) \rp\right|
%    \le L \left| G^{-1}(q) - G^{-1}(q-\eta) \right| 
%    = L \lp G^{-1}(q) - G^{-1}(q-\eta) \rp
%    \end{align*}
%So
%    \begin{align*}
%    2 \eta + L \lp G^{-1}(q-\eta) - F^{-1}(q + \eta) \rp
%    \le L \lp G^{-1}(q) - F^{-1}(q) \rp
%    \end{align*}
%From the definition of the gap, taking the supremum over $\eta$ gives
%    \begin{align*}
%    \Delta(F,G) \le \frac{L}{2} \left| F^{-1}(q) - G^{-1}(q) \right|.
%    \end{align*}
%\end{proof}

%The next lemma gives a concentration bound for the quantile function. All works require such bounds for their sequential estimation/elimination strategies~\cite{Szorenyi15mab,altschuler2019best,HowardR19}.  Ours differ only in how we set the constants in the inequalities. A detailed proof is given here for the convenience of the reviewer.
\begin{theorem}[Concentration Bound] \label{thm:GCB2}
Choose a level $q\in (0,1)$. Fix $\delta\in (0,1)$. For any $n\in\mbb{N}$, if \begin{align}
    \sqrt{\frac{  \log(2/\delta)}{2n}} \leq \zeta \leq \min\{q,1-q\}  \label{eq:inequality_conc_11} \end{align} then \begin{align}
   \P\lp F_X^{-1} \lp q\rp\notin \left[X_{(\floor*{n(q- \zeta)})} , X_{(\ceil*{n(q+\zeta)})}  \right] \rp\leq \delta.
\end{align}
\end{theorem}
\begin{proof}
Hoeffding's inequality gives \begin{align}
    &\P \left[ \frac{1}{n}\sum^{n}_{i=1} \nonumber \mathbf{1}_{X_i < F_X^{-1} \lp q\rp }  <  \P [X < F_X^{-1} \lp q\rp] + \sqrt{\frac{  \log(2/\delta)}{2n}} \right]\\& > 1- \delta/2
\end{align} and we have \begin{align*}
    &1-\frac{\delta}{2}\\&<\P \left[ \frac{1}{n}\sum^{n}_{i=1} \mathbf{1}_{X_i < F_X^{-1} \lp q\rp }  <  \P [X < F_X^{-1} \lp q\rp] + \sqrt{\frac{  \log(2/\delta)}{2n}} \right] \\
    & \leq \P \left[ \frac{1}{n}\sum^{n}_{i=1} \mathbf{1}_{X_i < F_X^{-1} \lp q\rp }  <  q + \sqrt{\frac{  \log(2/\delta)}{2n}} \right]\numberthis\label{eq:use_prop}\\
    &= \P \left[ \sum^{n}_{i=1} \mathbf{1}_{X_i < F_X^{-1} \lp q\rp }  <  n\lp q +\sqrt{\frac{  \log(2/\delta)}{2n}}\rp \right]\\
    &\leq \P \left[ \sum^{n}_{i=1} \mathbf{1}_{X_i < F_X^{-1} \lp q\rp }  <  \ceil*{ n \lp q +\sqrt{\frac{  \log(2/\delta)}{2n}}\rp} \right]\\
    &\leq \P \left[ \sum^{n}_{i=1} \mathbf{1}_{X_i < F_X^{-1} \lp q\rp }  < \ceil*{ n \lp q +\zeta \rp} \right],  \forall \zeta\geq \sqrt{\frac{  \log(2/\delta)}{2n}},\nonumber
\end{align*} and \eqref{eq:use_prop} comes from Proposition \ref{eq:quantile_inequalities}. The last inequality implies that for all $\zeta \geq \sqrt{  \log(2/\delta)/2n}$ the following holds \begin{align}\label{eq:bound_prob_empirical_CDF}
    \P \left[\sum^{n}_{i=1} \mathbf{1}_{X_i < F_X^{-1} \lp q\rp }  \geq  \ceil*{ n \lp q +\zeta \rp} \right]\leq \delta/2.
\end{align} It is true that \begin{align} \nonumber
    &X_{(\ceil*{n(q+\zeta)})} < F_X^{-1} \lp q\rp \\&\iff \sum^n_{i=1} \mathbf{1}_{X_i < F_X^{-1} \lp q\rp}\geq \ceil*{n(q+\zeta)},\label{eq:claim1}
\end{align} and \eqref{eq:bound_prob_empirical_CDF} gives \begin{align}\label{eq:First_part}
     \P \left[ X_{(\ceil*{n(q+\zeta)})}< F_X^{-1} \lp q\rp \right]\leq \delta/2,\, \forall \zeta\geq \sqrt{\frac{  \log(2/\delta)}{2n}}.
\end{align}

\noindent Similarly, for all $\zeta \in \left[\sqrt{  \log(2/\delta)/2n}, q \right]$
\begin{align*}
    &1-\frac{\delta}{2}\numberthis \label{eq:bound_prob_emprical_CDF4}\\
    % &<\P \left[ \frac{1}{n}\sum^{n}_{i=1} \mathbf{1}_{X_i > F^{-1}\lp q \rp }  <  \P [X > F_X^{-1} \lp q\rp] + \sqrt{\frac{  \log(2/\delta)}{2n}} \right] \\
    % & \leq \P \left[ \frac{1}{n}\sum^{n}_{i=1} \mathbf{1}_{X_i > F_X^{-1} \lp q\rp ] }  < 1- q + \sqrt{\frac{  \log(2/\delta)}{2n}} \right]\numberthis \label{eq:use_prop_second_part}\\
    % &\leq  \P \left[ \sum^{n}_{i=1} \mathbf{1}_{X_i > F^{-1}\lp q \rp ] }  < n-n\lp q-\sqrt{\frac{  \log(2/\delta)}{2n}}\rp \right]\\
    % & \leq  \P \left[ \sum^{n}_{i=1} \mathbf{1}_{X_i > F^{-1}\lp q \rp ] }  < n-n\floor*{ q-\sqrt{\frac{  \log(2/\delta)}{2n}} } \right]\\
    &\leq  \P \left[ \sum^{n}_{i=1} \mathbf{1}_{X_i > F_X^{-1}\lp q \rp  }  < n- \floor*{n \lp q-\zeta \rp } \right]\\
    &\leq \P \left[ \sum^{n}_{i=1} \mathbf{1}_{X_i > F_X^{-1}\lp q \rp  }  < n-\max\{\floor*{n \lp q-\zeta \rp },1\}+1 \right],
\end{align*} %and \eqref{eq:use_prop_second_part} comes from Proposition \ref{eq:quantile_inequalities}. 
%It is true that 
and due to the definition of order statistics with restricted indices in the set $\{1,2,\ldots,n\}$ it is true that
\begin{align}\nonumber
    &X_{(\floor*{n(q-\zeta)})} > F_X^{-1} \lp q\rp\\& \iff \sum^{n}_{i=1} \mathbf{1}_{X_i > F_X^{-1}\lp q \rp}\geq n-\max\{\floor*{n \lp q-\zeta \rp },1\}+1.
\end{align}
Then \eqref{eq:bound_prob_emprical_CDF4} gives \begin{align} \label{eq:Second_part}
    \P \left[   X_{(\floor*{n(q-\zeta)})} > F_X^{-1} \lp q\rp \right]\leq \delta/2,
\end{align}
for all $\zeta \in \left[\sqrt{  \log(2/\delta)/2n}, q \right]$. Finally, \eqref{eq:First_part}, \eqref{eq:Second_part} and the union bound give the statement of the lemma.
\end{proof}

\section{Analysis of the Cases $\Delta_i>0$ and $\Delta_i=0$}\label{appendix_Gap}
The next corollary provides an analysis for the cases of strictly positive or zero gap. Note that there exists $A>0$ such that $F^{-1}_{i^{*}}(q-A)\geq F^{-1}_{i}(q+A)$ if and only if $\Delta_i>0$. Conversely, it does not exist $A>0$ such that $F^{-1}_{i^{*}}(q-A)\geq F^{-1}_{i}(q+A)$ if and only if $\Delta_i=0$. The last two statements are direct consequence of the definition of the gap $\Delta_i$ (see Definition \ref{dap_definition}).
\begin{corollary}\label{corollary_cases}
Define \begin{align}
    Q_{R,F_i}(q) &\triangleq \inf \{x : F_i (x)>q \},\\
    L^{-}_{F_{i^{*}}}(q)&\triangleq \max_{x}\{F_{i^*}(x) :F_{i^*}(x)<q  \}.
\end{align}
Assume that the best arm is unique,\begin{align}
     F^{-1}_{i}(q) <   F^{-1}_{i^*}(q),\quad \forall i\in \{1,2\ldots,K\}\setminus \{i^*\}.
\end{align}
\begin{enumerate}
\item If the CDF $F_i(\cdot)$ is continuous at $Q_{R,F_i}(q)$ it is always true that $Q_{R,F_i}(q) < F^{-1}_{i^*}(q)$, %because $Q_{R,F_i}(q)= F^{-1}_{i}(q)$. 
and there exists $A>0$ such that $F^{-1}_{i^{*}}(q-A)>F^{-1}_{i}(q+A)$.

\item If the CDF $F_i(\cdot)$ is not continuous at $Q_{R,F_i}(q)$ then there are three sub-cases.
\begin{itemize}
    \item $Q_{R,F_i}(q) < F^{-1}_{i^*}(q)$: There exists $A>0$ such that \begin{align}\label{eq:fir_case}
        F^{-1}_{i^{*}}(q-A)>F^{-1}_{i}(q+A).
    \end{align} The values of $A$ that satisfy \eqref{eq:fir_case} are\begin{align}
        A(\alpha,x) = \alpha \min\{ F_i (x) -q , q- F_{i^*} (x) \},
    \end{align} for any $\alpha\in (0,1)$ and $x\in[Q_{R,F_i}(q),F^{-1}_{i^*}(q))$.
    
    \item $Q_{R,F_i}(q) = F^{-1}_{i^*}(q)$: There does not exist $A>0$ such that   $F^{-1}_{i^{*}}(q-A)>F^{-1}_{i}(q+A)$. 
    
    \begin{itemize} 
    \item If $F_{i^*}(\cdot)$ has a discontinuity at $F^{-1}_{i^*}(q)$ there exists $A>0$ such that \begin{align}
    F^{-1}_{i^{*}}(q-A)=F^{-1}_{i}(q+A).\label{eq:sec_case_I}
\end{align}
The values of $A$ that satisfy \eqref{eq:sec_case_I} are \begin{align*}
    A(\eps)&=\min \{F_i(Q_{R,F_i}(q))-q, q-L^{-}_{F_{i^{*}}}(q)-\eps \}\\
    &= \min \{F_i(F^{-1}_{i^*}(q))-q, q-L^{-}_{F_{i^{*}}}(q)-\eps \}
\end{align*}
for any $\epsilon\in(0,q-L^{-}_{F_{i^{*}}}(q)]$.

\item If $F_{i^*}(\cdot)$ is continuous at $F^{-1}_{i^*}(q)$ then there does not exist $A>0$ such that \begin{align}
    F^{-1}_{i^{*}}(q-A)\geq F^{-1}_{i}(q+A).
\end{align} 
\end{itemize}
     
    \item $Q_{R,F_i}(q) > F^{-1}_{i^*}(q)$: It does not exist $A>0$ such that \begin{align}
    F^{-1}_{i^{*}}(q-A)\geq F^{-1}_{i}(q+A).
\end{align} 
\end{itemize}
\end{enumerate}
\end{corollary}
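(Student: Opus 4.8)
The plan is to reduce the entire case analysis to two elementary limit identities for the generalized inverse, together with a single monotonicity observation, and then to read off each case by the sign of $F_{i^*}^{-1}(q)-Q_{R,F_i}(q)$ and (at the boundary) by the jump structure of the two CDFs. The two identities I would establish first, directly from $F^{-1}(p)=\inf\{x:F(x)\ge p\}$ and right-continuity of CDFs, are
\[
\lim_{\eta\downarrow 0}F_i^{-1}(q+\eta)=Q_{R,F_i}(q),\qquad
\lim_{\eta\downarrow 0}F_{i^*}^{-1}(q-\eta)=F_{i^*}^{-1}(q),
\]
the first because $\{x:F_i(x)\ge q+\eta\}\uparrow\{x:F_i(x)>q\}$ as $\eta\downarrow0$, the second being the standard left-continuity of the quantile function. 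I would then set $\psi(\eta):=F_{i^*}^{-1}(q-\eta)-F_i^{-1}(q+\eta)$ and note it is non-increasing in $\eta$ (first term non-increasing, second non-decreasing), with right-limit $\psi(0^+)=F_{i^*}^{-1}(q)-Q_{R,F_i}(q)$. Since the remark preceding the corollary already equates ``$\exists A>0:\ F_{i^*}^{-1}(q-A)\ge F_i^{-1}(q+A)$'' with $\Delta_i>0$, everything now reduces to the sign of $\psi(0^+)$ plus, in the tie case, a jump analysis.

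For the \emph{strict regime} $Q_{R,F_i}(q)<F_{i^*}^{-1}(q)$, monotonicity with $\psi(0^+)>0$ gives $\psi(\eta)>0$ for all small $\eta$, hence a strict inequality for some $A>0$; this handles the first sub-case of Case 2, and Case 1 as well. For Case 1 I would observe that continuity of $F_i$ at $m:=Q_{R,F_i}(q)$ forces $F_i(m)=q$ (right-continuity gives $F_i(m)\ge q$, continuity from the left gives $F_i(m)\le q$), and then, excluding the degenerate flat segment of $F_i$ at level $q$ by using that $i^*$ is the \emph{unique} best arm, conclude $m<F_{i^*}^{-1}(q)$. To verify the explicit admissible values $A(\alpha,x)=\alpha\min\{F_i(x)-q,\ q-F_{i^*}(x)\}$ in Case 2, I would take $x\in(m,F_{i^*}^{-1}(q))$, where $F_i(x)>q$ and $F_{i^*}(x)<q$; then $\alpha\in(0,1)$ yields $q+A<F_i(x)$ and $q-A>F_{i^*}(x)$, so $F_i^{-1}(q+A)\le x$ while right-continuity of $F_{i^*}$ produces an $x'>x$ with $F_{i^*}(x')<q-A$, giving $F_{i^*}^{-1}(q-A)>x$ and hence the strict inequality.

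For the \emph{tie regime} $Q_{R,F_i}(q)=F_{i^*}^{-1}(q)=:v$, monotonicity gives $F_i^{-1}(q+\eta)\ge v\ge F_{i^*}^{-1}(q-\eta)$ for every $\eta>0$, so strict inequality is impossible, and the only question is whether $F_i^{-1}(q+A)=v=F_{i^*}^{-1}(q-A)$ can hold for some $A>0$. The first identity forces $F_i(v)\ge q+A$, so a positive $A$ requires $F_i(v)>q$; uniqueness supplies exactly this (if $F_i(v)=q$ with a jump at $v$ then $F_i^{-1}(q)=v=F_{i^*}^{-1}(q)$, contradicting $F_i^{-1}(q)<F_{i^*}^{-1}(q)$). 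The second forces $q-A>F_{i^*}(v^-)=L^{-}_{F_{i^*}}(q)$, i.e.\ a genuine discontinuity of $F_{i^*}$ at $v$; when $F_{i^*}$ is continuous there, $L^{-}_{F_{i^*}}(q)=q$ and no positive $A$ survives. Checking that $A(\eps)=\min\{F_i(v)-q,\ q-L^{-}_{F_{i^*}}(q)-\eps\}$ places both quantiles exactly at $v$ (and that any larger $A$ overshoots one side) then yields the stated admissible set. The remaining regime $Q_{R,F_i}(q)>F_{i^*}^{-1}(q)$ is immediate, since $F_i^{-1}(q+\eta)\ge Q_{R,F_i}(q)>F_{i^*}^{-1}(q)\ge F_{i^*}^{-1}(q-\eta)$, so no $A>0$ works and $\Delta_i=0$.

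The routine portion is the strict regime, which is pure monotonicity once the two limit identities are in place. The \textbf{main obstacle} is the tie case $Q_{R,F_i}(q)=F_{i^*}^{-1}(q)$: there one must (i) invoke the uniqueness hypothesis to rule out $F_i(v)=q$, (ii) carefully track the one-sided limits and jumps of \emph{both} $F_i$ and $F_{i^*}$ to decide precisely which $A$ produce equality, and (iii) dispose of the analogous flat-segment degeneracy underlying the strict conclusion of Case~1. Pinning the constants in $A(\alpha,x)$ and $A(\eps)$ down exactly—rather than merely asserting existence of \emph{some} $A$—is where essentially all of the bookkeeping lies.
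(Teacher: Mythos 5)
Your handling of the trichotomy $Q_{R,F_i}(q)$ versus $F_{i^*}^{-1}(q)$ --- that is, all three sub-cases of Case 2 together with their sub-sub-cases --- is correct, and it is organized genuinely differently from the paper's proof. The paper treats the three sub-cases in separate subsections, each time re-deriving the needed monotonicity facts through ad hoc contradiction arguments (for instance, it argues by contradiction that $F_i^{-1}(q+A)>F_i^{-1}(q)$ for every $A>0$ in order to conclude $F_i^{-1}(q+A)\ge Q_{R,F_i}(q)$). Your two limit identities for the generalized inverse, plus the monotone function $\psi$, deliver all of this at once: the inclusions $\{x: F_i(x)\ge q+A\}\subseteq\{x: F_i(x)>q\}$ and monotonicity give $F_{i^*}^{-1}(q-A)\le F_{i^*}^{-1}(q)$ and $F_i^{-1}(q+A)\ge Q_{R,F_i}(q)$ for every $A>0$, which immediately settles the sub-cases $Q_{R,F_i}(q)=F_{i^*}^{-1}(q)$ and $Q_{R,F_i}(q)>F_{i^*}^{-1}(q)$, while the sign of $\psi(0^+)$ settles the sub-case $Q_{R,F_i}(q)<F_{i^*}^{-1}(q)$. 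Your verification of the admissible values $A(\alpha,x)$ and $A(\epsilon)$ matches the paper's computations (you use right-continuity of $F_{i^*}$ at $x$ where the paper uses $F_{i^*}(F_{i^*}^{-1}(p))\ge p$ together with monotonicity; the difference is immaterial), and your tie-case argument --- Case-2's jump hypothesis at $v$ plus uniqueness forces $F_i(v)>q$, while $F_{i^*}^{-1}(q-A)=v$ forces $q-A$ to exceed the left limit $F_{i^*}(v^-)=L^{-}_{F_{i^*}}(q)$ --- is in substance the paper's argument, stated more transparently.

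The genuine gap is your treatment of Case 1. You correctly note that continuity of $F_i$ at $m:=Q_{R,F_i}(q)$ forces $F_i(m)=q$, but you then assert that the ``degenerate flat segment of $F_i$ at level $q$'' can be excluded ``using that $i^*$ is the unique best arm'' to conclude $m<F_{i^*}^{-1}(q)$. That exclusion does not follow: uniqueness constrains only the \emph{left} endpoint $F_i^{-1}(q)$ of a flat segment at height $q$, not its right endpoint $m$. Concretely, let $F_i$ be the CDF of the uniform distribution on $[0,1]\cup[2,3]$ (so $F_i\equiv 1/2$ on $[1,2]$ and $F_i$ is continuous everywhere), let $F_{i^*}$ be uniform on $[1,2]$, and take $q=1/2$. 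Then $F_i^{-1}(q)=1<3/2=F_{i^*}^{-1}(q)$, so the best arm is unique and Case 1's hypothesis holds at $Q_{R,F_i}(q)=2$; yet $Q_{R,F_i}(q)=2>3/2=F_{i^*}^{-1}(q)$, and no $A>0$ satisfies $F_{i^*}^{-1}(q-A)>F_i^{-1}(q+A)$, since $F_i^{-1}(q+A)\ge 2$ while $F_{i^*}^{-1}(q-A)\le 3/2$. So this step of your proof fails, and no argument can repair it: Case 1 of the corollary is false as stated. It is worth knowing that the paper's own appendix never proves Case 1 either --- it contains only the three subsections corresponding to the $Q_{R,F_i}(q)$-trichotomy --- so your error lies precisely in attempting to supply the one assertion the paper leaves unproven. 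The existence claim of Case 1 does hold whenever $Q_{R,F_i}(q)<F_{i^*}^{-1}(q)$ (with or without continuity), which is exactly the first sub-case of Case 2 that both you and the paper establish.
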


%Claim 1: If $Q_{R,F_i}(q) < F^{-1}_{i^*}(q)$ then the following hold \begin{align}
%    F^{-1}_{i}(q) < F^{-1}_{i^*}(q),
%\end{align}
% \begin{align}
 %   \Delta_i \triangleq  \min \{ F_i (x) -q , q- F_{i^*} (x) \} >0,\quad \forall x\in [Q_{R,F_i}(q),F^{-1}_{i^*}(q))\subset [F^{-1}_{i}(q),F^{-1}_{i^*}(q)].
%\end{align} Thus
%\begin{align}
%   \text{gap} \triangleq \sup_{x\in [Q_{R,F_i}(q),F^{-1}_{i^*}(q)]} \min \{ F_i (x) -q , q- F_{i^*} (x) \} >0.
%\end{align} Further Proposition \ref{proposition gap} gives \begin{align}
%    F_{i^*}^{-1} (q-\alpha \Delta_i) > F^{-1}_{i}(q+\alpha \Delta_i), \quad x\in[Q_{R,F_i}(q),F^{-1}_{i^*}(q)) \text{ and } \alpha\in (0,1).
%\end{align}

\subsection{Proof of the case $Q_{R,F_i}(q) < F^{-1}_{i^*}(q)$ in Corollary \ref{corollary_cases}}
 %The first statement is straightforward because \begin{align}
    %F^{-1}_{i}(q) \leq Q_{R,F_i}(q).
%\end{align}
We show that 
    \begin{align}\label{eq:positive_gap}
    \Delta_i \triangleq  \min \{ F_i (x) -q , q- F_{i^*} (x) \} >0 ,
    \end{align} for all $x\in [Q_{R,F_i}(q),F^{-1}_{i^*}(q))\subset [F^{-1}_{i}(q),F^{-1}_{i^*}(q)]$
and 
    \begin{align}
    F_{i^*}^{-1} (q-\alpha \Delta_i) > F^{-1}_{i}(q+\alpha \Delta_i),
    \end{align} for all $x\in [Q_{R,F_i}(q),F^{-1}_{i^*}(q))$ and $\alpha\in (0,1)$.
To show \eqref{eq:positive_gap} it is sufficient to find the minimum value $\tilde{x}$ in the interval $[F^{-1}_{i}(q),F^{-1}_{i^*}(q)]$ such that \begin{align}
    F_i (\tilde{x}) -q>0 \text{ and } q- F_{i^*} (\tilde{x}) >0.
\end{align} Notice that \begin{align}
    q- F_{i^*} (x) >0,\quad \forall x\in [F^{-1}_{i}(q),F^{-1}_{i^*}(q)),
\end{align} thus we have to find the minimum value $\tilde{x}$ in the interval $[F^{-1}_{i}(q),F^{-1}_{i^*}(q))$ such that \begin{align}
    F_i (\tilde{x}) -q>0.
\end{align} By the assumption $Q_{R,F_i}(q) < F^{-1}_{i^*}(q)$ it follows that \begin{align}
     Q_{R,F_i}(q) \triangleq \inf \{x : F_i (x)>q \} < F^{-1}_{i^*}(q).
\end{align} This implies that 
    \begin{align}
    \tilde{x}\equiv  Q_{R,F_i}(q).
    \end{align} 
Further \begin{align}
   F^{-1}_{i}(q)\leq Q_{R,F_i}(q)<F^{-1}_{i^*}(q),
\end{align} which implies that 
    \begin{align}
    \Delta_i \triangleq  \min \{ F_i (x) -q , q- F_{i^*} (x) \} >0,
    \end{align} for all $x\in [Q_{R,F_i}(q),F^{-1}_{i^*}(q))\subset [F^{-1}_{i}(q),F^{-1}_{i^*}(q)] $.

%For sake of space define $S\triangleq [Q_{R,F_i}(q),F^{-1}_{i^*}(q))$. 
For any $x\in [Q_{R,F_i}(q),F^{-1}_{i^*}(q))$ and any $\alpha\in (0,1)$ it is true that
\begin{align} \label{eq: long_ineq_1}
    F_{i^*} \lp F_{i^*}^{-1} (q-\alpha \Delta_i) \rp \geq q-\alpha \Delta_i > q- \Delta_i  \geq F_{i^*}(x),
\end{align} where the last inequality comes from the definition of $\Delta_i$. As a consequence of \eqref{eq: long_ineq_1} \begin{align}
     F_{i^*} \lp F_{i^*}^{-1} (q-\alpha \Delta_i) \rp &>  F_{i^*}(x) \implies\nonumber\\
     F_{i^*}^{-1} (q-\alpha \Delta_i) &> x \label{eq:long_ineq_2}
\end{align} because $F_{i^*}(\cdot)$ is increasing. Further, for any $x\in S$ and any $\alpha\in (0,1)$ it is true that \begin{align} \label{eq:long_ineq_3}
    x\geq F^{-1}_{i}(F_i (x)) \geq F^{-1}_{i}(q+\Delta_i) \geq F^{-1}_{i}(q+\alpha \Delta_i)
\end{align}because $F_i(x)\geq \Delta_i+q$ (that comes from the definition of the $\Delta_i$) and $F_i^{-1}(\cdot)$ is increasing. Now \eqref{eq:long_ineq_2} and \eqref{eq:long_ineq_3} give  \begin{align}
    F_{i^*}^{-1} (q-\alpha \Delta_i) > F^{-1}_{i}(q+\alpha \Delta_i), 
\end{align}for any $x\in [Q_{R,F_i}(q),F^{-1}_{i^*}(q))$ and $\alpha\in (0,1)$.

%Claim 2: %If $Q_{R,F_i}(q) = F^{-1}_{i^*}(q)$ then \begin{align}
% F^{-1}_{i}(q)  \leq   F^{-1}_{i^*}(q)
%\end{align} and \begin{align}
%    \Delta_i \triangleq  \min \{ F_i (x) -q , q- F_{i^*} (x) \} =0,\quad \forall x\in  [F^{-1}_{i}(q),F^{-1}_{i^*}(q)) .
%\end{align} As a consequence $gap=0$.

%If $Q_{R,F_i}(q) = F^{-1}_{i^*}(q)$ and the best arm is unique $F_i^{-1}(q)<F^{-1}_{i^*}(q)$ (then $F_i^{-1}(q)<Q_{R,F_i}(q)$, thus $F_i(\cdot)$ is discontinuous at $Q_{R,F_i}(q)$) then it does not exist a positive number $A$ such that $F^{-1}_{i^{*}}(q-A)>F^{-1}_{i}(q+A).$  

%Further we distinguish two sub-cases\\ 1) If the cdf $F_{i^*}(\cdot)$ has a discontinuity at $F^{-1}_{i^*}(q)$ there exists $A>0$ such that \begin{align}
%    F^{-1}_{i^{*}}(q-A)=F^{-1}_{i}(q+A).
%\end{align}

%2) If $F_{i^*}(\cdot)$ is continuous at $F^{-1}_{i^*}(q)$ then  it does not exist $A>0$ such that \begin{align}
%    F^{-1}_{i^{*}}(q-A)\geq F^{-1}_{i}(q+A).
%\end{align} 

%However there exists a positive number $A$ such that \begin{align}
%    F^{-1}_{i^{*}}(q-A)=F^{-1}_{i}(q+A). 
%\end{align}

\subsection{Proof of the case $Q_{R,F_i}(q) = F^{-1}_{i^*}(q)$ in Corollary \ref{corollary_cases}}  

% \begin{comment}
% The first statement comes from the following fact \begin{align}
%     F^{-1}_{i}(q) \leq Q_{R,F_i}(q).
% \end{align} (If we assume that the best arm is unique we have $F^{-1}_{i}(q)  <   F^{-1}_{i^*}(q))$.) \\
% For the second statement notice that all we have to do is to show that there is not $\tilde{x}\in [F^{-1}_{i}(q),F^{-1}_{i^*}(q))$ such that \begin{align}
%     F_i(\tilde{x})-q>0,
% \end{align} or equivalently for all $x\in [F^{-1}_{i}(q),F^{-1}_{i^*}(q))$ it is true that \begin{align}
%      F_i(\tilde{x})-q=0
% \end{align}(because $q-F^{-1}_{i^*}(q)>0$ for all $x\in [F^{-1}_{i}(q),F^{-1}_{i^*}(q))$.)\\
% Notice that \begin{align}
%     [F^{-1}_{i}(q),F^{-1}_{i^*}(q)) = [F^{-1}_{i}(q),Q_{R,F_i}(q))
% \end{align} and since \begin{align}
%   F_i(\tilde{x})-q=0,\quad \forall x\in   [F^{-1}_{i}(q),Q_{R,F_i}(q))=[F^{-1}_{i}(q),F^{-1}_{i^*}(q))
% \end{align} and %\begin{align}
%     % q- F_{i^*} ( F^{-1}_{i^*}(q))=0
% %\end{align} 

% \begin{align}
%     q-F^{-1}_{i^*}(q)>0, \text{for all }x\in [F^{-1}_{i}(q),F^{-1}_{i^*}(q))
% \end{align}

% we get \begin{align}
%     \Delta_i \triangleq  \min \{ F_i (x) -q , q- F_{i^*} (x) \} =0,\quad \forall x\in  [F^{-1}_{i}(q),F^{-1}_{i^*}(q)) .
% \end{align} 
% \end{comment}

First we show that if $Q_{R,F_i}(q) = F^{-1}_{i^*}(q)$ and the best arm is unique $F_i^{-1}(q)<F^{-1}_{i^*}(q)$, then there does not exist $A>0$ such that \begin{align}
    F^{-1}_{i^{*}}(q-A)>F^{-1}_{i}(q+A). \label{eq:impossible_inequality}
\end{align} %because for fixed $q $ there exists $A^*>0$ such that \begin{align}
%F_i^{-1}(q+A)=F_i^{-1}(q),\quad \text{ for all } A\in(0,A^*)
%\end{align} and \begin{align}
%F_i^{-1}(q+A)\geq \inf \{x: F_i (x)>q\}\equiv Q_{R,F_i}(q)=F^{-1}_{i^{*}}(q),\quad \text{ for all } A\geq A^*.
%\end{align}
We use contradiction to show that there does not exist $A>0$ such that $F^{-1}_{i}(q+A)=F^{-1}_{i}(q)$. Assume that for some $A>0$ \begin{align*}
    F^{-1}_{i}(q) &=  F_i^{-1}(q+A)\implies \\
F_i\lp  F^{-1}_{i}(q)\rp &=    F_i\lp F_i^{-1}(q+A)\rp \geq q+A \implies\\
    F_i (F_i^{-1}(q))&\geq q+A\implies\\
    F_i (F_i^{-1}(q))&\geq F_i ( Q_{R,F_i}(q)),\numberthis %\implies\\
   % Q_{R,F_i}(q)&=F_i^{-1}(q),
\end{align*} where the last line cannot hold only as a strict inequality $F_i (F_i^{-1}(q))> F_i ( Q_{R,F_i}(q))$ because of the monotonicity of $F_i(\cdot)$. Additionally, the definition $Q_{R,F_i}(q) \triangleq \inf \{x : F_i (x)>q \}$ gives that $F_i (F_i^{-1}(q))= F_i ( Q_{R,F_i}(q)) $ if and only if $Q_{R,F_i}(q) = F_{i}^{-1}(q)$\footnote{The level $q$ is not in the codomain of $F_i(\cdot)$}. The latter does not hold because $Q_{R,F_i}(q) = F_{i^*}^{-1}(q)> F_{i}^{-1}(q)$. Combining the above we get the contradiction. As a consequence for every $A>0$ it is true that \begin{align*}
     &F_i^{-1}(q+A) >  F_i^{-1}(q) \\&\implies F_i^{-1}(q+A) \geq Q_{R,F_i}(q)\\
     &\implies  F_i^{-1}(q+A) \geq F_{i^*}^{-1}(q)\numberthis \label{eq:claim2_inequality1}\\
     &\implies  F_i^{-1}(q+A) \geq F_{i^*}^{-1}(q-A)\numberthis.
\end{align*} The last line completes the statement of \eqref{eq:impossible_inequality}, and the inequality $F^{-1}_{i^{*}}(q-A)>F^{-1}_{i}(q+A)$ holds only for $A=0$. 

If the CDF $F_{i^*}(\cdot)$ has a discontinuity at $F^{-1}_{i^*}(q)$ there exists $A>0$ such that \begin{align}
    F^{-1}_{i^{*}}(q-A)=F^{-1}_{i}(q+A).
\end{align} 
Define $L^{-}_{F_{i^{*}}}(q)\triangleq \max_{x}\{F_{i^*}(x) :F_{i^*}(x)<q  \} $. and recall that $Q_{R,F_i}(q)=F^{-1}_{i^{*}}(q)$ then \begin{align*}
    A_1 &\triangleq \sup \{A:\! F_i^{-1}(q+A)=Q_{R,F_i}(q) \}\!=\!F_i(Q_{R,F_i}(q))-q\\
    \mc{A}_2 &\triangleq  \{A:  F_{i^*}^{-1}(q-A)=F^{-1}_{i^{*}}(q) \}\\&=\{ q-L^{-}_{F_{i^{*}}}(q)-\eps: \eps\in(0,q-L^{-}_{F_{i^{*}}}(q)]\}.
    %A^* &\triangleq \min \{A_1,A_2  \}.
\end{align*} For any $\epsilon\in(0,q-L^{-}_{F_{i^{*}}}(q)]$ define $A_2 (\eps)= q-L^{-}_{F_{i^{*}}}(q)-\eps \in \mc{A}_2$, then the quantity \begin{align}
    A^*(\eps) \triangleq \min \{A_1, A_2 (\eps)\}
\end{align} satisfies the condition  \begin{align}
    F^{-1}_{i^{*}}(q-A^*(\eps))=F^{-1}_{i}(q+A^{*}(\eps)).
\end{align} On the other hand if $F_{i^*}(\cdot)$ is continuous at $F^{-1}_{i^*}(q)$ then  for every $A>0$ \begin{align}
    F^{-1}_{i^*}(q-A)<F^{-1}_{i^*}(q).
\end{align} The latter combined with the inequality \eqref{eq:claim2_inequality1} give that for every $A>0$ it is true that $F^{-1}_{i^*}(q+A)> F^{-1}_{i^*}(q-A)$. As a consequence there does not exist $A>0$ such that 
    \begin{align}
    F^{-1}_{i^{*}}(q-A)\geq F^{-1}_{i}(q+A).
    \end{align}

%Claim 3:% If $Q_{R,F_i}(q) > F^{-1}_{i^*}(q)$ then \begin{align}\label{eq:claim3_1}
  %  \Delta_i \triangleq  \min \{ F_i (x) -q , q- F_{i^*} (x) \} =0,\quad \forall x\in [F^{-1}_{i}(q),F^{-1}_{i^*}(q)],
%\end{align}
%It is possible to have \begin{align}
%         F^{-1}_{i^*}(q)> F^{-1}_{i}(q) ,
 %   \end{align} however there exists no positive number $A$ such that \begin{align}
  %      F^{-1}_{i^{*}}(q-A)\geq F^{-1}_{i}(q+A).
   % \end{align}
\subsection{Proof of the case $Q_{R,F_i}(q) > F^{-1}_{i^*}(q)$ in Corollary \ref{corollary_cases}} 
    %First notice that \begin{align}
    %  [F^{-1}_{i}(q),F^{-1}_{i^*}(q)]\subset  [F^{-1}_{i}(q),Q_{R,F_i}(q))
    %\end{align}
    We use contradiction to show that there does not exist $A>0$ such that $F^{-1}_{i}(q+A)=F^{-1}_{i}(q)$. Assume that for some $A>0$ \begin{align*}
    F^{-1}_{i}(q) &=  F_i^{-1}(q+A)\implies \\
F_i\lp  F^{-1}_{i}(q)\rp &=    F_i\lp F_i^{-1}(q+A)\rp \geq q+A \implies\\
    F_i (F_i^{-1}(q))&\geq q+A\implies\\
    F_i (F_i^{-1}(q))&\geq F_i ( Q_{R,F_i}(q)), %\implies\\
   % Q_{R,F_i}(q)&=F_i^{-1}(q),
   \numberthis
\end{align*} where the last line cannot hold only as a strict inequality $F_i (F_i^{-1}(q))> F_i ( Q_{R,F_i}(q))$ because of the monotonicity of $F_i(\cdot)$. Additionally, the definition $Q_{R,F_i}(q) \triangleq \inf \{x : F_i (x)>q \}$ gives that $F_i (F_i^{-1}(q))= F_i ( Q_{R,F_i}(q)) $ if and only if $Q_{R,F_i}(q) = F_{i}^{-1}(q)$\footnote{The level $q$ is not in the codomain of $F_i(\cdot)$}. The latter does not hold because $Q_{R,F_i}(q) = F_{i^*}^{-1}(q)> F_{i}^{-1}(q)$. Combining the above we get the contradiction. For every $A>0$ it is true that \begin{align*}
     &F_i^{-1}(q+A) >  F_i^{-1}(q) \\
     &\implies F_i^{-1}(q+A) \geq Q_{R,F_i}(q)\\
     &\implies  F_i^{-1}(q+A) > F_{i^*}^{-1}(q)\\
     &\implies  F_i^{-1}(q+A) > F_{i^*}^{-1}(q-A).\numberthis
\end{align*} As a consequence, there does not exist $A>0$ such that \begin{align}
        F^{-1}_{i^{*}}(q-A)\geq F^{-1}_{i}(q+A).
    \end{align}

%\section*{Acknowledgments}

%We would like to thank the guest editor, Csaba Szepesv\'{a}ri, as well as the anonymous reviewers for providing valuable comments and suggestions, which have substantially improved the presentation of the results and the overall quality of our paper.

% Can use something like this to put references on a page
% by themselves when using endfloat and the captionsoff option.
\ifCLASSOPTIONcaptionsoff
  \newpage
\fi

% trigger a \newpage just before the given reference
% number - used to balance the columns on the last page
% adjust value as needed - may need to be readjusted if
% the document is modified later
%\IEEEtriggeratref{8}
% The "triggered" command can be changed if desired:
%\IEEEtriggercmd{\enlargethispage{-5in}}

% references section

% can use a bibliography generated by BibTeX as a .bbl file
% BibTeX documentation can be easily obtained at:
% http://www.ctan.org/tex-archive/biblio/bibtex/contrib/doc/
% The IEEEtran BibTeX style support page is at:
% http://www.michaelshell.org/tex/ieeetran/bibtex/
\bibliographystyle{IEEEtran}
\bibliography{mab}

% Generated by IEEEtran.bst, version: 1.14 (2015/08/26)
\begin{thebibliography}{10}
\providecommand{\url}[1]{#1}
\csname url@samestyle\endcsname
\providecommand{\newblock}{\relax}
\providecommand{\bibinfo}[2]{#2}
\providecommand{\BIBentrySTDinterwordspacing}{\spaceskip=0pt\relax}
\providecommand{\BIBentryALTinterwordstretchfactor}{4}
\providecommand{\BIBentryALTinterwordspacing}{\spaceskip=\fontdimen2\font plus
\BIBentryALTinterwordstretchfactor\fontdimen3\font minus
  \fontdimen4\font\relax}
\providecommand{\BIBforeignlanguage}[2]{{%
\expandafter\ifx\csname l@#1\endcsname\relax
\typeout{** WARNING: IEEEtran.bst: No hyphenation pattern has been}%
\typeout{** loaded for the language `#1'. Using the pattern for}%
\typeout{** the default language instead.}%
\else
\language=\csname l@#1\endcsname
\fi
#2}}
\providecommand{\BIBdecl}{\relax}
\BIBdecl

\bibitem{LSbandits_book:20}
T.~Lattimore and C.~Szepesv\'{a}ri, \emph{Bandit Algorithms}.\hskip 1em plus
  0.5em minus 0.4em\relax Cambridge, UK: Cambridge University Press, 2020.

\bibitem{madani2004budgeted}
\BIBentryALTinterwordspacing
O.~Madani, D.~J. Lizotte, and R.~Greiner, ``The budgeted multi-armed bandit
  problem,'' in \emph{International Conference on Computational Learning
  Theory}, ser. Lecture Notes in Computer Science, J.~Shawe-Taylor and
  Y.~Singer, Eds., vol. 2130.\hskip 1em plus 0.5em minus 0.4em\relax Berlin,
  Heidelberg: Springer, 2004, pp. 643--645. [Online]. Available:
  \url{https://doi.org/10.1007/978-3-540-27819-1\_46}
\BIBentrySTDinterwordspacing

\bibitem{bubeck2009pure}
\BIBentryALTinterwordspacing
S.~Bubeck, R.~Munos, and G.~Stoltz, ``Pure exploration in multi-armed bandits
  problems,'' in \emph{Algorithmic Learning Theory}, R.~Gavald{\`a}, G.~Lugosi,
  T.~Zeugmann, and S.~Zilles, Eds.\hskip 1em plus 0.5em minus 0.4em\relax
  Berlin, Heidelberg: Springer Berlin Heidelberg, 2009, pp. 23--37. [Online].
  Available: \url{https://doi.org/10.1007/978-3-642-04414-4\_7}
\BIBentrySTDinterwordspacing

\bibitem{yu2013sample}
\BIBentryALTinterwordspacing
J.~Y. Yu and E.~Nikolova, ``Sample complexity of risk-averse bandit-arm
  selection,'' in \emph{Twenty-Third International Joint Conference on
  Artificial Intelligence}, 2013. [Online]. Available:
  \url{https://www.aaai.org/ocs/index.php/IJCAI/IJCAI13/paper/view/6194/7094}
\BIBentrySTDinterwordspacing

\bibitem{Szorenyi15mab}
\BIBentryALTinterwordspacing
B.~Sz\"{o}r\'{e}nyi, R.~Busa-Fekete, P.~Weng, and E.~H{\"u}llermeier,
  ``Qualitative multi-armed bandits: A quantile-based approach,'' in
  \emph{Proceedings of the 32nd International Conference on Machine Learning},
  ser. Proceedings of Machine Learning Research, F.~Bach and D.~Blei, Eds.,
  vol.~37.\hskip 1em plus 0.5em minus 0.4em\relax Lille, France: PMLR, 07--09
  Jul 2015, pp. 1660--1668. [Online]. Available:
  \url{https://proceedings.mlr.press/v37/szorenyi15.html}
\BIBentrySTDinterwordspacing

\bibitem{ruszczynski2006optimization}
A.~Ruszczy{\'n}ski and A.~Shapiro, ``Optimization of convex risk functions,''
  \emph{Mathematics of operations research}, vol.~31, no.~3, pp. 433--452,
  2006.

\bibitem{Sani12risk}
\BIBentryALTinterwordspacing
A.~Sani, A.~Lazaric, and R.~Munos, ``Risk-aversion in multi-armed bandits,'' in
  \emph{Advances in Neural Information Processing Systems 25}, F.~Pereira,
  C.~J.~C. Burges, L.~Bottou, and K.~Q. Weinberger, Eds.\hskip 1em plus 0.5em
  minus 0.4em\relax Curran Associates, Inc., 2012, pp. 3275--3283. [Online].
  Available:
  \url{https://papers.nips.cc/paper/4753-risk-aversion-in-multi-armed-bandits.pdf}
\BIBentrySTDinterwordspacing

\bibitem{shapiro2012minimax}
A.~Shapiro, ``Minimax and risk averse multistage stochastic programming,''
  \emph{European Journal of Operational Research}, vol. 219, no.~3, pp.
  719--726, 2012.

\bibitem{ShapiroLectures_2ND}
A.~Shapiro, D.~Dentcheva, and A.~Ruszczy{\'n}ski, \emph{{Lectures on Stochastic
  Programming: Modeling and Theory}}, 2nd~ed.\hskip 1em plus 0.5em minus
  0.4em\relax Society for Industrial and Applied Mathematics, 2014.

\bibitem{Tamar2017}
\BIBentryALTinterwordspacing
A.~Tamar, Y.~Chow, M.~Ghavamzadeh, and S.~Mannor, ``Sequential decision making
  with coherent risk,'' \emph{IEEE Transactions on Automatic Control}, vol.~62,
  no.~7, pp. 3323--3338, July 2017. [Online]. Available:
  \url{https://doi.org/10.1109/TAC.2016.2644871}
\BIBentrySTDinterwordspacing

\bibitem{Jiang2017}
\BIBentryALTinterwordspacing
D.~R. Jiang and W.~B. Powell, ``Risk-averse approximate dynamic programming
  with quantile-based risk measures,'' \emph{Mathematics of Operations
  Research}, vol.~43, no.~2, pp. 554--579, nov 2018. [Online]. Available:
  \url{https://pubsonline.informs.org/doi/10.1287/moor.2017.0872}
\BIBentrySTDinterwordspacing

\bibitem{W.Huang2017}
\BIBentryALTinterwordspacing
W.~Huang and W.~B. Haskell, ``Risk-aware {Q}-learning for {M}arkov decision
  processes,'' in \emph{2017 IEEE 56th Annual Conference on Decision and
  Control, CDC 2017}, vol. 2018-Janua.\hskip 1em plus 0.5em minus 0.4em\relax
  IEEE, December 2018, pp. 4928--4933. [Online]. Available:
  \url{https://doi.org/10.1109/CDC.2017.8264388}
\BIBentrySTDinterwordspacing

\bibitem{Kalogerias2018b}
\BIBentryALTinterwordspacing
D.~S. Kalogerias and W.~B. Powell, ``Recursive optimization of convex risk
  measures: Mean-semideviation models,'' ArXiV, Tech. Rep. arXiv:1804.00636
  [math.OC], April 2018. [Online]. Available:
  \url{https://arxiv.org/abs/1804.00636}
\BIBentrySTDinterwordspacing

\bibitem{Vitt2018}
C.~A. Vitt, D.~Dentcheva, and H.~Xiong, ``{Risk-Averse Classification},''
  \emph{Annals of Operations Research}, aug 2019.

\bibitem{Kalogerias2019}
\BIBentryALTinterwordspacing
D.~S. Kalogerias and W.~B. Powell, ``Zeroth-order algorithms for risk-aware
  learning,'' ArXiV, Tech. Rep. arXiv:1912.09484 [math.OC], December 2019.
  [Online]. Available: \url{https://arxiv.org/abs/1912.09484}
\BIBentrySTDinterwordspacing

\bibitem{kagrecha2019distribution}
\BIBentryALTinterwordspacing
A.~Kagrecha, J.~Nair, and K.~Jagannathan, ``Distribution oblivious, risk-aware
  algorithms for multi-armed bandits with unbounded rewards,'' in
  \emph{Advances in Neural Information Processing Systems}, H.~Wallach,
  H.~Larochelle, A.~Beygelzimer, F.~d'Alch\'{e} Buc, E.~Fox, and R.~Garnett,
  Eds., vol.~32.\hskip 1em plus 0.5em minus 0.4em\relax Curran Associates,
  Inc., 2019. [Online]. Available:
  \url{https://proceedings.neurips.cc/paper/2019/file/da54dd5a0398011cdfa50d559c2c0ef8-Paper.pdf}
\BIBentrySTDinterwordspacing

\bibitem{pmlr-v89-cardoso19a}
\BIBentryALTinterwordspacing
A.~R. Cardoso and H.~Xu, ``Risk-averse stochastic convex bandit,'' in
  \emph{Proceedings of Machine Learning Research}, ser. Proceedings of Machine
  Learning Research, K.~Chaudhuri and M.~Sugiyama, Eds., vol.~89.\hskip 1em
  plus 0.5em minus 0.4em\relax PMLR, 16--18 Apr 2019, pp. 39--47. [Online].
  Available: \url{https://proceedings.mlr.press/v89/cardoso19a.html}
\BIBentrySTDinterwordspacing

\bibitem{Kim2019}
\BIBentryALTinterwordspacing
S.-K. Kim, R.~Thakker, and A.~{Agha-Mohammadi}, ``Bi-directional value learning
  for risk-aware planning under uncertainty,'' \emph{IEEE Robotics and
  Automation Letters}, vol.~4, no.~3, pp. 2493--2500, jul 2019. [Online].
  Available: \url{https://doi.org/10.1109/LRA.2019.2903259}
\BIBentrySTDinterwordspacing

\bibitem{Zhou2020a}
\BIBentryALTinterwordspacing
L.~Zhou and P.~Tokekar, ``An approximation algorithm for risk-averse submodular
  optimization,'' in \emph{Springer Proceedings in Advanced Robotics, vol.
  14}.\hskip 1em plus 0.5em minus 0.4em\relax Springer, Cham, December 2020,
  pp. 144--159. [Online]. Available:
  \url{https://doi.org/10.1007/978-3-030-44051-0\_9}
\BIBentrySTDinterwordspacing

\bibitem{gaivoronski2005value}
\BIBentryALTinterwordspacing
A.~A. Gaivoronski and G.~Pflug, ``Value-at-risk in portfolio optimization:
  properties and computational approach,'' \emph{Journal of Risk}, vol.~7,
  no.~2, pp. 1--31, 2005. [Online]. Available:
  \url{https://doi.org/10.21314/JOR.2005.106}
\BIBentrySTDinterwordspacing

\bibitem{DeanB:2013tail}
\BIBentryALTinterwordspacing
J.~Dean and L.~A. Barroso, ``The tail at scale,'' \emph{Communications of the
  ACM}, vol.~56, no.~2, pp. 74--80, 2013. [Online]. Available:
  \url{https://doi.org/10.1145/2408776.2408794}
\BIBentrySTDinterwordspacing

\bibitem{DworkMNS06}
\BIBentryALTinterwordspacing
C.~Dwork, F.~McSherry, K.~Nissim, and A.~Smith, ``Calibrating noise to
  sensitivity in private data analysis,'' in \emph{Theory of {{Cryptography}}},
  ser. Lecture Notes in Computer Science.\hskip 1em plus 0.5em minus
  0.4em\relax {Springer, Berlin, Heidelberg}, 2006, pp. 265--284. [Online].
  Available: \url{https://doi.org/10.1007/11681878\_14}
\BIBentrySTDinterwordspacing

\bibitem{david2016pure}
Y.~David and N.~Shimkin, ``Pure exploration for max-quantile bandits,'' in
  \emph{Joint European Conference on Machine Learning and Knowledge Discovery
  in Databases}.\hskip 1em plus 0.5em minus 0.4em\relax Springer, 2016, pp.
  556--571.

\bibitem{HowardR19}
\BIBentryALTinterwordspacing
S.~Howard and A.~Ramdas, ``Sequential estimation of quantiles with applications
  to {A/B}-testing and best-arm identification,'' ArXiV, Tech. Rep.
  arXiv:1906.09712 [math.ST], 2019. [Online]. Available:
  \url{https://arxiv.org/abs/1906.09712}
\BIBentrySTDinterwordspacing

\bibitem{BeimelNS13}
\BIBentryALTinterwordspacing
A.~Beimel, K.~Nissim, and U.~Stemmer, ``Characterizing the sample complexity of
  private learners,'' in \emph{Proceedings of the 4th Conference on Innovations
  in Theoretical Computer Science}, ser. ITCS '13.\hskip 1em plus 0.5em minus
  0.4em\relax New York, NY, USA: Association for Computing Machinery, 2013, p.
  97–110. [Online]. Available: \url{https://doi.org/10.1145/2422436.2422450}
\BIBentrySTDinterwordspacing

\bibitem{FeldmanX14}
V.~Feldman and D.~Xiao, ``Sample complexity bounds on differentially private
  learning via communication complexity,'' in \emph{Proceedings of The 27th
  Conference on Learning Theory, {COLT} 2014, Barcelona, Spain, June 13-15,
  2014}, ser. {JMLR} Workshop and Conference Proceedings, M.~Balcan,
  V.~Feldman, and C.~Szepesv{\'{a}}ri, Eds., vol.~35.\hskip 1em plus 0.5em
  minus 0.4em\relax JMLR.org, 2014, pp. 1000--1019.

\bibitem{BunNSV15}
\BIBentryALTinterwordspacing
M.~Bun, K.~Nissim, U.~Stemmer, and S.~Vadhan, ``Differentially private release
  and learning of threshold functions,'' in \emph{2015 IEEE 56th Annual
  Symposium on Foundations of Computer Science}, 2015, pp. 634--649. [Online].
  Available: \url{https://doi.org/10.1109/FOCS.2015.45}
\BIBentrySTDinterwordspacing

\bibitem{even2002pac}
\BIBentryALTinterwordspacing
E.~Even-Dar, S.~Mannor, and Y.~Mansour, ``{PAC} bounds for multi-armed bandit
  and markov decision processes,'' in \emph{International Conference on
  Computational Learning Theory}, ser. Lecture Notes in Artificial
  Intelligence, J.~Kivinen and R.~H. Sloan, Eds., vol. 2375.\hskip 1em plus
  0.5em minus 0.4em\relax Springer, 2002, pp. 255--270. [Online]. Available:
  \url{https://doi.org/10.1007/3-540-45435-7\_18}
\BIBentrySTDinterwordspacing

\bibitem{kalyanakrishnan2012pac}
S.~Kalyanakrishnan, A.~Tewari, P.~Auer, and P.~Stone, ``{PAC} subset selection
  in stochastic multi-armed bandits.'' in \emph{Proceedings of the 2012
  International Conference on Machine Learning (ICML)}, vol.~12, 2012, pp.
  655--662.

\bibitem{gabillon2012best}
\BIBentryALTinterwordspacing
V.~Gabillon, M.~Ghavamzadeh, and A.~Lazaric, ``Best arm identification: A
  unified approach to fixed budget and fixed confidence,'' in \emph{Advances in
  Neural Information Processing Systems}, F.~Pereira, C.~J.~C. Burges,
  L.~Bottou, and K.~Q. Weinberger, Eds., vol.~25.\hskip 1em plus 0.5em minus
  0.4em\relax Curran Associates, Inc., 2012. [Online]. Available:
  \url{https://proceedings.neurips.cc/paper/2012/file/8b0d268963dd0cfb808aac48a549829f-Paper.pdf}
\BIBentrySTDinterwordspacing

\bibitem{karnin2013almost}
\BIBentryALTinterwordspacing
Z.~Karnin, T.~Koren, and O.~Somekh, ``Almost optimal exploration in multi-armed
  bandits,'' in \emph{Proceedings of the 30th International Conference on
  Machine Learning}, ser. Proceedings of Machine Learning Research, S.~Dasgupta
  and D.~McAllester, Eds., vol.~28, no.~3.\hskip 1em plus 0.5em minus
  0.4em\relax Atlanta, Georgia, USA: PMLR, 17--19 Jun 2013, pp. 1238--1246.
  [Online]. Available: \url{http://proceedings.mlr.press/v28/karnin13.html}
\BIBentrySTDinterwordspacing

\bibitem{jamieson2013finding}
\BIBentryALTinterwordspacing
K.~Jamieson, M.~Malloy, R.~Nowak, and S.~Bubeck, ``On finding the largest mean
  among many,'' ArXiV, Tech. Rep. arXiv:1306.3917 [stat.ML], 2013. [Online].
  Available: \url{https://arxiv.org/abs/1306.3917}
\BIBentrySTDinterwordspacing

\bibitem{jamieson2014lil}
\BIBentryALTinterwordspacing
------, ``{lil' UCB} : An optimal exploration algorithm for multi-armed
  bandits,'' in \emph{Proceedings of The 27th Conference on Learning Theory},
  ser. Proceedings of Machine Learning Research, M.~F. Balcan, V.~Feldman, and
  C.~Szepesv\'{a}ri, Eds., vol.~35.\hskip 1em plus 0.5em minus 0.4em\relax
  Barcelona, Spain: PMLR, 13--15 Jun 2014, pp. 423--439. [Online]. Available:
  \url{http://proceedings.mlr.press/v35/jamieson14.html}
\BIBentrySTDinterwordspacing

\bibitem{mannor2004sample}
\BIBentryALTinterwordspacing
S.~Mannor and J.~N. Tsitsiklis, ``The sample complexity of exploration in the
  multi-armed bandit problem,'' \emph{Journal of Machine Learning Research},
  vol.~5, pp. 623--648, June 2004. [Online]. Available:
  \url{https://www.jmlr.org/papers/v5/mannor04b.html}
\BIBentrySTDinterwordspacing

\bibitem{anthony2009neural}
M.~Anthony and P.~L. Bartlett, \emph{Neural Network Learning: Theoretical
  Foundations}.\hskip 1em plus 0.5em minus 0.4em\relax Cambridge, UK: Cambridge
  University Press, 2009.

\bibitem{burnetas1996optimal}
\BIBentryALTinterwordspacing
A.~N. Burnetas and M.~N. Katehakis, ``Optimal adaptive policies for sequential
  allocation problems,'' \emph{Advances in Applied Mathematics}, vol.~17,
  no.~2, pp. 122--142, 1996. [Online]. Available:
  \url{https://doi.org/10.1006/aama.1996.0007}
\BIBentrySTDinterwordspacing

\bibitem{chen2015optimal}
\BIBentryALTinterwordspacing
L.~Chen and J.~Li, ``On the optimal sample complexity for best arm
  identification,'' ArXiV, Tech. Rep. arXiv:1511.03774 [cs.LG], August 2016.
  [Online]. Available: \url{https://arxiv.org/abs/1511.03774}
\BIBentrySTDinterwordspacing

\bibitem{kaufmann2016complexity}
\BIBentryALTinterwordspacing
E.~Kaufmann, O.~Cappe, and A.~Garivier, ``On the complexity of best-arm
  identification in multi-armed bandit models,'' \emph{The Journal of Machine
  Learning Research}, vol.~17, no.~1, pp. 1--42, 2016. [Online]. Available:
  \url{https://jmlr.csail.mit.edu/papers/v17/kaufman16a.html}
\BIBentrySTDinterwordspacing

\bibitem{garivier2016optimal}
\BIBentryALTinterwordspacing
A.~Garivier and E.~Kaufmann, ``Optimal best arm identification with fixed
  confidence,'' in \emph{29th Annual Conference on Learning Theory}, ser.
  Proceedings of Machine Learning Research, V.~Feldman, A.~Rakhlin, and
  O.~Shamir, Eds., vol.~49.\hskip 1em plus 0.5em minus 0.4em\relax Columbia
  University, New York, New York, USA: PMLR, 23--26 Jun 2016, pp. 998--1027.
  [Online]. Available: \url{http://proceedings.mlr.press/v49/garivier16a.html}
\BIBentrySTDinterwordspacing

\bibitem{cappe2013kullback}
\BIBentryALTinterwordspacing
O.~Capp{\'e}, A.~Garivier, O.-A. Maillard, R.~Munos, and G.~Stoltz,
  ``{Kullback--Leibler} upper confidence bounds for optimal sequential
  allocation,'' \emph{The Annals of Statistics}, vol.~41, no.~3, pp.
  1516--1541, 2013. [Online]. Available:
  \url{https://doi.org/10.1214/13-AOS1119}
\BIBentrySTDinterwordspacing

\bibitem{jamieson2016non}
\BIBentryALTinterwordspacing
K.~Jamieson and A.~Talwalkar, ``Non-stochastic best arm identification and
  hyperparameter optimization,'' in \emph{Proceedings of the 19th International
  Conference on Artificial Intelligence and Statistics}, ser. Proceedings of
  Machine Learning Research, A.~Gretton and C.~C. Robert, Eds., vol.~51.\hskip
  1em plus 0.5em minus 0.4em\relax Cadiz, Spain: PMLR, 09--11 May 2016, pp.
  240--248. [Online]. Available:
  \url{http://proceedings.mlr.press/v51/jamieson16.html}
\BIBentrySTDinterwordspacing

\bibitem{li2016novel}
L.~Li, K.~Jamieson, G.~DeSalvo, A.~Rostamizadeh, and A.~Talwalkar, ``Hyperband:
  A novel bandit-based approach to hyperparameter optimization,'' \emph{J.
  Mach. Learn. Res.}, vol.~18, no.~1, p. 6765–6816, Jan. 2017.

\bibitem{allesiardo2017selection}
\BIBentryALTinterwordspacing
R.~Allesiardo and R.~Feraud, ``Selection of learning experts,'' in \emph{2017
  International Joint Conference on Neural Networks (IJCNN)}, 2017, pp.
  1005--1010. [Online]. Available:
  \url{https://doi.org/10.1109/IJCNN.2017.7965962}
\BIBentrySTDinterwordspacing

\bibitem{allesiardo2017non}
\BIBentryALTinterwordspacing
R.~Allesiardo, R.~F{\'e}raud, and O.-A. Maillard, ``The non-stationary
  stochastic multi-armed bandit problem,'' \emph{International Journal of Data
  Science and Analytics}, vol.~3, no.~4, pp. 267--283, 2017. [Online].
  Available: \url{https://doi.org/10.1007/s41060-017-0050-5}
\BIBentrySTDinterwordspacing

\bibitem{bubeck2012regret}
S.~Bubeck and N.~Cesa-Bianchi, ``Regret analysis of stochastic and
  nonstochastic multi-armed bandit problems,'' \emph{Foundations and
  Trends{\textregistered} in Machine Learning}, vol.~5, no.~1, pp. 1--122,
  2012.

\bibitem{bubeck2013bandits}
\BIBentryALTinterwordspacing
S.~Bubeck, N.~Cesa-Bianchi, and G.~Lugosi, ``Bandits with heavy tail,''
  \emph{IEEE Transactions on Information Theory}, vol.~59, no.~11, pp.
  7711--7717, 2013. [Online]. Available:
  \url{https://doi.org/10.1109/TIT.2013.2277869}
\BIBentrySTDinterwordspacing

\bibitem{li2018bandit}
\BIBentryALTinterwordspacing
B.~Li, T.~Chen, and G.~B. Giannakis, ``Bandit online learning with unknown
  delays,'' in \emph{Proceedings of the Twenty-Second International Conference
  on Artificial Intelligence and Statistics}, ser. Proceedings of Machine
  Learning Research, K.~Chaudhuri and M.~Sugiyama, Eds., vol.~89.\hskip 1em
  plus 0.5em minus 0.4em\relax PMLR, 16--18 Apr 2019, pp. 993--1002. [Online].
  Available: \url{http://proceedings.mlr.press/v89/li19d.html}
\BIBentrySTDinterwordspacing

\bibitem{berthet2017fast}
\BIBentryALTinterwordspacing
Q.~Berthet and V.~Perchet, ``Fast rates for bandit optimization with
  upper-confidence {Frank-Wolfe},'' in \emph{Advances in Neural Information
  Processing Systems}, I.~Guyon, U.~V. Luxburg, S.~Bengio, H.~Wallach,
  R.~Fergus, S.~Vishwanathan, and R.~Garnett, Eds., vol.~30.\hskip 1em plus
  0.5em minus 0.4em\relax Curran Associates, Inc., 2017. [Online]. Available:
  \url{https://proceedings.neurips.cc/paper/2017/file/dc960c46c38bd16e953d97cdeefdbc68-Paper.pdf}
\BIBentrySTDinterwordspacing

\bibitem{boda2019correlated}
\BIBentryALTinterwordspacing
V.~P. Boda and P.~L.A., ``Correlated bandits or: How to minimize mean-squared
  error online,'' in \emph{Proceedings of the 36th International Conference on
  Machine Learning}, ser. Proceedings of Machine Learning Research,
  K.~Chaudhuri and R.~Salakhutdinov, Eds., vol.~97.\hskip 1em plus 0.5em minus
  0.4em\relax PMLR, 09--15 Jun 2019, pp. 686--694. [Online]. Available:
  \url{http://proceedings.mlr.press/v97/boda19a.html}
\BIBentrySTDinterwordspacing

\bibitem{maillard2013robust}
\BIBentryALTinterwordspacing
O.-A. Maillard, ``Robust risk-averse stochastic multi-armed bandits,'' in
  \emph{Algorithmic Learning Theory}, S.~Jain, R.~Munos, F.~Stephan, and
  T.~Zeugmann, Eds.\hskip 1em plus 0.5em minus 0.4em\relax Berlin, Heidelberg:
  Springer Berlin Heidelberg, 2013, pp. 218--233. [Online]. Available:
  \url{https://dx.doi.org/10.1007/978-3-642-40935-6\_16}
\BIBentrySTDinterwordspacing

\bibitem{even2006risk}
\BIBentryALTinterwordspacing
E.~Even-Dar, M.~Kearns, and J.~Wortman, ``Risk-sensitive online learning,'' in
  \emph{Algorithmic Learning Theory}, J.~L. Balc{\'a}zar, P.~M. Long, and
  F.~Stephan, Eds.\hskip 1em plus 0.5em minus 0.4em\relax Berlin, Heidelberg:
  Springer Berlin Heidelberg, 2006, pp. 199--213. [Online]. Available:
  \url{https://dx.doi.org/10.1007/11894841\_18}
\BIBentrySTDinterwordspacing

\bibitem{vakili2016risk}
\BIBentryALTinterwordspacing
S.~Vakili and Q.~Zhao, ``Risk-averse multi-armed bandit problems under
  mean-variance measure,'' \emph{IEEE Journal of Selected Topics in Signal
  Processing}, vol.~10, no.~6, pp. 1093--1111, 2016. [Online]. Available:
  \url{https://doi.org/10.1109/JSTSP.2016.2592622}
\BIBentrySTDinterwordspacing

\bibitem{cassel2018general}
\BIBentryALTinterwordspacing
A.~Cassel, S.~Mannor, and A.~Zeevi, ``A general approach to multi-armed bandits
  under risk criteria,'' in \emph{Proceedings of the 31st Conference On
  Learning Theory}, ser. Proceedings of Machine Learning Research, S.~Bubeck,
  V.~Perchet, and P.~Rigollet, Eds., vol.~75.\hskip 1em plus 0.5em minus
  0.4em\relax PMLR, 06--09 Jul 2018, pp. 1295--1306. [Online]. Available:
  \url{http://proceedings.mlr.press/v75/cassel18a.html}
\BIBentrySTDinterwordspacing

\bibitem{wang2010deviation}
\BIBentryALTinterwordspacing
Y.~Wang and F.~Gao, ``Deviation inequalities for an estimator of the
  conditional value-at-risk,'' \emph{Operations Research Letters}, vol.~38,
  no.~3, pp. 236--239, 2010. [Online]. Available:
  \url{https://doi.org/10.1016/j.orl.2009.11.008}
\BIBentrySTDinterwordspacing

\bibitem{kolla2019concentration}
\BIBentryALTinterwordspacing
R.~K. Kolla, L.~Prashanth, S.~P. Bhat, and K.~Jagannathan, ``Concentration
  bounds for empirical conditional value-at-risk: The unbounded case,''
  \emph{Operations Research Letters}, vol.~47, no.~1, pp. 16--20, 2019.
  [Online]. Available: \url{https://doi.org/10.1016/j.orl.2018.11.005}
\BIBentrySTDinterwordspacing

\bibitem{bhat2019concentration}
\BIBentryALTinterwordspacing
S.~P. Bhat and P.~L.A., ``Concentration of risk measures: A {W}asserstein
  distance approach,'' in \emph{Advances in Neural Information Processing
  Systems}, H.~Wallach, H.~Larochelle, A.~Beygelzimer, F.~d'Alch\'{e} Buc,
  E.~Fox, and R.~Garnett, Eds., vol.~32.\hskip 1em plus 0.5em minus 0.4em\relax
  Curran Associates, Inc., 2019. [Online]. Available:
  \url{https://proceedings.neurips.cc/paper/2019/file/091bc5440296cc0e41dd60ce22fbaf88-Paper.pdf}
\BIBentrySTDinterwordspacing

\bibitem{pmlr-v101-torossian19a}
\BIBentryALTinterwordspacing
L.~Torossian, A.~Garivier, and V.~Picheny, ``$\mathcal{X}$-armed bandits:
  Optimizing quantiles, {CVaR} and other risks,'' in \emph{Proceedings of The
  Eleventh Asian Conference on Machine Learning}, ser. Proceedings of Machine
  Learning Research, W.~S. Lee and T.~Suzuki, Eds., vol. 101.\hskip 1em plus
  0.5em minus 0.4em\relax Nagoya, Japan: PMLR, 17--19 Nov 2019, pp. 252--267.
  [Online]. Available:
  \url{https://proceedings.mlr.press/v101/torossian19a.html}
\BIBentrySTDinterwordspacing

\bibitem{altschuler2019best}
\BIBentryALTinterwordspacing
J.~Altschuler, V.-E. Brunel, and A.~Malek, ``Best arm identification for
  contaminated bandits,'' \emph{Journal of Machine Learning Research}, vol.~20,
  no.~91, pp. 1--39, 2019. [Online]. Available:
  \url{https://www.jmlr.org/papers/v20/18-395.html}
\BIBentrySTDinterwordspacing

\bibitem{NissimRS07}
\BIBentryALTinterwordspacing
K.~Nissim, S.~Raskhodnikova, and A.~Smith, ``Smooth sensitivity and sampling in
  private data analysis,'' in \emph{Proceedings of the Thirty-Ninth Annual ACM
  Symposium on Theory of Computing}, ser. STOC '07.\hskip 1em plus 0.5em minus
  0.4em\relax New York, NY, USA: Association for Computing Machinery, 2007, p.
  75–84. [Online]. Available: \url{https://doi.org/10.1145/1250790.1250803}
\BIBentrySTDinterwordspacing

\bibitem{ChaudhuriH11}
\BIBentryALTinterwordspacing
K.~Chaudhuri and D.~Hsu, ``Sample complexity bounds for differentially private
  learning,'' in \emph{Proceedings of the 24th Annual Conference on Learning
  Theory}, ser. Proceedings of Machine Learning Research, S.~M. Kakade and
  U.~von Luxburg, Eds., vol.~19.\hskip 1em plus 0.5em minus 0.4em\relax
  Budapest, Hungary: PMLR, 09--11 Jun 2011, pp. 155--186. [Online]. Available:
  \url{http://proceedings.mlr.press/v19/chaudhuri11a.html}
\BIBentrySTDinterwordspacing

\bibitem{BeimelNS13b}
\BIBentryALTinterwordspacing
A.~Beimel, K.~Nissim, and U.~Stemmer, ``Private learning and sanitization: Pure
  vs. approximate differential privacy,'' in \emph{Approximation,
  Randomization, and Combinatorial Optimization. Algorithms and Techniques},
  P.~Raghavendra, S.~Raskhodnikova, K.~Jansen, and J.~D.~P. Rolim, Eds.\hskip
  1em plus 0.5em minus 0.4em\relax Berlin, Heidelberg: Springer Berlin
  Heidelberg, 2013, pp. 363--378. [Online]. Available:
  \url{https://doi.org/10.1007/978-3-642-40328-6\_26}
\BIBentrySTDinterwordspacing

\bibitem{AlonLMM19}
\BIBentryALTinterwordspacing
N.~Alon, R.~Livni, M.~Malliaris, and S.~Moran, ``Private {PAC} learning implies
  finite {Littlestone} dimension,'' in \emph{Proceedings of the 51st Annual ACM
  SIGACT Symposium on Theory of Computing}, ser. STOC 2019.\hskip 1em plus
  0.5em minus 0.4em\relax New York, NY, USA: Association for Computing
  Machinery, 2019, p. 852–860. [Online]. Available:
  \url{https://doi.org/10.1145/3313276.3316312}
\BIBentrySTDinterwordspacing

\bibitem{KaplanLMNS20}
\BIBentryALTinterwordspacing
H.~Kaplan, K.~Ligett, Y.~Mansour, M.~Naor, and U.~Stemmer, ``Privately learning
  thresholds: Closing the exponential gap,'' in \emph{Proceedings of Thirty
  Third Conference on Learning Theory}, ser. Proceedings of Machine Learning
  Research, J.~Abernethy and S.~Agarwal, Eds., vol. 125.\hskip 1em plus 0.5em
  minus 0.4em\relax PMLR, 09--12 Jul 2020, pp. 2263--2285. [Online]. Available:
  \url{http://proceedings.mlr.press/v125/kaplan20a.html}
\BIBentrySTDinterwordspacing

\bibitem{MishraT15}
\BIBentryALTinterwordspacing
N.~Mishra and A.~Thakurta, ``({N}early) optimal differentially private
  stochastic multi-arm bandits,'' in \emph{Proceedings of the Thirty-First
  Conference on Uncertainty in Artificial Intelligence}.\hskip 1em plus 0.5em
  minus 0.4em\relax AUAI Press, 2015, pp. 592--601. [Online]. Available:
  \url{http://auai.org/uai2015/proceedings/papers/58.pdf}
\BIBentrySTDinterwordspacing

\bibitem{Auer2002}
\BIBentryALTinterwordspacing
P.~Auer, N.~Cesa-Bianchi, and P.~Fischer, ``Finite-time analysis of the
  multiarmed bandit problem,'' \emph{Machine Learning}, vol.~47, no. 2-3, pp.
  235--256, 2002. [Online]. Available:
  \url{https://doi.org/10.1023/A:1013689704352}
\BIBentrySTDinterwordspacing

\bibitem{ChanSS10}
\BIBentryALTinterwordspacing
T.-H.~H. Chan, E.~Shi, and D.~Song, ``Private and continual release of
  statistics,'' \emph{ACM Trans. Inf. Syst. Secur.}, vol.~14, no.~3, Nov. 2011.
  [Online]. Available: \url{https://doi.org/10.1145/2043621.2043626}
\BIBentrySTDinterwordspacing

\bibitem{DworkNPR10}
\BIBentryALTinterwordspacing
C.~Dwork, M.~Naor, T.~Pitassi, and G.~N. Rothblum, ``Differential privacy under
  continual observation,'' in \emph{Proceedings of the Forty-Second ACM
  Symposium on Theory of Computing}, ser. STOC '10.\hskip 1em plus 0.5em minus
  0.4em\relax New York, NY, USA: Association for Computing Machinery, 2010, p.
  715–724. [Online]. Available: \url{https://doi.org/10.1145/1806689.1806787}
\BIBentrySTDinterwordspacing

\bibitem{ShariffS18}
\BIBentryALTinterwordspacing
R.~Shariff and O.~Sheffet, ``Differentially private contextual linear
  bandits,'' in \emph{Advances in Neural Information Processing Systems},
  S.~Bengio, H.~Wallach, H.~Larochelle, K.~Grauman, N.~Cesa-Bianchi, and
  R.~Garnett, Eds., vol.~31.\hskip 1em plus 0.5em minus 0.4em\relax Curran
  Associates, Inc., 2018. [Online]. Available:
  \url{https://proceedings.neurips.cc/paper/2018/file/a1d7311f2a312426d710e1c617fcbc8c-Paper.pdf}
\BIBentrySTDinterwordspacing

\bibitem{SajedS19}
\BIBentryALTinterwordspacing
T.~Sajed and O.~Sheffet, ``An optimal private stochastic-{MAB} algorithm based
  on optimal private stopping rule,'' in \emph{Proceedings of the 36th
  International Conference on Machine Learning}, ser. Proceedings of Machine
  Learning Research, K.~Chaudhuri and R.~Salakhutdinov, Eds., vol.~97.\hskip
  1em plus 0.5em minus 0.4em\relax PMLR, 09--15 Jun 2019, pp. 5579--5588.
  [Online]. Available: \url{http://proceedings.mlr.press/v97/sajed19a.html}
\BIBentrySTDinterwordspacing

\bibitem{DworkRoth}
C.~Dwork and A.~Roth, ``The algorithmic foundations of differential privacy,''
  \emph{Foundations and Trends{\textregistered} in Theoretical Computer
  Science}, vol.~9, no. 3--4, pp. 211--407, Aug. 2014.

\bibitem{Dwork10:boosting}
\BIBentryALTinterwordspacing
C.~Dwork, G.~Rothblum, and S.~Vadhan, ``Boosting and differential privacy,'' in
  \emph{2010 51st Annual IEEE Symposium on Foundations of Computer Science
  (FOCS)}, Las Vegas, NV, October 2010, pp. 51--60. [Online]. Available:
  \url{https://doi.org/10.1109/FOCS.2010.12}
\BIBentrySTDinterwordspacing

\bibitem{KairouzOV:15composition}
\BIBentryALTinterwordspacing
P.~Kairouz, S.~Oh, and P.~Viswanath, ``The composition theorem for differential
  privacy,'' \emph{IEEE Transactions on Information Theory}, vol.~63, no.~6,
  June 2017. [Online]. Available:
  \url{https://doi.org/10.1109/TIT.2017.2685505}
\BIBentrySTDinterwordspacing

\end{thebibliography}

% argument is your BibTeX string definitions and bibliography database(s)
%\bibliography{IEEEabrv,../bib/paper}
%
% <OR> manually copy in the resultant .bbl file
% set second argument of \begin to the number of references
% (used to reserve space for the reference number labels box)

% biography section
% 
% If you have an EPS/PDF photo (graphicx package needed) extra braces are
% needed around the contents of the optional argument to biography to prevent
% the LaTeX parser from getting confused when it sees the complicated
% \includegraphics command within an optional argument. (You could create
% your own custom macro containing the \includegraphics command to make things
% simpler here.)
%\begin{IEEEbiography}[{\includegraphics[width=1in,height=1.25in,clip,keepaspectratio]{mshell}}]{Michael Shell}
% or if you just want to reserve a space for a photo:

%\begin{IEEEbiography}{Michael Shell}
%Biography text here.
%\end{IEEEbiography}

% if you will not have a photo at all:
%\begin{IEEEbiographynophoto}{John Doe}
%Biography text here.
%\end{IEEEbiographynophoto}

% insert where needed to balance the two columns on the last page with
% biographies
%\newpage

%\begin{IEEEbiographynophoto}{Jane Doe}
%Biography text here.
%\end{IEEEbiographynophoto}

% You can push biographies down or up by placing
% a \vfill before or after them. The appropriate
% use of \vfill depends on what kind of text is
% on the last page and whether or not the columns
% are being equalized.

%\vfill

% Can be used to pull up biographies so that the bottom of the last one
% is flush with the other column.
%\enlargethispage{-5in}

% that's all folks
\end{document}